\documentclass[11pt]{article}
\usepackage[top=30truemm,bottom=30truemm,left=25truemm,right=25truemm]{geometry}

\usepackage{graphicx}
\usepackage{xcolor}

\usepackage{my_macro}
\usepackage{caption}
\usepackage{subcaption}
\usepackage{float}
\usepackage{adjustbox}
\newfloat{problem}{t}{lop}
\floatname{problem}{Problem}

\newcommand{\calf}{\mathcal{F}}

\renewcommand{\S}{\mathcal{S}}
\renewcommand{\P}{\mathbb{P}}

\title{Combinatorial Allocation Bandits with Nonlinear Arm Utility}

\author{
  Yuki Shibukawa\footnote{
    The University of Tokyo and RIKEN; 
    \texttt{shibu-yu762@g.ecc.u-tokyo.ac.jp}.
  }
  \and
  Koichi Tanaka\footnote{
    Keio University; 
    \texttt{kouichi\_1207@keio.jp}.
  }
  \and
  Yuta Saito \footnote{
    Hanuku-kaso, Co., Ltd.; \texttt{saito@hanjuku-kaso.com}.
  }
  \and 
  Shinji Ito\footnote{
    The University of Tokyo and RIKEN; \texttt{shinji@mist.i.u-tokyo.ac.jp}.
  }
}

\begin{document}
\maketitle

% abstract here
\begin{abstract}
A matching platform is a system that matches participants of different types, such as companies and job-seekers. In such a platform, maximizing matches may concentrate assignments on popular participants, increasing dissatisfaction among others, and eventually causing churn, which reduces the platform\char39s profit opportunities. To address this issue, we propose a novel online learning problem, Combinatorial Allocation Bandits (CAB), which incorporates the notion of \textit{arm satisfaction}. In CAB, at each round, the learner observes feature vectors for $K$ arms and $N$ users, assigns users to arms, and observes feedback following a generalized linear model (GLM). Unlike prior work, the objective is to maximize arm satisfaction rather than the number of positive feedback. For CAB, we develop an upper confidence bound algorithm that uses an approximate optimization oracle and achieves an approximate regret upper bound, whose dependence on $d$, $T$, and $N$ matches the known lower bound for contextual combinatorial linear bandits up to logarithmic factors. We also analyze a Thompson sampling algorithm with a standard regret bound under an exact optimization oracle, and propose a cheaper one-pass variant retaining sublinear approximate regret under a self-concordance assumption. Experiments on synthetic data support the objective and show that CAB-UCB achieves higher cumulative satisfaction than baselines.
\end{abstract}

\section{Introduction}
\label{sec:introduction}
% \vspace{-3pt}

Online learning is a framework in which decisions are made sequentially based on observed information.
It has a wide range of potential applications, such as recommender systems, and has been studied extensively from a theoretical perspective \citep{auer2002finite,cesa2006prediction,li2010contextual}.

Although these studies make important theoretical contributions, they mainly focus on maximizing the number of positive feedback, such as matches or clicks, which sometimes may not reflect real-world business objectives.
For example, under unconstrained settings, a match-maximizing algorithm often yields an imbalanced selection of arms, leading to dissatisfaction among infrequently selected arms.
In job-matching platforms that recommend companies to visiting users, the revenue model typically relies on fees paid by the companies participating in the platform to hire qualified applicants.
Thus, the economic cost of company churn can outweigh raw match counts.
% \begin{comment}
Similar structures arise in dating apps and paper review processes.
Dating apps match users with one another.
When matches concentrate among a few popular users, many others receive few or no matches, which in turn reduces their incentives to remain active.
This decline in active participation is undesirable for the platform.
Similarly, paper review processes can be regarded as a match between authors and reviewers. 
If authors are not sufficiently satisfied with the quality of the reviews, they may submit to other journals, resulting in a loss of future submissions.
% \end{comment}
% \looseness=-1

% The key point in the above discussions is that companies whose satisfaction falls below a certain level are expected to have a higher probability of leaving the platform.
% Here, incorporating principles of economics and the cost of interviewing many applicants\footnote{Similar costs arise in dating apps through going on dates and in paper review processes through responding to reviews.}, we model each arm\char39s satisfaction as a concave function of the number of matches it secures.
% Under this model, it is business-aligned to minimize the number of arms that receive few or no matches and are at risk of churn, rather than to allow a small subset of arms to monopolize many matches.
% Here, satisfaction represents an arm-side evaluation of the platform determined by the quality of its allocated users.
% We model it as a concave function to capture diminishing marginal utility and to mitigate concentration without imposing explicit fairness constraints.

The key point in the above discussions is that companies whose satisfaction falls below a certain level are expected to have a higher probability of leaving the platform.
Accordingly, the platform objective should not be to maximize raw match counts alone, but to avoid allocations in which matches are concentrated among a small subset of companies.
Motivated by economic principles and by the cost of interviewing many applicants\footnote{Similar costs arise in dating apps through going on dates and in paper review processes through responding to reviews.}, we model each arm's satisfaction as a concave function of its expected number of matches.
Here, satisfaction represents an arm-side evaluation of the platform determined by the quality of its allocated users.
The concavity captures diminishing marginal utility and mitigates concentration without imposing explicit fairness constraints.

\begin{figure}
    \captionsetup{skip=2.4pt}
    \centering
    % \vspace{-12pt} 
    \includegraphics[width=0.8\linewidth]{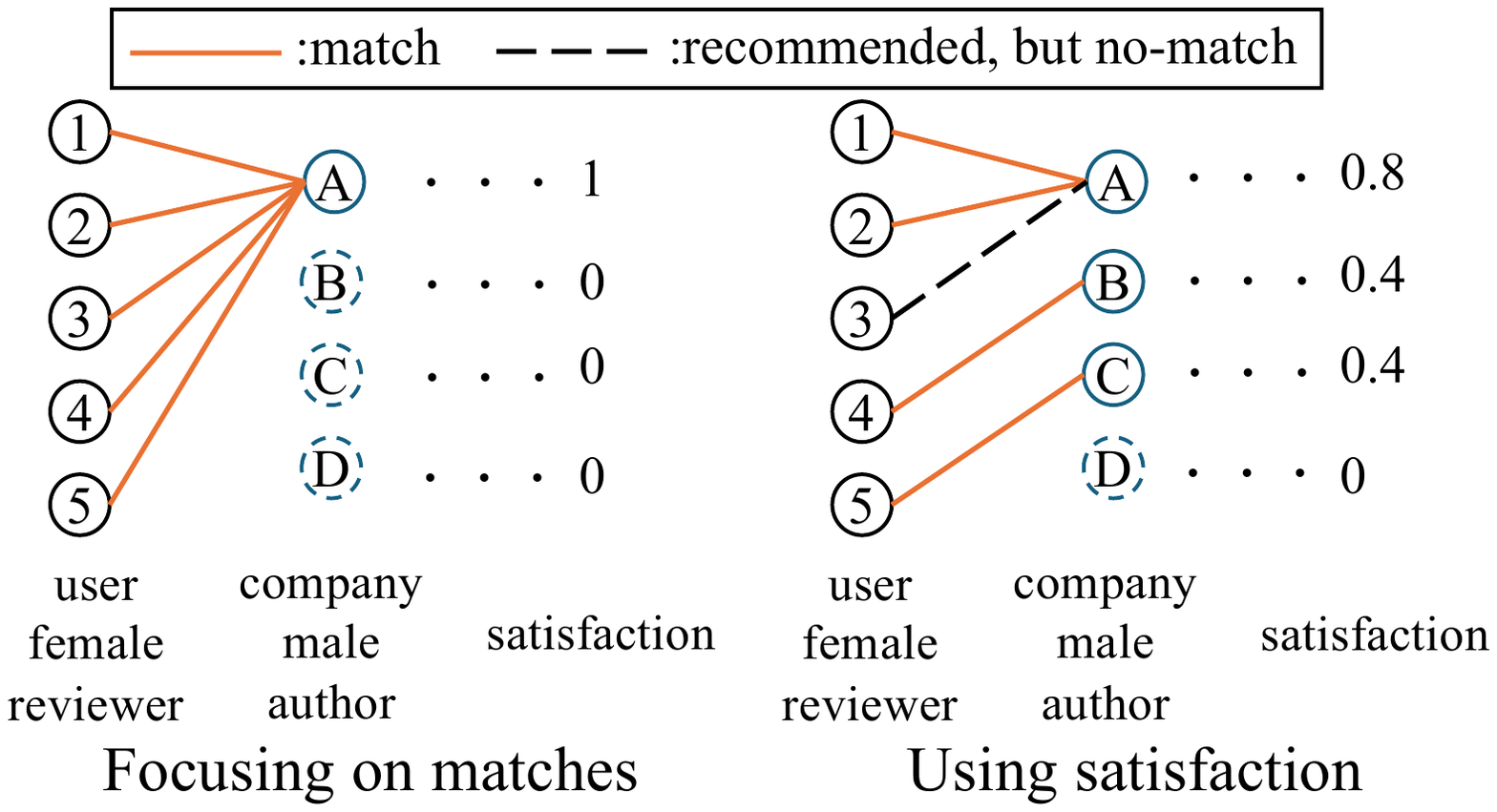}
    \caption{This figure schematically compares satisfaction-based outcomes with match-maximization outcomes. Firm popularity is assumed to decrease from Firm A to Firm D.}
    \label{fig:match_image}
    % \vspace{-10pt} .
\end{figure}

As illustrated in \cref{fig:match_image}, when one arm is substantially easier to match than the others, a match-maximizing policy concentrates all recommendations on that arm.
Such concentration is undesirable because arm satisfaction typically exhibits diminishing returns and therefore does not scale linearly with the number of matches.
Moreover, even for arm A, which receives many assignments, real-world constraints such as budget limits and capacity restrictions, as well as the economic principle of diminishing marginal utility, imply that the satisfaction obtained does not necessarily grow proportionally with the reward.

One way to model such limitations is to impose explicit resource constraints on arm usage.
This view is closely related to bandits with knapsacks (BwK), which incorporate resource constraints into online learning \citep{badanidiyuru2013bwk,agrawal2016linear_bwk,sankararaman2018combinatorial_bwk}. 
BwK is natural when the goal is to limit arm usage via explicit budgets or capacities, but BwK does not directly capture aspects of our interest, such as arm satisfaction or the diminishing property of the utility function.

Another approach imposes fairness constraints on each arm\char39s selection frequency \citep{joseph2016_fairness,li2019combinatorial_fairness,xu2020combinatorial_fairness}. 
These approaches focus on fairness in exposure or selection, rather than directly modeling arm-side utility. 
Consequently, they do not necessarily capture the satisfaction objective, which depends not only on how often an arm is selected but also on the quality of the assigned users and the arms' popularity.
\looseness=-1

% Sequential decision making requires balancing exploration and exploitation. Standard approaches include upper confidence bound (UCB) methods \citep{annals_of_ststs_Lai_1987_MAB,Auer_2002_bandit,siam_qin2014contextual,icml2017glmcontextbandits,lattimore2020bandit} and Thompson sampling (TS) methods \citep{thompson1933likelihood,agrawal2013_ts_cb,ICDM2019_Takemura_ArmWise_Randomization,aistats2017_abeille_ts}.

\vspace{-1.5pt}
\subsection{Our contributions}
\vspace{-1.5pt}
Our main contributions are primarily theoretical and are twofold.
First, we formulate Combinatorial Allocation Bandits (CAB), a contextual combinatorial semi-bandit problem with GLM feedback and nonlinear arm-side utility.
Beyond the linear case, the available combinatorial GLM result is limited to a logistic model with binary feedback \citep{liu_2025_combinatorial_log_bandits}.
In contrast, CAB is formulated for GLM feedback with a non-negative mean and can handle non-binary feedback distributions.
On the objective side, CAB uses a concave arm-side utility that aggregates the expected feedback of the users assigned to the same arm, rather than a linear or user-wise separable objective.
This structure captures diminishing returns and makes the per-round optimization problem NP-hard (\cref{thm:nphard}).

Second, we develop two learning algorithms for CAB: CAB-UCB (\cref{subsec:ucb}), based on the upper confidence bound (UCB) principle, and CAB-TS (\cref{subsec:ts}), based on Thompson sampling (TS), under different offline optimization assumptions.
For CAB-UCB, we combine GLM confidence bounds with an approximate optimization oracle and prove an approximate regret bound of $\tilde O(d\sqrt{NT}+dN)$ (\cref{thm:main_regret_ucb}), whose dependence on $d$, $T$, and $N$ matches the known lower bound for contextual combinatorial linear bandits up to logarithmic factors.
We also show how to implement the CAB-UCB oracle efficiently via a reduction to the submodular welfare problem (\cref{subsec:implementation}).
For CAB-TS, we assume access to an exact optimization oracle, introduce a user-wise i.i.d.\ sampling scheme in which the sampled perturbations enter the objective function linearly as user-wise additive perturbations, and prove a standard regret bound of $\tilde O(dN\sqrt{T}+dN^{3/2})$ (\cref{thm:bound_TS}).
Notably, the regret analyses of CAB-UCB and CAB-TS do not require a self-concordance assumption on the link function, which is used in recent analyses \citep{liu_2025_combinatorial_log_bandits,zhang2025generalizedlinearbanditsoptimal}.

As a computational extension, we also present a one-pass variant (\cref{subsec:one_pass}).
At each round $t$, CAB-UCB and CAB-TS solve a regularized MLE, incurring $O(t)$ cost.
To reduce this computational cost, we replace the MLE update with a one-pass parameter update based on online mirror descent (OMD), following \cite{zhang2025generalizedlinearbanditsoptimal}.
Under an additional self-concordance assumption on the link function, the resulting algorithm avoids solving a regularized MLE while retaining sublinear approximate regret.

Finally, we conduct experiments on synthetic data that support the proposed objective and show that CAB-UCB achieves higher cumulative satisfaction than match-oriented and fairness-oriented baselines (\cref{sec:experiments}).
% \looseness=-1

For space constraints, we provide a discussion of the related work in \cref{app:related_work}.

\vspace{-2pt}
\subsection{Technical challenges}
\vspace{-2pt}
A technical difficulty specific to CAB is that the per-round objective is no longer separable across users. 
Unlike standard contextual combinatorial semi-bandits with linear or arm-wise additive rewards, the benefit of assigning a user to an arm depends on whether other users are assigned to the same arm, because the arm-side utility exhibits diminishing returns.
As a result, analyses for separable reward objectives do not directly apply, and even the offline allocation problem becomes NP-hard.
This structural difference makes the offline allocation step nontrivial and requires oracle assumptions tailored to each algorithm.

This non-separability further complicates uncertainty assessment in the GLM setting.
For CAB-UCB, our key observation is that, although the objective itself is non-separable, its estimation error can still be upper-bounded by a bonus that decomposes as a sum of user-wise terms defined in \eqref{eq:definition_g_t}.
This form is essential for retaining the submodular welfare structure, which allows CAB-UCB to be analyzed using an implementable approximate optimization oracle.
Despite its simple form, the bonus term yields a regret bound with the desired dependence on the number of rounds and users.

CAB-TS is more delicate for several reasons. 
First, a common Gaussian perturbation is insufficient in CAB.
% Since the objective aggregates user-level rewards through a concave arm utility, the effect of such a perturbation can cancel after aggregation, and we cannot derive a suitable bound on the variance of the aggregate perturbation (see \cref{rem:reason_for_iid} for details).
With a common perturbation, user-level feature vectors are summed before their size is measured, so their directions can cancel, and we cannot derive a suitable bound on the variance of the aggregate perturbation (see \cref{rem:reason_for_iid}).
CAB-TS addresses this issue by using user-wise independent Gaussian perturbations, which preserve enough variance in the aggregate perturbation of the optimal allocation to support the regret analysis.
Second, directly perturbing the nonlinear utility is difficult because the perturbation enters a coupled concave objective.
We therefore add exploration through a separate user-wise linear perturbation.
We choose the covariance surrogate by placing the regularization term inside the Hessian-weighted sum, keeping it comparable to the GLM information matrix.
This design avoids perturbing the nonlinear utility directly (see \cref{rem:worse_point} for details).
% \looseness=-1

\vspace{-2.5pt}
\section{Preliminaries}
\vspace{-2.5pt}
In this section, we describe the setting of the GLM and the submodular welfare problem used in the implementation of our algorithms.

\vspace{-2pt}
\paragraph{Notations}
For $n \in \N$, let $\brk{n} = \set{1, \dots, n}$.
For $\bm{x} \in \R^d$, denote the transpose by $\bm{x}^\top$.
For a positive definite matrix $\bm{A}$, define $\nrm{\bm{x}}_{\bm{A}} = \sqrt{\bm{x}^\top \bm{A} \bm{x}}$, and let $\lambda_{\min}(\bm{A})$ and $\lambda_{\max}(\bm{A})$ denote its minimum and maximum eigenvalues, respectively.
Let $\bm{0}$  be the all-zero vector.
A set function $f\colon 2^X\to\mathbb{R}$ is called \textit{monotone} if $f(S)\leq f(T)$ whenever $S\subseteq T\subseteq X$.
It is called \textit{submodular} if $f(S\cup T)+f(S\cap T)\leq f(S)+f(T)$ for any $S,T\subseteq X$.
We denote the Lipschitz constant of the function $g$ by $L_g$.
Let $\Pi=\set{\pi\colon[N]\to[K]}$.

\vspace{-1.5pt}
\subsection{Generalized linear models}
\vspace{-1.5pt}
\label{subsec:glm}
% We introduce the likelihood theory of GLM.
Within the framework of GLM \citep{McCullagh_GLM}, the conditional distribution of the response variable $Y$ given the explanatory variable $X$ belongs to the exponential family.
Formally, the probability density function parameterized by $\bm{\theta}$ is given by
\vspace{-1pt}
\begin{align}
    \label{eq:probability_feedback}
    \mathbb{P}(Y|\bm{\theta}; \bm{X}) \propto \exp\crl{Y\bm{X}^\top\bm{\theta}-m(\bm{X}^\top\bm{\theta})},
\end{align}
% \vspace{-2pt}
where we use the unit-dispersion canonical form, since any fixed known dispersion parameter can be absorbed into the sub-Gaussian parameter and tuning constants without changing the regret rates.
Under this normalization, $m\colon\R\to\R$ is a known twice differentiable function and satisfies $\dot{m}(\bm{X}^\top\bm{\theta})=\E\brk{Y|\bm{X}}$ and $\ddot{m}(\bm{X}^\top\bm{\theta})=\mathrm{Var}(Y|\bm{X})$.
In what follows, we set $\mu(\bm{X}^\top\bm{\theta})=\dot{m}(\bm{X}^\top\bm{\theta})$.
The exponential family comprises distributions such as the Gaussian and Bernoulli.

In this setting, given independent samples $Y_1,\dots,Y_n$ conditional on $\bm{X}_1,\dots,\bm{X}_n$, we denote the dataset by $\mathcal{D}=\set{\prn{\bm{X}_i,Y_i}}_{i=1}^n$.
Then, the negative log-likelihood function is given by
$\mathcal{L}(\mathcal{D};\bm{\theta})  = \sum_{i=1}^n \brk{m\prn{\bm{X}_i^\top\bm{\theta}} - Y_i\bm{X}_i^\top\bm{\theta}}+ \text{constant}$.
Then, since $m$ is differentiable, the minimum likelihood estimator (MLE) is given by the minimizer of $\sum_{i=1}^n\brk{m\prn{\bm{X}_i^\top\bm{\theta}} - Y_i\bm{X}_i^\top\bm{\theta}}$ (see \cite{NIPS_2010_Filippi_GLM,icml2017glmcontextbandits} for details).

However, in our problem, using the MLE requires an initial exploration.
To avoid this issue, we employ a regularized MLE with ridge regularization.
Here, the regularized negative log-likelihood corresponding to the regularized MLE takes the form
$\tilde{\mathcal{L}}(\mathcal{D};\bm{\theta},\lambda) = \mathcal{L}(\mathcal{D};\bm{\theta}) + \frac{\lambda}{2}\nrm{\bm{\theta}}_2^2$.
% \looseness=-1

\vspace{-2pt}
\subsection{Submodular welfare problem}
\vspace{-2pt}
\label{subsec:submod}
The submodular welfare problem was first studied by \citet{ACM2001_Lehmann_Combinatorial_auctions} and is defined as follows:
\textit{the submodular welfare problem, given $m$ items and $n$ players with submodular utility functions $w_i\colon 2^{[m]}\to\R_{\geq0}$, is the problem of maximizing $\sum_{i=1}^n w_i(S_i)$, where $S_1,\dots,S_n$ are disjoint subsets of the item set.}
A limitation for the submodular welfare problem is that no approximation better than $1-1/e$ can be achieved unless $P=NP$ \citep{Algorithmica2008_submod}.
There are two commonly considered oracle models, the value oracle model and the demand oracle model.
The former model returns the value of utility $w_i(S)$, and the latter model returns the set $S$ which maximizes $w_i(S)-\sum_{j\in S}p_j$ given an assignment of prices to items $p\colon[m]\to\R$.
We call an algorithm $\epsilon$-approximate algorithm if the value obtained by the algorithm (we denote it $\text{ALG}$ ) satisfies the inequality $\epsilon \text{OPT}_{sub} \leq \text{ALG}$, where $\text{OPT}_{sub}$ denotes the optimal value.
Under the value oracle model, there exists an approximate algorithm that achieves the following approximation: 
\vspace{-1pt}
\begin{lemma}[{\citealp[Section 5]{STOC2008submodular_welfare}}]
    \label{lem:submod_e}
    The submodular welfare problem admits a $1 - 1/e$-approximation in the value-oracle model when the utility functions are monotone submodular.
\end{lemma}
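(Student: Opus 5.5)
The $1-1/e$ guarantee in \cref{lem:submod_e} is the result of \citet{STOC2008submodular_welfare}. I would reconstruct it by the continuous greedy method followed by a lossless rounding. The first step is to relax the problem. For each player $i$, introduce variables $y_{ij}\in[0,1]$, interpreted as the fraction of item $j$ given to player $i$, subject to $\sum_i y_{ij}\le 1$ for every item $j$. Let $F_i$ be the multilinear extension of $w_i$,
\[
F_i(\bm{y}_i)=\E\brk{w_i(R_i)},
\]
where $R_i$ contains each item $j$ independently with probability $y_{ij}$. The objective is then $F(\bm{y})=\sum_i F_i(\bm{y}_i)$, maximized over the partition polytope $P$. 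This polytope is the base-polytope side of a partition matroid on the ground set $[n]\times[m]$, in which each item $j$ forms one block. Because each $w_i$ is monotone and submodular, $F$ is monotone and concave along every nonnegative direction.

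The second step is continuous greedy. Start at $\bm{y}(0)=\bm{0}$ and move along $\dot{\bm{y}}=\bm{v}(t)$, where
\[
\bm{v}(t)\in\arg\max_{\bm{v}\in P}\langle \bm{v},\nabla F(\bm{y}(t))\rangle,
\]
for times $t\in[0,1]$. Let $O=(O_1,\dots,O_n)$ be an optimal allocation and $\bm{1}_O\in P$ its indicator vector. Monotonicity and submodularity give
\[
\langle \bm{1}_O,\nabla F(\bm{y})\rangle \ge F(\bm{y}\vee\bm{1}_O)-F(\bm{y})\ge \mathrm{OPT}-F(\bm{y}).
\]
Hence $\frac{d}{dt}F(\bm{y}(t))\ge \mathrm{OPT}-F(\bm{y}(t))$, and integrating yields $F(\bm{y}(1))\ge(1-1/e)\,\mathrm{OPT}$.

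Making this polynomial in the value-oracle model is the main technical obstacle. Time is discretized into steps of size $\delta=1/\mathrm{poly}(n,m)$, and each gradient coordinate is estimated by random sampling. For the partition polytope the linear maximization is simple: each item $j$ goes to the player with the largest estimated marginal value. Chernoff bounds then control the sampling error. The discretization and estimation together lose only $O(1/\mathrm{poly})\cdot \mathrm{OPT}$, and this loss can be absorbed. One option is to first guarantee $\max_{i,j}w_i(\{j\})\ge \mathrm{OPT}/m$. The other is the standard argument in the cited work, which removes the error term entirely to reach exactly $1-1/e$ in expectation.

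The last step is rounding. Give each item $j$ independently to player $i$ with probability $y_{ij}$, and leave it unassigned with the remaining probability. Player $i$ then receives a random set distributed exactly as $R_i$, so the expected welfare is $\sum_iF_i(\bm{y}_i)=F(\bm{y})$ and nothing is lost. The sampled sets are disjoint by construction. The rounding is therefore straightforward, and the difficulty lies in the gradient-estimation and discretization accounting of the previous step.
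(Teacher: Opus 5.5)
Your reconstruction follows the cited argument: continuous greedy on the multilinear relaxation over the partition-matroid polytope, followed by independent per-item rounding. The rounding loses nothing here because each player's bundle is distributed exactly as the product set $R_i$. The paper gives no proof of its own beyond the citation to Section~5 of that work, so there is no competing route to compare.

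The one step that does not work as written is your ``first option'' for absorbing the $1/\mathrm{poly}$ loss.

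The inequality $\max_{i,j}w_i(\{j\})\ge \mathrm{OPT}/m$ needs no guaranteeing. It always holds, because nonnegative submodular functions are subadditive. By itself it also removes nothing: your sketch yields only $(1-1/e-\epsilon)\,\mathrm{OPT}$ with $\epsilon=1/\mathrm{poly}(n,m)$. Making the Chernoff error relative to $\mathrm{OPT}$ actually uses the opposite direction, $w_i(R\cup\{j\})-w_i(R)\le w_i(\{j\})\le\mathrm{OPT}$.

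Getting exactly $1-1/e$ requires a compensation device. One option is partial enumeration:
\begin{itemize}
\item Guess the pair $(i^*,j^*)$ of an optimal allocation with the largest singleton value, trying all $nm$ guesses and keeping the best output.
\item Commit that pair, and discard all pairs whose singleton value exceeds $w_{i^*}(\{j^*\})$. The remaining pairs of the optimal allocation survive this step.
\item Run your algorithm on the residual instance.
\end{itemize}
All residual singleton values are then at most $w_{i^*}(\{j^*\})$. Hence the residual optimum is at most $m\,w_{i^*}(\{j^*\})$, and the residual loss is at most $\epsilon m\,w_{i^*}(\{j^*\})$. Meanwhile, committing the pair earns the full $w_{i^*}(\{j^*\})$ rather than $(1-1/e)\,w_{i^*}(\{j^*\})$. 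The total is therefore at least $(1-1/e)\,\mathrm{OPT}+(1/e-\epsilon m)\,w_{i^*}(\{j^*\})$, which is at least $(1-1/e)\,\mathrm{OPT}$ once $\epsilon\le 1/(em)$.

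Deferring this clean-up to the cited work, as your second option does, is no weaker than what the paper itself does.

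A minor wording point: the constraint $\sum_i y_{ij}\le 1$ describes the independence polytope of the partition matroid, not its base polytope.
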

\vspace{-1pt}
% In addition, even without the monotonicity assumption, \citet{lee_2009_submodular} implies the existence of a $1/4$-approximate solution, since the submodular welfare problem is maximizing a submodular function under matroid constraints.

\vspace{-3pt}
\section{Combinatorial allocation bandits}
\vspace{-3pt}
\label{sec:problem}
% \vspace{-2pt}
This section introduces a setting of Combinatorial Allocation Bandits (CAB) and our intention for constructing the problem.
% \vspace{-2pt}
\vspace{-2pt}
\subsection{Problem setting}
\vspace{-2pt}
{
\begin{problem}[t]
\caption{Combinatorial Allocation Bandits (CAB)}
\vspace{-2.8pt}
\label{alg:setting}
\begin{algorithmic}[1]
    \For {$t=1,\dots,T$}
        \State 
        \textbf{Context:}
        Observe $\set{{\bm{\phi}}_t(i,a)}_{a\in\brk{K}}$ for each $i\in[N]$.
        \State 
        \textbf{Action:}
        Choose allocation $\pi_t\in\Pi$.
        \State 
        \textbf{Feedback:}
        Observe $y_t(i)\sim \mathbb{P}(\cdot | {\bm{\theta}}^*; {\bm{\phi}}_t (i, \pi_t(i)))$ for each $i\in\brk{N}$.
        \State 
        \textbf{Reward (unobserved):}
        Receive
        $f_t(\pi_t; {\bm{\theta}}^\ast) = \sum_{a\in[K]} r\prn{\sum_{i\in\pi_t^{-1}(a)}\mu({\bm{\phi}}_t(i,a)^\top{\bm{\theta}}^\ast)}$.
    \EndFor
\end{algorithmic}
\vspace{-2.8pt}
\end{problem}}
We introduce CAB (Problem~\ref{alg:setting}), a novel online learning problem.
At round $t$, for each user $i \in [N]$, the learner obtains a context set $\set{{\bm{\phi}}_t(i,a)}_{a \in [K]}$ with $\nrm{{\bm{\phi}}_t(i,a)} \leq 1$, where the contexts are chosen by an oblivious adversary before the learning process begins.
Note that $i$ is not associated with a particular user, but instead denotes the index reflecting the order of observation.
Given the observations, the learner determines the allocation $\pi_t\in\Pi$ at round $t$.
Subsequently, based on $\pi_t$, the learner observes the feedback $y_t(i)=y_t(i,\pi_t(i))$ for each $i\in[N]$.
Let $\mathcal{F}_t=\sigma\prn{y_1(1),\dots,y_1(N),\dots,y_t(1),\dots,y_t(N),\pi_1, \dots,\pi_t }$, where $\sigma(\mathcal{A})$ is the smallest $\sigma$-algebra containing $\mathcal{A}$.
Then, $\prn{\mathcal{F}_t}_t$ is a filtration.
Denote the conditional probability and expectation given the history of observations by 
$\P_t(\cdot)=\P(\cdot\mid\mathcal{F}_{t-1})$ and $\E_t\brk{\cdot}=\E\brk{\cdot\mid\mathcal{F}_{t-1}}.$

We consider that the feedback $y_t(i)$ observed by the learner follows a GLM with an unknown parameter ${\bm{\theta}}^\ast \in \R^d$ (\cref{subsec:glm}).
We assume that $\set{y_t(i)}_{i \in [N]}$ are conditionally independent given $\set*{{\bm{\phi}}_t(i,\pi_t(i))}_{i \in [N]}$, and that $\nrm{{\bm{\theta}}^\ast}_2\leq D$ for some constant $D>0$.
Let $\Theta=\set*{\bm{\theta}\in\R^d:\nrm{\bm{\theta}}_2\leq D}$.
Accordingly, the probability density function (or the probability mass function) of $y_t(i)$ can be expressed using \eqref{eq:probability_feedback} as $\mathbb{P}(y_t(i)|{\bm{\theta}}^\ast;{\bm{\phi}}_t(i,\pi_t(i)))$, whose mean is given by $\mu({\bm{\phi}}_t(i,\pi_t(i))^\top {\bm{\theta}}^\ast)$.
In the target applications, the observed feedback $y_t(i)$ is used to assess outcomes such as matching success or other positive feedback.
Accordingly, its conditional mean $\mu({\bm{\phi}}_t(i,a)^\top\bm{\theta})$ represents a non-negative latent quantity.
In this problem, we assume that the deviation between the observation and its mean, $y_t(i) - \mu({\bm{\phi}}_t(i,\pi_t(i))^\top {\bm{\theta}}^\ast)$, is sub-Gaussian with parameter $\sigma > 0$.
That is, for any $t \in [T]$, $i \in [N]$, and $\xi\in\R$ it holds that $\E_t\brk{e^{\xi\prn{y_t(i) - \mu({\bm{\phi}}_t(i,\pi_t(i))^\top{\bm{\theta}}^\ast)}}}\leq e^{\xi^2\sigma^2/2}$.
Furthermore, motivated by prior GLM bandit analyses \citep{icml2017glmcontextbandits,ICML_2021_Jun_linear_bandit,liu_2025_combinatorial_log_bandits}, we impose the following assumption on $\mu$.
\vspace{-0.4pt}
\begin{assumption}\label{asp:kappa}
    Let $\mathcal{B}=\set{(\bm{x},\bm{\theta}):\nrm{\bm{x}}_2\leq1,\ \nrm{\bm{\theta}-\bm{\theta}^\ast}_2\leq\max\set{D+1,2D}}$.
    We assume that $\mu$ is first-order differentiable and Lipschitz continuous, that $\mu(\bm{x}^\top\bm{\theta})\geq0$ for all $(\bm{x},\bm{\theta})\in\mathcal{B}$, and that there exists a known constant $\kappa_\mu>0$ such that $\kappa_\mu\leq\inf_{(\bm{x},\bm{\theta})\in\mathcal{B}}\dot{\mu}(\bm{x}^\top\bm{\theta})$.
\end{assumption}
\vspace{-1.5pt}
% At the end of each round, an unknown satisfaction is generated for each arm according to the allocation $\pi_t$.
At the end of each round, the allocation $\pi_t$ induces arm-side satisfaction for each arm.
Specifically, we consider the satisfaction $r\prn{\sum_{i\in\pi_t^{-1}(a)}\mu({\bm{\phi}}_t(i,a)^\top{\bm{\theta}}^\ast)}$, where $r\colon\R_{\geq0}\to\R_{\geq0}$ is a known concave and monotone increasing function bounded by $M$.
$r$ is a platform-specified model of arm-side satisfaction, rather than an additional unknown function to be learned online.
The uncertainty lies in the unknown parameter $\bm{\theta}^\ast$.
We assume that $r$ is Lipschitz continuous.
For convenience, we define
\vspace{-0.6pt}
\begin{equation}
\label{eq:definition_f_t}
f_t(\pi;{\bm{\theta}}) = \sum_{a\in\brk{K}} r\prn[\big]{\sum_{i\in\pi^{-1}(a)}\mu({\bm{\phi}}_t(i,a)^\top{\bm{\theta}})}.
\end{equation}
% In CAB, the goal of the learner is to maximize the cumulative satisfaction $\sum_{t=1}^{T} f_t(\pi_t;{\bm{\theta}}^\ast)$.
% To evaluate the learner\char39s performance, it is natural to use regret, which is defined as the cumulative difference between the achieved value and the optimal value at each round.
% We define the standard regret as
%     $\mathcal{R}_T
%     =
%     \sum_{t=1}^{T}
%     \prn*{
%         f_t(\pi_t^\ast;\bm{\theta}^\ast)
%         -
%         f_t(\pi_t;\bm{\theta}^\ast)
%     },$
% where $\pi_t^\ast=\argmax_{\pi\in\Pi} f_t(\pi;\bm{\theta}^\ast)$ is the optimal allocation at round $t$.
In CAB, the goal of the learner is to maximize the cumulative satisfaction $\sum_{t=1}^{T} f_t(\pi_t;{\bm{\theta}}^\ast)$.
We measure its performance by the standard regret, the cumulative gap to the per-round optimum,
$\mathcal{R}_T
=
\sum_{t=1}^{T}
\prn*{
    f_t(\pi_t^\ast;\bm{\theta}^\ast)
    -
    f_t(\pi_t;\bm{\theta}^\ast)
},$
where $\pi_t^\ast=\argmax_{\pi\in\Pi} f_t(\pi;\bm{\theta}^\ast)$ is the optimal allocation at round $t$.
However, in general, maximizing $f_t$ is NP-hard even when ${\bm{\theta}}^\ast$ is known (\cref{thm:nphard}). 
Thus, some algorithms below use an $\alpha$-approximate optimization oracle ($0<\alpha\leq 1$).
For such algorithms, we also use the $\alpha$-approximate regret
% \begin{equation}
    $\mathcal{R}_T^\alpha
    =
    \sum_{t=1}^{T}
    \prn*{
        \alpha f_t(\pi_t^\ast;\bm{\theta}^\ast)
        -
        f_t(\pi_t;\bm{\theta}^\ast)
    }.$
% \end{equation}
% =========================

Next, we explain the modeling intuition behind the above formulation. 
In many applications, the observed feedback $y_t(i)$ is a realization of the underlying match quality between user $i$ and arm $a$. 
For example, in a job-matching platform, $y_t(i)\in\set{0,1}$ indicates whether a match occurs, and $\mu(\bm{\phi}_t(i,a)^\top\bm{\theta}^\ast)$ can be interpreted as the match probability.

The observed feedback $y_t(i)$ depends not only on the match quality between user $i$ and arm $a$, but also on random user-side factors after the assignment, such as the user\char39s acceptance decision or availability constraints.
For this reason, we do not model an arm\char39s satisfaction using the realized outcomes $y_t(i)$.
Instead, we use the latent match probability $\mu(\bm{\phi}_t(i,a)^\top\bm{\theta}^\ast)$ between user $i$ and arm $a$ as the basic quantity for evaluating the users assigned to each arm.
Thus, for each arm $a$, we use the total expected number of matches
$
\sum_{i\in\pi_t^{-1}(a)} \mu\prn{\bm{\phi}_t(i,a)^\top\bm{\theta}^\ast},
$
as the input to the satisfaction function $r$. 
The arm-side satisfaction in round $t$ is therefore modeled as
$
r\prn{\sum_{i\in\pi_t^{-1}(a)} \mu\prn{\bm{\phi}_t(i,a)^\top\bm{\theta}^\ast}}.
$

We consider a concave and nondecreasing satisfaction function $r$.
Monotonicity means that a greater expected number of successful matches should not reduce the arm\char39s utility.
Concavity captures diminishing marginal value under constraints such as interview capacity, screening costs, or budget constraints \citep{Pratt_economics,Mas-Colell_economics}.
Thus, the objective discourages excessive concentration on a small number of arms and can induce more balanced allocations without explicit fairness constraints.
% \looseness=-1

% \vspace{-2pt}
\vspace{-3.45pt}
\section{Algorithm and theoretical results}
\label{sec:alg_and_reget}
\vspace{-3.45pt}
% \vspace{-2pt}
This section presents UCB and TS algorithms for CAB and their regret analyses.

\vspace{-3pt}
\subsection{Upper confidence bound algorithm}
\label{subsec:ucb}
\vspace{-3pt}

% \begin{wrapfigure}[11]{r}{0.54\textwidth}
%     \vspace{-35pt}
%     \begin{minipage}{0.54\textwidth}
%         \begin{algorithm}[H]
%         \caption{CAB-UCB}
% \label{alg:ucb}
%             {\begin{algorithmic}[1]
%     \Require {The total rounds $T$, the number of users $N$, and tuning parameter $\lambda_0$ and $c_1$.}
%     \State {$\mathcal{D}_1\gets\varnothing$ and ${\bm{V}_1}\gets\lambda_0 \bm{I}$.}
%     \For {$t=1,\dots,T$}
%         % \State {$V_{t} \gets \lambda_0 {\bm{I}} + \sum_{s =1}^{t-1}\sum_{i=1}^N {\bm{x}_s} (i){\bm{x}_s} (i)^\top$.}
%         \State {$\bar{\bm{\theta}}_t\gets\argmin_{\bm{\theta}\in\R^d}\tilde{\mathcal{L}}(\mathcal{D}_t;\bm{\theta},\kappa_\mu\lambda_0)$.}
%         \State {Choose $\pi_t=\argmax_{\pi\in\Pi}\prn{f_t(\pi;\bar{\bm{\theta}}_t)+g_t(\pi)}.$
%         % , where $f_t$ and $g_t$ are defined in \eqref{eq:definition_f_t} and \eqref{eq:definition_g_t}
%         }
%         \State {Observe $y_t(i)$ for any $i\in\brk{N}$.}
%         \State {${\bm{x}_t}(i)\gets {\bm{\phi}}_t(i,\pi_t(i))$ for any $i\in[N]$ and $\mathcal{D}_{t+1}\gets \mathcal{D}_{t}\cup\set{\prn{\bm{x}_t(i),y_t(i)}}_{i\in[N]}$.}
%         \State {${\bm{V}_{t+1}} \gets \lambda_0 {\bm{I}} + \sum_{s =1}^{t}\sum_{i=1}^N \bm{x}_s (i)\bm{x}_s (i)^\top$.}
%     \EndFor
% \end{algorithmic}}
%         \end{algorithm}
%     \end{minipage}
%     % \vspace{-3pt}
% \end{wrapfigure}

% \begin{wrapfigure}[11]{r}{0.54\textwidth}
%     \vspace{-35pt}
%     \begin{minipage}{0.54\textwidth}
\begin{algorithm}[t]
    \caption{CAB-UCB}
    \vspace{-2.8pt}
        % \small
\label{alg:ucb}
            {\begin{algorithmic}[1]
    \Require {The total rounds $T$, the number of users $N$, tuning parameter $\lambda_0$ and $c_1$, and access to an $\alpha$-approximate optimization oracle.}
    \State {$\mathcal{D}_1\gets\varnothing$ and ${\bm{V}_1}\gets\lambda_0 \bm{I}$.}
    \For {$t=1,\dots,T$}
        \State {$\bar{\bm{\theta}}_t\gets\argmin_{\bm{\theta}\in\R^d}\tilde{\mathcal{L}}(\mathcal{D}_t;\bm{\theta},\kappa_\mu\lambda_0)$.}
        \State {Call an $\alpha$-approximate optimization oracle for $f_t(\pi;\bar{\bm{\theta}}_t)+g_t(\pi)$ and let $\pi_t$ denote its output.
        % , where $f_t$ and $g_t$ are defined in \eqref{eq:definition_f_t} and \eqref{eq:definition_g_t}
        }
        \State {Observe $y_t(i)$ for any $i\in\brk{N}$.}
        \State {${\bm{x}_t}(i)\gets {\bm{\phi}}_t(i,\pi_t(i))$ for any $i\in[N]$ and $\mathcal{D}_{t+1}\gets \mathcal{D}_{t}\cup\set{\prn{\bm{x}_t(i),y_t(i)}}_{i\in[N]}$.}
        \State {${\bm{V}_{t+1}} \gets \lambda_0 {\bm{I}} + \sum_{s =1}^{t}\sum_{i=1}^N \bm{x}_s (i)\bm{x}_s (i)^\top$.}
    \EndFor
\end{algorithmic}}
\vspace{-2.8pt}
        \end{algorithm}
%     \end{minipage}
%     % \vspace{-3pt}
% \end{wrapfigure}
Our proposed method, CAB-UCB, follows the UCB principle, a standard approach in bandit algorithm design
\citep{annals_of_ststs_Lai_1987_MAB,Auer_2002_bandit,siam_qin2014contextual,icml2017glmcontextbandits}.
% \looseness=-1
CAB-UCB has two parameters, $\lambda_0>0$ and $c_1>0$.
The parameter $\lambda_0$ determines the regularization strength for MLE.
In addition, $\lambda_0$ plays the role of ensuring that $\lambda_{\min}\prn{{\bm{V}_t}}$ is strictly positive.
The parameter $c_1$ controls exploration, and a larger value results in a greater degree of exploration.

In each round, we compute $\bar{\bm{\theta}}_t$, the regularized MLE of $\bm{\theta}^\ast$, using the set of observations $\mathcal{D}_t=\set{\prn{\bm{x}_s(i),y_s(i)}}_{i\in[N],s<t}$, where $\bm{x}_s(i)={\bm{\phi}}_s(i,\pi_s(i))$.
In calculating $\pi_t$, we balance exploitation and exploration by maximizing the estimated total satisfaction $f_t(\pi;\bar{\bm{\theta}}_t)$ and the bonus term
\vspace{-1pt}
\begin{equation}
    \label{eq:definition_g_t}
    g_t(\pi)=c_1\sum_{i=1}^N\nrm{{\bm{\phi}}_t(i,\pi(i))}_{{\bm{V}_{t}^{-1}}},
\end{equation}
% \vspace{-2pt}
where ${\bm{V}_t}=\lambda_0 \bm{I} + \sum_{s=1}^{t-1}\sum_{i=1}^N\bm{x}_s(i)\bm{x}_s(i)^\top$.
The bonus term is related to the width of the confidence interval.

\vspace{-1pt}
\subsubsection{Regret analysis}
\label{subsec:regret_ucb}
\vspace{-1pt}
% We present the regret upper bound achieved by \cref{alg:ucb}.
\cref{alg:ucb} achieves the following regret:
    \vspace{-0.3pt}
\begin{theorem}
    \label{thm:main_regret_ucb}
    Fix any $\delta\in(0,1)$.
    \cref{alg:ucb}, with tuning parameters chosen as in \cref{app:details_ucb}, achieves with probability at least $1-2\delta$ the regret bound
    $\mathcal{R}_T^\alpha = \tilde{O}\prn{\kappa_\mu^{-1} L_r L_\mu D \prn{d \sqrt{NT} + dN}}$.
\end{theorem}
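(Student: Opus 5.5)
The plan is a standard optimism argument, made to work for the non-separable objective by showing that $g_t$ dominates the estimation error of $f_t$ uniformly over allocations.

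\textbf{Step 1 (confidence set).} Using the regularized MLE with regularization $\kappa_\mu\lambda_0$, I would establish a GLM confidence bound in the style of \citet{icml2017glmcontextbandits}. With probability at least $1-\delta$, for all $t$,
\[
\nrm{\bar{\bm{\theta}}_t-\bm{\theta}^\ast}_{\bm{V}_t}\le \beta_t := \kappa_\mu^{-1}\prn*{\sigma\sqrt{2\log(1/\delta)+d\log(1+tN/(d\lambda_0))}+\kappa_\mu\sqrt{\lambda_0}D}.
\]
The derivation is the usual one. Write the score difference $\nabla\tilde{\mathcal{L}}(\bar{\bm\theta}_t)-\nabla\tilde{\mathcal{L}}(\bm\theta^\ast)$ through the mean value theorem as $\bm G_t(\bar{\bm\theta}_t-\bm\theta^\ast)$ with $\bm G_t\succeq\kappa_\mu\bm V_t$. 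This requires the intermediate points to lie in $\mathcal{B}$, which is why Assumption~\ref{asp:kappa} uses the enlarged radius $\max\{D+1,2D\}$; I would either project onto $\Theta$ or first argue that $\bar{\bm\theta}_t$ is close enough to $\bm\theta^\ast$. Then apply the self-normalized martingale bound of Abbasi-Yadkori et al.\ to $\sum_{s<t}\sum_i \epsilon_s(i)\bm{x}_s(i)$ with $\sigma$-sub-Gaussian noise. Because the $N$ observations within a round are conditionally independent, the filtration can be refined user by user and the bound still applies.

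\textbf{Step 2 (bonus dominates the error).} On this event, for every $\pi\in\Pi$ I want
\[
\abs{f_t(\pi;\bar{\bm\theta}_t)-f_t(\pi;\bm\theta^\ast)} \le \sum_a L_r\sum_{i\in\pi^{-1}(a)}L_\mu\abs{\bm\phi_t(i,a)^\top(\bar{\bm\theta}_t-\bm\theta^\ast)}.
\]
This follows from Lipschitzness of $r$ and of $\mu$. Cauchy--Schwarz then bounds the right side by $L_rL_\mu\beta_t\sum_i\nrm{\bm\phi_t(i,\pi(i))}_{\bm V_t^{-1}}$, which is at most $g_t(\pi)$ once $c_1\ge L_rL_\mu\beta_T$. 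This user-wise decomposition is the key point: although $f_t$ is non-separable, its error is controlled by a separable bonus.

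\textbf{Step 3 (regret decomposition).} Using optimism, then the oracle guarantee, then Step 2 again,
\[
\alpha f_t(\pi_t^\ast;\bm\theta^\ast)\le\alpha\prn{f_t(\pi_t^\ast;\bar{\bm\theta}_t)+g_t(\pi_t^\ast)}\le f_t(\pi_t;\bar{\bm\theta}_t)+g_t(\pi_t)\le f_t(\pi_t;\bm\theta^\ast)+2g_t(\pi_t).
\]
Hence $\mathcal{R}_T^\alpha\le 2c_1\sum_t\sum_i\nrm{\bm x_t(i)}_{\bm V_t^{-1}}$, where each per-round term is also at most $2M$ trivially.

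\textbf{Step 4 (elliptical potential, the main obstacle).} The difficulty is that $\bm V_t$ is updated only once per round with $N$ vectors at a time. I would split rounds into "good" rounds, where $\bm V_{t+1}\preceq 2\bm V_t$ (equivalently, $\det$ grows by at most a factor of $2$), and the remaining "bad" rounds.
\begin{itemize}
\item On good rounds, $\nrm{\bm x}_{\bm V_t^{-1}}\le\sqrt2\nrm{\bm x}_{\bm W^{-1}}$ for the within-round sequential matrix $\bm W$. The standard sequential elliptical potential lemma then gives $\sum\min(1,\nrm{\cdot}^2)\le 2d\log(1+TN/(d\lambda_0))$, and Cauchy--Schwarz over $NT$ terms yields $O(\sqrt{dNT\log})$.
\item Bad rounds number at most $O(d\log(1+TN/(d\lambda_0)))$, since each at least doubles $\det\bm V_t$. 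On each, $\sum_i\nrm{\cdot}\le N/\sqrt{\lambda_0}$, giving the $O(dN)$ term (with $\lambda_0\gtrsim 1$, or alternatively capped by $M$).
\end{itemize}
I would also take $\lambda_0$ of order $1$ so that the $\min(1,\cdot)$ truncation is harmless.

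Multiplying by $c_1=\tilde O(\kappa_\mu^{-1}L_rL_\mu D\sqrt d)$, where the $D$ enters via $\sqrt{\lambda_0}D$ and the $\sqrt{d}$ from $\beta_T$, gives $\tilde O(\kappa_\mu^{-1}L_rL_\mu D(d\sqrt{NT}+dN))$. The second $\delta$ in the probability $1-2\delta$ would be reserved for the localization argument in Step 1 that keeps $\bar{\bm\theta}_t$ inside $\mathcal{B}$.
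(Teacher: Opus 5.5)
\textbf{Where your route differs.} Your Steps 2 and 3 are exactly the paper's argument: the user-wise Lipschitz bound on $|f_t(\pi;\bar{\bm\theta}_t)-f_t(\pi;\bm\theta^\ast)|$, followed by the three-term decomposition in which the $\alpha$-approximate oracle leaves $2g_t(\pi_t)$. Your Step 4 is a legitimate substitute for the paper's use of Lemma 2 of Takemura et al., which thresholds individual norms at $1/\sqrt N$. Discarding the at most $d\log_2(1+NT/(d\lambda_0))$ rounds on which $\det\bm V_t$ more than doubles gives the same $O(\sqrt{dNT\log(\cdot)}+dN\log(\cdot)/\sqrt{\lambda_0})$. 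The two conditions you call equivalent are not equivalent. However, the direction you need holds: $\bm V_{t+1}\not\preceq 2\bm V_t$ implies $\det\bm V_{t+1}>2\det\bm V_t$.

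\textbf{The gap: localization in Step 1 and your choice of $\lambda_0$.} Algorithm 1 takes the unconstrained argmin over $\mathbb{R}^d$, so projecting onto $\Theta$ is not available. You never say \emph{how} you would show $\|\bar{\bm\theta}_t-\bm\theta^\ast\|_2\le D+1$. Without it, neither $\bm G_t\succeq\kappa_\mu\bm V_t$ nor $\mu(\bm\phi^\top\bar{\bm\theta}_t)\ge 0$ is guaranteed. The latter is needed for $f_t(\cdot;\bar{\bm\theta}_t)$ to be defined and for the submodular-welfare oracle.

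The paper gets localization by taking $\lambda_0\ge\kappa_\mu^{-2}\sigma^2(d\log(1+NT/d)+2\log(1/\delta))$. On the self-normalized event this forces $\|S_t\|_{\bm V_t^{-1}}\le\kappa_\mu\sqrt{\lambda_0}$, hence $\|S_t-\kappa_\mu\lambda_0\bm\theta^\ast\|_{\bm V_t^{-1}}\le\kappa_\mu(D+1)\sqrt{\lambda_0}$. Then Chen et al.'s Lemma A, applied to $\bm\theta\mapsto\bm V_t^{-1/2}(\bm G_t(\bm\theta)-\bm G_t(\bm\theta^\ast))$ with $\nabla\bm G_t\succeq\kappa_\mu\bm V_t$ on the ball, places $\bar{\bm\theta}_t$ inside the ball. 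A convexity argument along the ray from $\bm\theta^\ast$ would also work. With $\lambda_0$ of order $1$, as you propose, the noise level $\sigma\sqrt{d\log(\cdot)}$ is no longer dominated by $\kappa_\mu(D+1)\sqrt{\lambda_0}$, and the argument does not go through.

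\textbf{The same choice breaks your final bookkeeping.} The burn-in term is $2c_1\cdot 2dN\log(\cdot)/\sqrt{\lambda_0}$. For $\lambda_0=\Theta(1)$ this is $\tilde O(\kappa_\mu^{-1}L_rL_\mu\sigma d^{3/2}N)$, not $\tilde O(dN)$. Only with the paper's $\lambda_0$ does $c_1/\sqrt{\lambda_0}=\tilde O(L_rL_\mu(1+D))$, so that this term becomes $\tilde O(L_rL_\mu(1+D)dN)$. Capping by $M$ does not rescue it: per-round regret is bounded by $KM$ (not $2M$), which adds a $KMd\log(\cdot)$ term that is not of the stated form.

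\textbf{Fix.} Set $\lambda_0$ as in the paper. This resolves both issues at once: it keeps $\bar{\bm\theta}_t$ in $\mathcal{B}$, and it balances $c_1$ against $\sqrt{\lambda_0}$ in the second-order term.
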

% \vspace{-4pt}

% The dependence on $\lambda_0$ arises from the regularization used to guarantee the invertibility of ${\bm{V}_t}$.
% While this dependence can be avoided through initial exploration, doing so incurs an additional regret term and requires assumptions on ${\bm{\phi}}_t$.
This bound matches the known lower bound for contextual combinatorial linear bandits \citep[Theorem 7]{takemura2021near_combinatorial} in its dependence on $d$, $T$, and $N$, up to logarithmic factors.
If $f(\pi;\bm{\theta}_1)-f(\pi;\bm{\theta}_2)\leq C \sum_{i} \abs{\mu(\bm{x}_t(i)^\top\bm{\theta}_1)-\mu(\bm{x}_t(i)^\top\bm{\theta}_2)}$
holds for any $\pi\in\Pi$ and $\bm{\theta}_1,\bm{\theta}_2\in\R^d$, where $C>0$ is a constant, then we can derive a similar bound for a general CCGLS as well.
While we use the regularization in \cref{alg:ucb}, a similar bound can be obtained via an initial exploration.
Using the initial exploration, however, introduces an additional regret term and requires assumptions on ${\bm{\phi}}_t$
\footnote{\textit{E.g.,} \cite{icml2017glmcontextbandits} study generalized linear contextual bandits using initial exploration, where ${\bm{\phi}}_t$ is generated in a stochastic manner and additional regularity assumptions are imposed.}.
The full statement and proof of \cref{thm:main_regret_ucb} are given in \cref{app:details_ucb}.

\vspace{-1pt}
\subsubsection{Approximate optimization oracle construction}
\label{subsec:implementation}
\vspace{-1pt}
We next describe one concrete way to instantiate the $\alpha$-approximate optimization oracle used by CAB-UCB.
For a parameter $\bm{\theta}$, define $g_a(S;\bm{\theta})=r(\sum_{i\in S}\mu(\bm{\phi}_t(i,a)^\top\bm{\theta}))$.
By the concavity and monotonicity of $r$, each $g_a$ is monotone submodular. 
Hence, maximizing $f_t(\pi;\bm{\theta})$ is an instance of the submodular welfare problem. 
The UCB bonus $g_t(\pi)$ is additive, so $f_t(\pi;\bm{\theta})+g_t(\pi)$ remains within the same problem class. 
Therefore, \cref{lem:submod_e} provides a concrete $\alpha$-approximate optimization oracle for the CAB-UCB allocation step.

% We next describe one concrete way to instantiate the $\alpha$-approximate optimization oracle assumed in our framework.
% Our theoretical results only require the existence of such an oracle.
% Under the non-negative-mean assumption in \cref{sec:problem}, define $g_a\colon 2^{[N]} \to \R_{\geq0}$ by $g_a(S;\bm{\theta})=r\prn{\sum_{i\in S}\mu({\bm{\phi}}_t(i,a)^\top\bm{\theta})}$.
% Then, by the concavity and monotonicity of $r$, each $g_a$ is a monotone submodular function.
% We express $f_t$ as $f_t(\pi;\bm{\theta}) = \sum_{a\in[K]}g_a(\pi^{-1}(a))$.
% Hence, maximizing $f_t(\pi;\bm{\theta})$ corresponds to the submodular welfare problem, and $\max_{\pi\in\Pi}\prn{f_t(\pi;\bm{\theta})+g_t(\pi)}$ remains in the same class because $g_t$ is additive.
% Therefore, \cref{lem:submod_e} gives a concrete oracle construction.

% For the TS objective, $\max_{\pi\in\Pi}\prn{f_t(\pi;\bm{\theta})+h_t(\pi;{\mathcal{E}}_t)}$ need not be monotone because $h_t$ may take negative values, although it is still submodular as the sum of a submodular term and a modular term.
% Standard constant-factor approximation results for non-monotone submodular maximization under partition matroid constraints, such as \citet{lee_2009_submodular}, can be used here.
% To keep the presentation brief, we treat this step abstractly and simply assume the required $\alpha$-approximate oracle.

\vspace{-2pt}
\subsection{Thompson sampling algorithm}
\label{subsec:ts}
\vspace{-2pt}

Here, we introduce CAB-TS,  which is based on the TS method \citep{thompson1933likelihood}. 
The TS algorithm has been proposed for various bandits problems.
Theoretically, in these problems, the TS algorithm often has worse regret upper bounds than the UCB algorithms \citep{agrawal2013_ts_cb,ICDM2019_Takemura_ArmWise_Randomization,aistats2017_abeille_ts}.
Empirically, however, TS has often been found to perform comparably to, and sometimes better than, UCB algorithms \citep{NIPS_chapelle_2011_ts,JMLR_2012may_ts_context_bandit,wang_icml_2018_combinatorial}.

CAB-TS has parameters $\lambda_0>0$ and $a>0$, which control the regularization strength and the degree of exploration, respectively.
Up to the step of computing $\bar{\bm{\theta}}_t$ via the regularized MLE, the procedure is identical to \cref{alg:ucb}.
However, the subsequent method for computing $\pi_t$ differs.
In CAB-TS, after computing $\bar{\bm{\theta}}_t$, for each $i\in[N]$, we independently sample $\tilde{\bm{\epsilon}}_t(i)$ from $\mathcal{N}\prn{\bm{0},a^2\bm{H}_t^{-1}}$, where 
$\bm{H}_1 = L_\mu\lambda_0 \bm{I}$ 
and 
${\bm{H}_{t}}= \sum_{s=1}^{t-1}\sum_{i=1}^N \dot{\mu}({\bm{x}_{s}} (i)^\top\bar{\bm{\theta}}_t)\prn{{\bm{x}_{s}} (i){\bm{x}_{s}} (i)^\top+ \frac{\lambda_0}{N(t-1)}{\bm{I}}}$ for $t\geq2$.
The isotropic regularization term is intentionally placed inside the weighted sum. 
With the scaling $\lambda_0/N(t-1)$, the matrices inside this weighted sum add up to $\bm{V}_{t}$, because the $(t-1)$ copies of $\lambda_0\bm{I}/(t-1)$ add to $\lambda_0\bm{I}$. 
Since $\dot{\mu}(z)\in[\kappa_\mu,L_\mu]$, this gives $\kappa_\mu \bm{V}_{t}\preceq \bm{H}_{t}\preceq L_\mu \bm{V}_{t}$.
In what follows, we collectively denote these samples by $\tilde{\mathcal{E}}_t=\set{\tilde{\bm{\epsilon}}_t(1),\dots,\tilde{\bm{\epsilon}}_t(N)}$.
We choose the allocation $\pi_t$ to maximize the objective function $f_t(\pi;\bar{\bm{\theta}}_t)+h_t(\pi;\tilde{\mathcal{E}}_t)$, where  
\vspace{-2pt}
\begin{align}
    \label{eq:definition_h}
    h_t(\pi;{\tilde{\mathcal{E}}}) = \sum_{i=1}^{N}{\bm{\phi}}_t(i,\pi(i))^\top{\tilde{\bm{\epsilon}}}(i).
\end{align}
% \vspace{-4pt}
% \vspace{-2pt}
% \begin{align}
%     \label{eq:definition_f_tilde}
%     \tilde{f}\prn*{\pi;\bm{\bm{\theta}}}=\sum_{a\in[K]}r\prn*{\sum_{i\in\pi^{-1}}\mu\prn{{\bm{\phi}}_t(i,a)^\top\bm{\theta}(i)}}.
% \end{align}
We approximate the posterior of $\bm{\theta}^\ast$ by the Laplace approximation.
In this setting, we sample $\tilde{\bm{\epsilon}}_t(i)$ i.i.d.\ across users, because this independence is needed to preserve enough variance in the aggregate perturbation, although using a common $\tilde{\bm{\epsilon}}_t$ is also a natural idea (see \cref{rem:reason_for_iid} for details).

% \vspace{-2pt}
\vspace{-1pt}
\subsubsection{Regret analysis}
\label{subsec:regret_ts}
\vspace{-1pt}
% \vspace{-2pt}
Unlike CAB-UCB, our analysis of CAB-TS assumes access to an exact optimization oracle for $f_t(\pi;\bar{\bm{\theta}}_t)+h_t(\pi;\tilde{\mathcal{E}}_t)$.
Accordingly, the regret guarantee for CAB-TS is stated in terms of the standard regret $\mathcal{R}_T$.
CAB-TS achieves the following regret bound:
% \looseness=-1
% \vspace{-0.4pt}
\begin{theorem}
    \label{thm:bound_TS}
    Fix any $\delta\in(0,1/T)$.
    CAB-TS, with tuning parameters chosen as in \cref{app:details_ts}, achieves the regret upper bound
    $\E\brk{\mathcal{R}_T}=\tilde{O}\prn{\kappa_\mu^{-1} L_r L_\mu D \prn{dN \sqrt{T} + dN^{3/2}}}$.
\end{theorem}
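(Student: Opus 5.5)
We would follow the standard linear Thompson sampling analysis of \citet{agrawal2013_ts_cb} and \citet{aistats2017_abeille_ts}, adapted to the allocation structure. It rests on three ingredients: concentration of the MLE, concentration of the user-wise perturbations, and an anti-concentration (optimism) event with constant probability.

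\textbf{Step 1 (good events).} Define $E^{\mathrm{MLE}}_t$ as the event $\nrm{\bar{\bm{\theta}}_t-\bm{\theta}^\ast}_{\bm{V}_t}\le\beta_t$ with $\beta_t=\tilde O(\kappa_\mu^{-1}\sigma\sqrt{d}+\sqrt{\lambda_0}D)$. This is the same GLM confidence set used for \cref{thm:main_regret_ucb}, and it holds simultaneously for all $t$ with probability $1-\delta$. Define $E^{\mathrm{TS}}_t$ as the event $\nrm{\tilde{\bm{\epsilon}}_t(i)}_{\bm{H}_t}\le a\sqrt{2d\log(2dNT/\delta)}$ for all $i$, which follows from Gaussian tail bounds and a union bound over $i$. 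On these events, $\kappa_\mu\bm{V}_t\preceq\bm{H}_t$ gives $|\bm{\phi}^\top\tilde{\bm{\epsilon}}_t(i)|\le \gamma_t\nrm{\bm{\phi}}_{\bm{V}_t^{-1}}$ with $\gamma_t=\tilde O(a\sqrt{d/\kappa_\mu})$. Using concavity, monotonicity and Lipschitzness of $r$, together with Lipschitzness of $\mu$, we get
\[
|f_t(\pi;\bar{\bm{\theta}}_t)-f_t(\pi;\bm{\theta}^\ast)|\le L_rL_\mu\beta_t\sum_i\nrm{\bm{\phi}_t(i,\pi(i))}_{\bm{V}_t^{-1}},
\]
which is the same decomposition as in the UCB analysis. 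The failure events contribute at most $2\delta TNM$, which is $O(NM)$ since $\delta<1/T$.

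\textbf{Step 2 (regret decomposition).} Write $\tilde F_t(\pi)=f_t(\pi;\bar{\bm{\theta}}_t)+h_t(\pi;\tilde{\mathcal{E}}_t)$. Since $\pi_t$ maximizes $\tilde F_t$ exactly, the per-round regret splits as
\[
\bigl[f_t(\pi_t^\ast;\bm{\theta}^\ast)-\tilde F_t(\pi_t)\bigr]+\bigl[\tilde F_t(\pi_t)-f_t(\pi_t;\bm{\theta}^\ast)\bigr].
\]
The second bracket is at most $(L_rL_\mu\beta_t+\gamma_t)\sum_i\nrm{\bm{x}_t(i)}_{\bm{V}_t^{-1}}$ by Step 1. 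Summing over rounds and applying the elliptical potential lemma over $NT$ rank-one updates gives $\sum_t\sum_i\nrm{\bm{x}_t(i)}_{\bm{V}_t^{-1}}\le\sqrt{NT\cdot d\log(\cdot)}+\tilde O(dN)$, where the $dN$ term handles rounds in which $\nrm{\cdot}_{\bm{V}_t^{-1}}$ exceeds $1$. The first bracket is handled via optimism. We introduce $\Delta_t(\pi)=\gamma_t\sum_i\nrm{\bm{\phi}_t(i,\pi(i))}_{\bm{V}_t^{-1}}$-type widths and follow the argument of \citet{agrawal2013_ts_cb}. Let $\mathcal{O}_t$ be the event $\tilde F_t(\pi_t^\ast)\ge f_t(\pi_t^\ast;\bm{\theta}^\ast)$. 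Conditionally on $\mathcal{F}_{t-1}$ we show $\P_t(\mathcal{O}_t)\ge p>0$. Standard conversion then bounds $\E_t[f_t(\pi_t^\ast;\bm{\theta}^\ast)-\tilde F_t(\pi_t)]$ by $p^{-1}$ times the expected width of the chosen allocation. The conversion uses the least-width allocation among those that are "saturated", or alternatively the $\bm{H}_t$-norm argument of Abeille--Lazaric via convexity of $\max_\pi\tilde F_t$ in the perturbation.

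\textbf{Step 3 (anti-concentration; main obstacle).} We need $h_t(\pi_t^\ast;\tilde{\mathcal{E}}_t)\ge f_t(\pi_t^\ast;\bm{\theta}^\ast)-f_t(\pi_t^\ast;\bar{\bm{\theta}}_t)$ with constant probability. By independence across users,
\[
h_t(\pi_t^\ast;\tilde{\mathcal{E}}_t)\sim\mathcal{N}\Bigl(0,\ a^2\sum_i\nrm{\bm{\phi}_t(i,\pi_t^\ast(i))}_{\bm{H}_t^{-1}}^2\Bigr),
\]
and this is where the i.i.d. design of \cref{rem:reason_for_iid} is essential. The target gap, however, is bounded by $L_rL_\mu\beta_t\sum_i\nrm{\cdot}_{\bm{V}_t^{-1}}$, a sum rather than a root-sum-of-squares. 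By Cauchy--Schwarz, $\sum_i\nrm{\cdot}\le\sqrt N(\sum_i\nrm{\cdot}^2)^{1/2}$. Together with $\bm{H}_t\preceq L_\mu\bm{V}_t$, taking $a=\tilde\Theta(\sqrt{N}L_rL_\mu\beta_t\sqrt{L_\mu})$ makes the standard deviation dominate the gap, so the Gaussian exceeds it with probability at least $p=\Phi(-1)$. This extra $\sqrt N$ in $a$ propagates into $\gamma_t$. Combined with the elliptical potential bound $\sqrt{dNT}+dN$, it yields the final rate $\tilde O(\kappa_\mu^{-1}L_rL_\mu D(dN\sqrt T+dN^{3/2}))$. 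The delicate part is making the optimism-to-regret conversion work when $\bar{\bm{\theta}}_t$ enters nonlinearly through $f_t$. The perturbation is linear, so we would condition on $\mathcal{F}_{t-1}$, treat $f_t(\cdot;\bar{\bm{\theta}}_t)$ as a fixed function, and run the saturated/unsaturated argument purely on the linear perturbation $h_t$.
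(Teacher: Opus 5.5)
Your architecture matches the paper's. It uses an MLE confidence width $c_1$ and a perturbation width. Optimism holds with constant probability because of user-wise independence, with $a\propto c_1\sqrt{L_\mu N}$ obtained from Cauchy--Schwarz and $\bm{H}_t\preceq L_\mu\bm{V}_t$. The conversion is Agrawal--Goyal-type and relies on the oracle being exact.

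The gap is in Step~1. Your perturbation event is norm-based, $\|\tilde{\bm{\epsilon}}_t(i)\|_{\bm{H}_t}\le a\sqrt{2d\log(2dNT/\delta)}$, so the width coefficient is $\gamma_t=\tilde O(a\sqrt{d/\kappa_\mu})$. Take $a=\tilde\Theta(\sqrt{N}\,L_rL_\mu\beta_t\sqrt{L_\mu})$ with $\beta_t$ of order $\kappa_\mu^{-1}\sigma\sqrt{d}$. Then $\gamma_t=\tilde O(d\sqrt{N})$, and multiplying by the potential bound $\sqrt{dNT}$ gives $\tilde O(d^{3/2}N\sqrt{T})$, not $\tilde O(dN\sqrt{T})$. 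So the rate claimed in Step~3 does not follow from your events; you have reproduced the classical extra $\sqrt{d}$ of linear Thompson sampling. Your fallback, the Abeille--Lazaric convexity route, also controls the perturbation through its $\bm{H}_t$-norm in its standard form. It therefore pays the same $\sqrt{d}$ and cannot substitute here.

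The missing idea is to use the finiteness of the user--arm pairs, as the paper does via \citet[Lemma~4]{aistas_kveton20a_glm}. Conditionally on $\mathcal{F}_{t-1}$, the contexts are fixed and $\bm{H}_t$ is measurable. Each scalar $\bm{\phi}_t(i,k)^\top\tilde{\bm{\epsilon}}_t(i)$ is therefore exactly $\mathcal{N}(0,a^2\|\bm{\phi}_t(i,k)\|^2_{\bm{H}_t^{-1}})$. A union bound over the $NK$ pairs $(i,k)$ gives $|\bm{\phi}_t(i,k)^\top\tilde{\bm{\epsilon}}_t(i)|\le a\sqrt{2\kappa_\mu^{-1}\log(2NK/\delta)}\,\|\bm{\phi}_t(i,k)\|_{\bm{V}_t^{-1}}$. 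The triangle inequality $|h_t(\pi;\tilde{\mathcal{E}}_t)|\le\sum_i|\bm{\phi}_t(i,\pi(i))^\top\tilde{\bm{\epsilon}}_t(i)|$ then covers all $K^N$ allocations at once. This gives the width $c_2=c_1\sqrt{2\kappa_\mu^{-1}L_\mu N\log(KN/\delta)}=\tilde O(\sqrt{dN})$, and $(c_1+c_2)(\sqrt{dNT}+dN/\sqrt{\lambda_0})$ then yields the stated rate.

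Two smaller corrections:
\begin{itemize}
\item The conversion uses the least-width \emph{unsaturated} allocation, meaning gap at most $(c_1+c_2)$ times its width; this set contains $\pi_t^\ast$. The saturation threshold must be exactly $c_1+c_2$, so that on the concentration events every saturated $\pi$ has $\tilde F_t(\pi)<f_t(\pi_t^\ast;\bm{\theta}^\ast)$. That is what turns $\mathbb{P}_t(\mathcal{O}_t)\ge 0.15$ into $\mathbb{P}_t(\pi_t\text{ unsaturated})\ge 0.15-2\delta$.
\item The $N$ updates in each round share the same $\bm{V}_t$, so the truncation in the potential lemma is at $1/\sqrt{N}$, not at $1$ \citep[Lemma~2]{takemura2021near_combinatorial}.
\end{itemize}
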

% \vspace{-1pt}

This regret upper bound has an extra $\sqrt{N}$ factor compared with CAB-UCB.
The proof of \cref{thm:bound_TS} partitions $\Pi$ into a ``good'' subset and its complement and lower-bounds the probability that $\pi_t$ lies in the good subset.
This argument uses the exact optimality of $\pi_t$.
However, the approximate optimization oracle may return a near-optimal allocation outside this subset, so the same probability lower bound is unavailable (see \cref{rem:exact_oracle_ts} for details).
The full statement and proof of \cref{thm:bound_TS} are given in \cref{app:details_ts}.
Although the objective function $f_t(\pi;\bar{\bm{\theta}}_t)+h_t(\pi;\tilde{\mathcal{E}}_t)$ is designed so that $\tilde{\mathcal{E}}_t$ enters linearly, one can also consider a heuristic variant of CAB-TS that instead maximizes
$\sum_{a\in[K]}r\prn[\big]{ \sum_{i\in\pi^{-1}(a)}\mu\prn[\big]{{\bm{\phi}}_t(i,a)^\top(\bar{\bm{\theta}}_t+\tilde{\bm{\epsilon}}_t\prn{i})}}.$
We describe this heuristic in \cref{app:variant_ts} and include it only for empirical comparison.
Similar to CAB-UCB, the analysis for CAB-TS can also be extended to a general CCGLS under the appropriate assumption.
\looseness=-1

% \vspace{-1pt}
% \subsection{Approximate optimization oracle construction}
% \label{subsec:implementation}
% \vspace{-1pt}
% We next describe one concrete way to instantiate the $\alpha$-approximate optimization oracle assumed in our framework.
% Our theoretical results only require the existence of such an oracle.
% Under the non-negative-mean assumption in \cref{sec:problem}, define $g_a\colon 2^{[N]} \to \R_{\geq0}$ by $g_a(S;\bm{\theta})=r\prn{\sum_{i\in S}\mu({\bm{\phi}}_t(i,a)^\top\bm{\theta})}$.
% Then, by the concavity and monotonicity of $r$, each $g_a$ is a monotone submodular function.
% We express $f_t$ as $f_t(\pi;\bm{\theta}) = \sum_{a\in[K]}g_a(\pi^{-1}(a))$.
% Hence, maximizing $f_t(\pi;\bm{\theta})$ corresponds to the submodular welfare problem, and $\max_{\pi\in\Pi}\prn{f_t(\pi;\bm{\theta})+g_t(\pi)}$ remains in the same class because $g_t$ is additive.
% Therefore, \cref{lem:submod_e} gives a concrete oracle construction.

% For the TS objective, $\max_{\pi\in\Pi}\prn{f_t(\pi;\bm{\theta})+h_t(\pi;{\mathcal{E}}_t)}$ need not be monotone because $h_t$ may take negative values, although it is still submodular as the sum of a submodular term and a modular term.
% Standard constant-factor approximation results for non-monotone submodular maximization under partition matroid constraints, such as \citet{lee_2009_submodular}, can be used here.
% To keep the presentation brief, we treat this step abstractly and simply assume the required $\alpha$-approximate oracle.

\vspace{-2pt}
\subsection{One-pass update algorithm}
\label{subsec:one_pass}
\vspace{-2pt}
CAB-UCB and CAB-TS solve a regularized MLE at round $t$ using all prior observations.
For simplicity, throughout this runtime discussion, we suppress the dependence on the $d$ and $N$.
With full-history computation, if the solver uses $I_t$ iterations, the update cost grows linearly with the number of rounds, namely $O(t I_t)$.
To avoid this linear growth, we replace the MLE update with a one-pass parameter update based on OMD \citep{zhang2025generalizedlinearbanditsoptimal}.
However, this variant requires an additional self-concordance assumption on the link function.
This assumption is used in prior work \citep{liu_2025_combinatorial_log_bandits,zhang2025generalizedlinearbanditsoptimal}, and holds for commonly used link functions such as the logistic and Poisson link functions.
\vspace{-0.3pt}
\begin{assumption}
    \label{asp:self_concordance}
    There exists a constant $R>0$, such that for any $z\in\R$ $\abs{\ddot{\mu}(z)}\leq R \dot{\mu}(z)$.
\end{assumption}
We call the resulting procedure CAB-OFU with one-pass OMD update, where OFU denotes the principle of optimism in the face of uncertainty, which uses a confidence set constructed from past observations \cite{agrawal1995sample_mean_bandit}.
For each user, define the negative log-likelihood
    $\ell_{t,i}(\bm{\theta})
    =
    - y_t(i)\bm{x}_t(i)^\top \bm{\theta}
    + m\prn{\bm{x}_t(i)^\top \bm{\theta}},$
where $\dot{m}=\mu$. We form the quadratic surrogate
    $\tilde{\ell}_t(\bm{\theta})
    =
    \sum_{i=1}^N
    \prn{
        \inpr{\nabla_{\bm{\theta}} \ell_{t,i}(\bm{\theta}_t), \bm{\theta}-\bm{\theta}_t}
        +
        \frac{1}{2}\nrm{\bm{\theta}-\bm{\theta}_t}_{\nabla_{\bm{\theta}}^2 \ell_{t,i}(\bm{\theta}_t)}^2
    },$
and update the parameter by
\vspace{-2.3pt}
\begin{equation}
    \label{eq:onepass_theta_update}
    \bm{\theta}_{t+1}
    =
    \argmin_{\bm{\theta}\in\Theta}
    \prn[\big]{
        \tilde{\ell}_t(\bm{\theta})
        +
        \frac{1}{2\eta}\nrm{\bm{\theta}-\bm{\theta}_t}_{\bm{Q}_t}^2
    },
\end{equation}
and set
    $\bm{Q}_{t}
    =
    \lambda_{\mathrm{op}} \bm{I}
    +
    \sum_{s=1}^{t-1}\sum_{i=1}^N
    \nabla_{\bm{\theta}}^2 \ell_{s,i}(\bm{\theta}_{s+1})
    .
    $
Once $\bm{Q}_t$ is maintained incrementally, this subproblem uses only the current-round surrogate and $\bm{Q}_t$.
If \eqref{eq:onepass_theta_update} is solved in $\tilde{I}_t$ iterations, the update cost is $O(\tilde{I}_t)$.
We use the confidence set
    $C_t(\delta)
    =
    \set*{
        \bm{\theta}\in\Theta
        \,\middle|\,
        \nrm{\bm{\theta}-\bm{\theta}_t}_{\bm{Q}_t}
        \leq
        \beta_t(\delta)
    },$
with
    $\beta_t(\delta)
    =
    \tilde{O}\prn{D\sqrt{
        \lambda_{\mathrm{op}}}
        +
        \sqrt{d}
    }
    ,
    $
suppressing logarithmic factors.
Using this confidence set, we choose $\pi_t$ satisfying $\max_{\bm{\theta}\in C_t(\delta)} f_t(\pi_t;\bm{\theta})\geq \alpha \max_{\pi\in\Pi}
    \max_{\bm{\theta}\in C_t(\delta)}
    f_t(\pi;\bm{\theta}).$

By this update, we achieve the following regret bound:
% \vspace{-0.4pt}
\begin{theorem}
    \label{thm:onepass_regret}
    The above update with suitable tuning parameters achieves $\mathcal{R}_T^\alpha=\tilde{O}(\kappa_\mu^{-1}\max\set{d,N}\prn{\sqrt{dNT}+dN})$, where we display only the dependence on $\kappa_\mu$, $d$, $N$, and $T$.
\end{theorem}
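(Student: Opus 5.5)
The proof has three parts: a confidence-set guarantee imported from the one-pass OMD analysis of \cite{zhang2025generalizedlinearbanditsoptimal}, adapted to $N$ observations per round; an optimism argument that turns the per-round approximate regret into a sum of user-wise confidence widths; and an elliptical-potential bound on that sum.

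\textbf{Step 1: confidence set.} Under \cref{asp:kappa} and \cref{asp:self_concordance}, we show that with probability at least $1-\delta$, $\bm{\theta}^\ast\in C_t(\delta)$ for all $t$.
\begin{itemize}
\item The per-round loss $\sum_i \ell_{t,i}$ is a sum of $N$ self-concordant GLM losses. The quadratic surrogate $\tilde\ell_t$ together with the Bregman term $\frac{1}{2\eta}\nrm{\cdot}_{\bm{Q}_t}^2$ yields the usual OMD one-step inequality.
\item The noise term is handled by a vector self-normalized martingale bound over the $tN$ observations. The self-concordance error between $\nabla^2\ell_{t,i}(\bm{\theta}_t)$ and the Hessian at intermediate points is controlled by the projection onto $\Theta$ and the constant $R$.
\item Telescoping gives $\nrm{\bm{\theta}^\ast-\bm{\theta}_t}_{\bm{Q}_t}\le\beta_t(\delta)=\tilde O(D\sqrt{\lambda_{\mathrm{op}}}+\sqrt d)$. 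The constant depends on $\eta$ and $R$, which are chosen as in that paper, e.g.\ $\eta\propto 1+RD$.
\item Because $N$ losses are aggregated per round, the step size or $\lambda_{\mathrm{op}}$ may need to scale with $N$, e.g.\ $\lambda_{\mathrm{op}}\gtrsim N$ or $\max\{d,N\}$. I expect this is where a $\max\{d,N\}$ factor arises.
\end{itemize}

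\textbf{Step 2: optimism.} On the good event, let $\tilde{\bm{\theta}}_t\in C_t(\delta)$ attain $\max_{\bm{\theta}\in C_t(\delta)}f_t(\pi_t;\bm{\theta})$. By the selection rule and $\bm{\theta}^\ast\in C_t(\delta)$,
\[
\alpha f_t(\pi_t^\ast;\bm{\theta}^\ast)\le \alpha\max_{\pi}\max_{\bm{\theta}\in C_t}f_t(\pi;\bm{\theta})\le f_t(\pi_t;\tilde{\bm{\theta}}_t).
\]
The per-round approximate regret is therefore at most $f_t(\pi_t;\tilde{\bm{\theta}}_t)-f_t(\pi_t;\bm{\theta}^\ast)$.

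\textbf{Step 3: reduction to user-wise widths.} Since $r$ is Lipschitz and monotone, and $\mu$ is Lipschitz, the difference is bounded by $L_rL_\mu\sum_i\abs{\bm{x}_t(i)^\top(\tilde{\bm{\theta}}_t-\bm{\theta}^\ast)}$. Both parameters lie in $C_t(\delta)$, so this is at most $2L_rL_\mu\beta_t\sum_i\nrm{\bm{x}_t(i)}_{\bm{Q}_t^{-1}}$.

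\textbf{Step 4: from $\bm{Q}_t$ to the design matrix.} Since $\nabla^2\ell_{s,i}(\bm{\theta}_{s+1})=\dot\mu(\cdot)\bm{x}_s(i)\bm{x}_s(i)^\top\succeq\kappa_\mu\bm{x}_s(i)\bm{x}_s(i)^\top$ on $\Theta$, we get $\bm{Q}_t\succeq\kappa_\mu\bm{V}'_t$, where $\bm{V}'_t=(\lambda_{\mathrm{op}}/\kappa_\mu)\bm{I}+\sum_{s<t}\sum_i\bm{x}_s(i)\bm{x}_s(i)^\top$. This is where $\kappa_\mu^{-1}$ enters the bound.

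\textbf{Step 5: elliptical potential.} We apply Cauchy--Schwarz over $t$ and $i$ and the multi-observation elliptical potential lemma. Each round adds $N$ rank-one terms, so we need $\sum_i\nrm{\bm{x}_t(i)}^2_{\bm{V}_t'^{-1}}\le 1$ to use $\min\{1,\cdot\}\le 2\log(1+\cdot)$. This holds if $\lambda_{\mathrm{op}}/\kappa_\mu\ge N$. Otherwise we split rounds into those with large and small potential; there are only $\tilde O(d)$ large-potential rounds, each costing at most $NM$, which produces the additive $dN$ term. The small-potential rounds give
\[
\sum_{t}\sum_i\nrm{\bm{x}_t(i)}_{\bm{V}_t'^{-1}}\le\sqrt{NT\cdot\tilde O(d)}.
\]
Combining this with $\beta_T=\tilde O(\sqrt{\lambda_{\mathrm{op}}}+\sqrt d)$ under the choice $\lambda_{\mathrm{op}}\asymp\max\{d,N\}$ gives $\tilde O(\kappa_\mu^{-1}\max\{d,N\}(\sqrt{dNT}+dN))$. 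On the good event the remaining factors are absorbed into the stated rate.

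\textbf{Main obstacle.} The hard part is Step 1: adapting the one-pass OMD confidence bound of \cite{zhang2025generalizedlinearbanditsoptimal} to $N$ simultaneous losses per round. The self-concordance perturbation term and the mixability/step-size condition scale with the per-round loss magnitude, and so with $N$. I would first attempt to redo their lemma treating the round's aggregate loss as a single self-concordant function with curvature $\sum_i\dot\mu\,\bm{x}\bm{x}^\top$, then track how $N$ enters $\eta$ and $\lambda_{\mathrm{op}}$. This tracking is what should justify the $\max\{d,N\}$ factor in the statement.
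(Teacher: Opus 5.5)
Your proposal is correct and follows the paper's overall structure. The confidence set is imported from \citet{zhang2025generalizedlinearbanditsoptimal} by treating $\sum_i\ell_{s,i}$ as a single loss, with $\eta=1+RD$ and $\lambda_{\mathrm{op}}\ge\max\{14d\eta R^2,\,6\eta RDL_\mu N\}$. The $N$ enters exactly where you guessed: the self-concordance error $RDL_\mu N\nrm{\bm{\theta}_{s+1}-\bm{\theta}_s}_2^2$ must be absorbed into $\frac{1}{3}\nrm{\bm{\theta}_{s+1}-\bm{\theta}_s}_{\bm{Q}_s}^2$. The paper then uses optimism, $\bm{Q}_t\succeq\kappa_\mu\bm{V}_t$, and an elliptical potential bound, as you do.

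The real difference is your Step 3. The paper does not apply Lipschitz continuity of $\mu$ directly. It Taylor-expands $\mu$ to second order around $\bm{\theta}^\ast$ and bounds the remainder using \cref{asp:self_concordance}. This yields a first-order term $2\kappa_\mu^{-1/2}L_rL_\mu\beta_T\sum_{t,i}\nrm{\bm{x}_t(i)}_{\bm{V}_t^{-1}}$ plus a remainder $16\kappa_\mu^{-1}RL_\mu L_r dN\beta_T^2\log(\cdot)$. Since the paper then bounds $\dot{\mu}(\bm{x}_t(i)^\top\bm{\theta}^\ast)\le L_\mu$, the expansion gains nothing over your direct bound $\abs{\mu(u)-\mu(v)}\le L_\mu\abs{u-v}$.

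Your route drops the remainder. With $\beta_T^2=\tilde O(\max\{d,N\})$ it gives $\tilde O(\kappa_\mu^{-1/2}\sqrt{\max\{d,N\}}\sqrt{dNT}+dN)$. This implies the stated rate when $\kappa_\mu\le 1$, which the paper assumes without loss of generality. Self-concordance is then needed only inside the confidence-set lemma.

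For the same reason, your remark that $\kappa_\mu^{-1}$ enters at Step 4 is slightly off. The comparison $\bm{Q}_t\succeq\kappa_\mu\bm{V}_t'$ costs only $\kappa_\mu^{-1/2}$ per width. In the paper, the term $\kappa_\mu^{-1}\max\{d,N\}\,dN$ comes from the squared widths in its remainder.

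Your round-wise split of the potential is a valid substitute for the per-user $1/\sqrt{N}$ threshold of \cref{lem:upper_bound_x_t} that the paper uses. You have at most $\tilde O(d)$ rounds where the determinant doubles, each charged at $NM$.

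One caution on Step 1. A self-normalized vector martingale bound does not directly control $\sum_{s,i}(\ell_{s,i}(\bm{\theta}^\ast)-\ell_{s,i}(\bm{\theta}_{s+1}))$, because $\bm{\theta}_{s+1}$ depends on $y_s(\cdot)$ and so is not predictable. The paper instead inserts the predictable Gaussian mixture $P_s=\mathcal{N}(\bm{\theta}_s,\frac{3\eta}{2}\bm{Q}_s^{-1})$ and splits the sum into two parts:
\begin{itemize}
\item The part $\sum_s(\sum_i\ell_{s,i}(\bm{\theta}^\ast)-m_s(P_s))$ is handled by Ville's inequality for the likelihood-ratio martingale of \cref{lem:onepass_high_prob_bound}. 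Its martingale property rests on the conditional independence of $y_t(1),\dots,y_t(N)$.
\item The mixability gap $m_s(P_s)-\sum_i\ell_{s,i}(\bm{\theta}_{s+1})$ is bounded deterministically via Lemma~6 of \citet{zhang2025generalizedlinearbanditsoptimal}.
\end{itemize}
Your ``Main obstacle'' plan of redoing their lemma for the aggregated per-round loss is exactly this, so Step 1 should be executed along those lines rather than through a self-normalized bound.
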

% This proof introduces the $\kappa_\mu$ dependence because it compares the weighted matrix $\bm{Q}_t$ with the unweighted design matrix $\bm{V}_t$.
The full statement and proof are given in \cref{app:onepass}.

\vspace{-2pt}
\section{Experiments}
\vspace{-2pt}
\label{sec:experiments}
This section empirically evaluates CAB-UCB and CAB-TS using synthetic data. 
Our code to reproduce the experimental results is shared as Supplementary Material.
\vspace{-1pt}
\subsection{Setting}
\vspace{-1pt}
In synthetic experiments, we define the five-dimensional feature vector $\bm{\phi}(i,a)= \lambda \bm{\phi}_{pop}(i,a) + (1 - \lambda) \bm{\phi}_{base}(i,a),$
where $\bm{\phi}_{pop}$ and $\bm{\phi}_{base}$ are sampled from the standard normal distribution. 
We impose $\bm{\phi}_{pop}(i,a) < \bm{\phi}_{pop}(i,a+1)$ component-wise for all users $i$ and all $a \in [K-1]$.  
The parameter $\lambda$ controls arm-popularity strength.
A large $\lambda$ makes all users prefer arms in a similar order, making it difficult to jointly maximize matches and arm satisfaction.
We use $\mu(x) = 1 / (1 + \exp{(-x)})$ and $r(x) = \min\{ x, \beta \}$ as feedback mean and satisfaction functions, respectively.
Thus, matches beyond $\beta$ have no additional effect on satisfaction.
Smaller $\beta$ yields faster saturation.

We compare CAB-UCB (\cref{alg:ucb}), CAB-TS ($\epsilon$) (\cref{alg:ts}), the heuristic variant CAB-TS ($\theta$) (\cref{alg:ts2}), and One-pass OMD (\cref{alg:onepass_ofu}) against three baselines, ``Random'', ``Max match'', and ``FairX''.
Random selects arms uniformly at random. 
Max match is a UCB algorithm that aims to maximize cumulative expected matches. 
FairX, a UCB-based fairness algorithm proposed by \citet{pmlr-v139-wang21b}, ensures that each arm receives a share of exposure that is proportional to its expected match, aiming to reduce over-selection of specific arms.
Full baseline specifications are given in \cref{app:baseline_alg_detail}.
The optimization routines used in the experiments are practical proxies for the offline allocation steps appearing in the theory.
In particular, CAB-TS ($\theta$) is included only as an empirical heuristic variant and is not covered by our theory, while the one-pass variant is an empirical proxy for the theory-side oracle-based procedure.

\vspace{-1pt}
\subsection{Results}
\vspace{-1pt}
\begin{figure*}[t]
    \captionsetup[subfigure]{skip=1pt}
    \captionsetup{skip=2.2pt}
    \centering
    \includegraphics[width=1\linewidth]{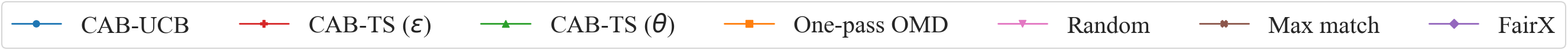} \\
    \vspace{-1pt}
    \begin{subfigure}[t]{0.32\linewidth}
        \centering
        \includegraphics[width=\linewidth,trim=0 8pt 0 0,clip]{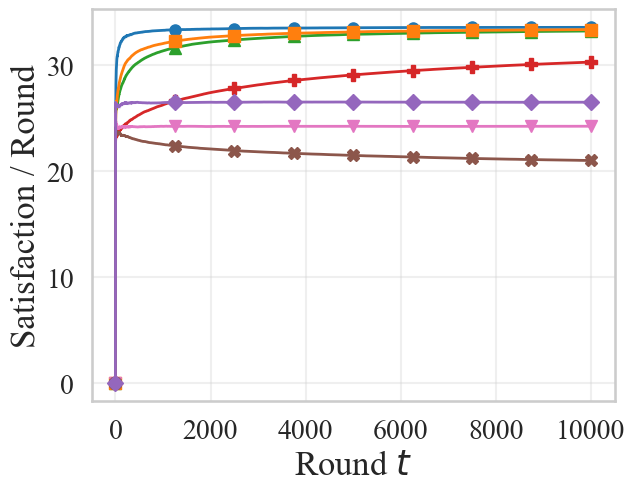}
        \caption{Cumulative satisfaction per round.}
        \label{fig:satisfaction_timestep}
    \end{subfigure}
    \hfill
    \begin{subfigure}[t]{0.32\linewidth}
        \centering
        \includegraphics[width=\linewidth,trim=0 8pt 0 0,clip]{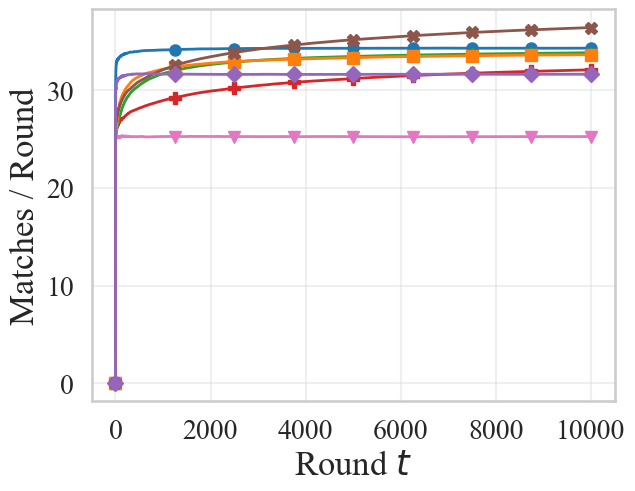}
        \caption{Cumulative matches per round.}
        \label{fig:matches_timestep}
    \end{subfigure}
    \hfill
    \begin{subfigure}[t]{0.32\linewidth}
        \centering
        \includegraphics[width=\linewidth,trim=0 8pt 0 0,clip]{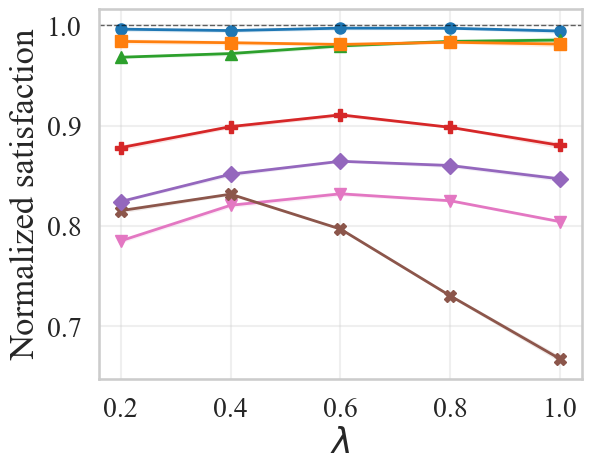}
        \caption{Normalized cumulative satisfaction under varying $\lambda$.}
        \label{fig:lambda_sweep}
    \end{subfigure}

    \vspace{-2pt}

    \begin{subfigure}[t]{0.32\linewidth}
        \centering
        \includegraphics[width=\linewidth,trim=0 8pt 0 0,clip]{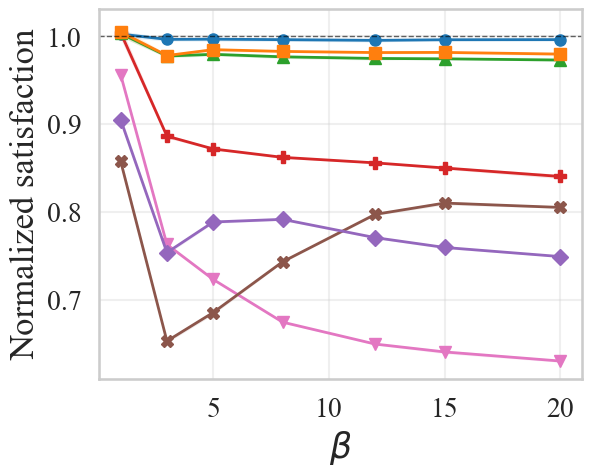}
        \caption{Normalized cumulative satisfaction under varying $\beta$.}
        \label{fig:beta_sweep}
    \end{subfigure}
    \hfill
    \begin{subfigure}[t]{0.32\linewidth}
        \centering
        \includegraphics[width=\linewidth,trim=0 8pt 0 0,clip]{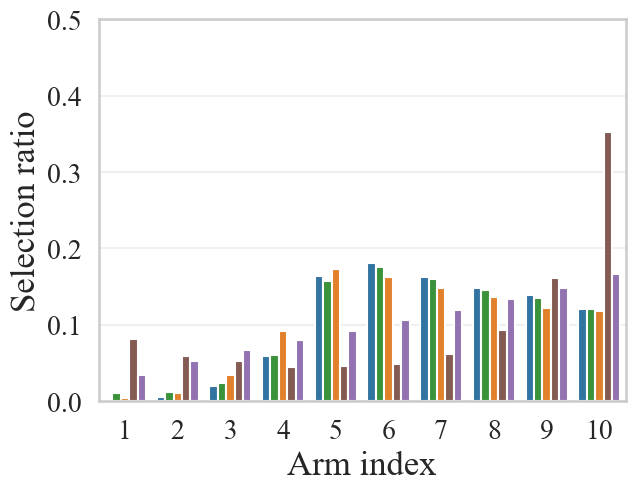}
        \caption{Selection share over the entire horizon.}
        \label{fig:selection_hist}
    \end{subfigure}
    \hfill
    \begin{subfigure}[t]{0.32\linewidth}
        \centering
        \includegraphics[width=\linewidth,trim=0 8pt 0 0,clip]{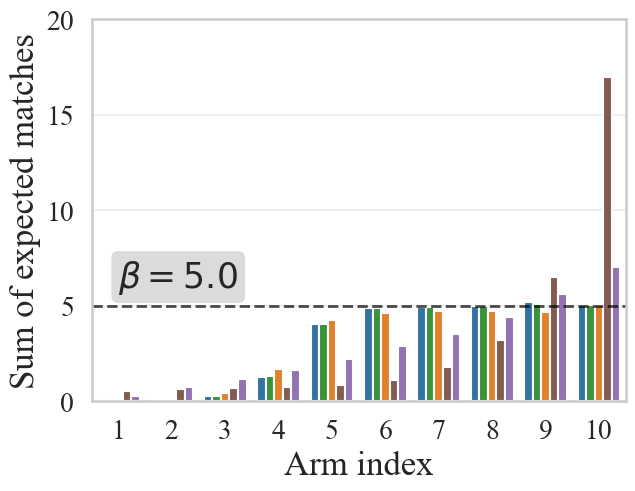}
        \caption{Average expected matches of each arm.}
        \label{fig:expected_matches_hist}
    \end{subfigure}
    \caption{Figures (a) and (b) use the default setting ($N=50$, $K=10$, $T=10000$, $\lambda=0.5$, $\beta=5.0$). Figures (c) and (d) vary $\lambda$ and $\beta$, respectively, with all other parameters fixed. Figures (e) and (f) show learned allocations for $\lambda=1$ and $\beta=5.0$: selection shares over the full horizon and arm-wise average expected matches over the last $100$ steps. Solid lines show run means, and shaded regions indicate $95$\% confidence intervals where applicable.}
    \label{fig:main_results}
    \vspace{-10pt}
\end{figure*}
We consider four settings.
Fix $N=50$ and $K=10$ for all settings.
The default comparisons use $T=10000$, $\lambda=0.5$, and $\beta=5.0$ over $10$ runs. 
The $\lambda$- and $\beta$-sweeps vary the respective parameter with $T=5000$ over $5$ runs. 
Histogram analyses use $T=5000$, $\lambda=1$, and $\beta=5.0$ with $5$ runs.
We compare overall performance under the default setting, examine the effect of popularity concentration and satisfaction saturation by varying $\lambda$ and $\beta$, and inspect the learned allocations.

We evaluate the learning performance of the proposed CAB algorithms in terms of cumulative satisfaction and matches.
\cref{fig:satisfaction_timestep} compares average cumulative satisfaction per round, and \cref{fig:matches_timestep} reports average cumulative matches per round.
As shown in \cref{fig:satisfaction_timestep}, all CAB variants substantially outperform the Max match and FairX, indicating that they successfully learn to optimize the satisfaction objective.
CAB-UCB achieves the highest average cumulative satisfaction throughout the horizon.
As intended, Max match yields the most matches in \cref{fig:matches_timestep}. 
However, this does not translate into maximizing satisfaction, since satisfaction is a concave function of the assigned users' expected matches.
Under this fixed configuration, CAB-UCB achieves high cumulative satisfaction earlier than CAB-TS and the one-pass variant. 
This pattern is at least consistent with its sharper dependence on $d$ and $N$ in the regret bound, although the theory does not directly predict finite-horizon transients and the guarantees are stated under different oracle assumptions.

We vary $\lambda$ and $\beta$ to test whether the proposed methods remain effective when arm popularity is concentrated and when satisfaction strongly saturates.
The $\lambda$-sweep tests robustness to preference concentration across users, while the $\beta$-sweep tests robustness to different concavity levels in the arm-side satisfaction function.
In \cref{fig:lambda_sweep,fig:beta_sweep}, each point reports the cumulative satisfaction under the corresponding parameter setting, normalized by that of a reference allocation computed using the true parameter and the same allocation routine as CAB-UCB.

The $\lambda$-sweep in \cref{fig:lambda_sweep} examines robustness to preference concentration across users.
As $\lambda$ increases, users rank arms more similarly, causing match-oriented algorithms to over-concentrate assignments on a few popular arms, even after their satisfaction is already close to the saturation level $\beta$.
This increases the gap between maximizing matches and maximizing satisfaction, since additional matches on well-served arms yield little marginal satisfaction.
Consistent with this mechanism, the normalized cumulative satisfaction of Max match decreases with $\lambda$, whereas CAB-UCB remains nearly flat and stays close to the reference value across the entire sweep.
The gap to FairX further suggests that reducing exposure imbalance alone is insufficient unless the allocation is explicitly aligned with the saturation structure of satisfaction.

The $\beta$-sweep in \cref{fig:beta_sweep} tests robustness to satisfaction saturation.
As $\beta$ increases, $r(x)=\min\set{x,\beta}$ becomes closer to linear over a wider range, and the satisfaction objective becomes less sensitive to over-allocation to already well-served arms.
Accordingly, Max match improves with $\beta$, while the proposed methods maintain strong performance across the entire sweep.
These results support the claim that explicitly modeling arm-side satisfaction is particularly important when utility saturates strongly, while CAB-UCB remains robust across different saturation regimes.

% =======================

% We next examine whether our methods avoid excessive concentration on a small number of arms.
% \cref{fig:selection_hist} shows the share of arm selections over the entire horizon.
% When $\lambda=1$, all users prefer arms in the same order. 
% Thus, Max match concentrates heavily on the most popular arms, whereas our methods select the arms at more balanced rates, except for the least popular ones.
% FairX, in contrast, also allocates users to unpopular arms that are rarely selected by tur methods, because it emphasizes fairness with respect to expected matches.
% \cref{fig:expected_matches_hist} reports the arm-wise total expected matches, averaged over the last $100$ steps.
% With $\beta=5.0$, Max match assigns high expected matches mainly to the most popular arms, whereas the proposed methods allocate not only to the popular arms but also to several moderately popular arms so that their expected matches are kept around the saturation threshold.
% These results suggest that our methods learn allocations that are well aligned with the saturation structure of the satisfaction objective through the estimations and learning strategies.

The allocation histograms in \cref{fig:selection_hist,fig:expected_matches_hist} examine whether the proposed methods avoid excessive arm concentration.
\cref{fig:selection_hist} shows selection shares over the full horizon. 
When $\lambda=1$, Max match heavily favors the most popular arms because all users share the same arm ranking.
In contrast, the proposed methods select arms at more balanced rates, except for the least popular ones.
FairX also reduces concentration, but allocates to unpopular arms rarely selected by the proposed methods, as it targets fairness in expected matches rather than the saturation structure of arm-side utility.
\cref{fig:expected_matches_hist} reports arm-wise total expected matches averaged over the last $100$ steps.
Max match assigns high expected matches mainly to the most popular arms, often exceeding the saturation threshold $\beta=5.0$.
The proposed methods instead allocate not only to the most popular arms but also to several moderately popular arms, keeping expected matches closer to the saturation threshold.
These results support the claim that CAB improves satisfaction by aligning allocations with diminishing returns, not merely by reducing imbalance.
\looseness=-1

Additional experimental results, including runtime comparisons, sweeps over other parameters, and further details of the experimental setup, are provided in \cref{app:experiment}.

\begin{comment}
Next, we evaluate whether the one-pass variant yields a computational advantage.
\cref{fig:runtime_onepass} compares the average execution time per round for different horizon lengths.
The one-pass variant is substantially faster than the MLE-based methods, and its per-round runtime remains nearly constant across different values of $T$.
In contrast, the per-round runtime of CAB-UCB and CAB-TS increases noticeably as $T$ grows.
This trend is consistent with the design of the one-pass variant, which avoids solving a regularized MLE at every round.
\end{comment}

\vspace{-2pt}
\section{Conclusions}
\vspace{-2pt}
We proposed CAB, developed its algorithms, established regret upper bounds, and conducted experimental evaluations of its performance.
We conclude with several future research directions.
One possible direction is to improve the dependence on $\kappa_\mu$.
In more specific settings, recent analyses have reduced such dependence under a self-concordance assumption \citep{liu_2025_combinatorial_log_bandits,zhang2025generalizedlinearbanditsoptimal}.
Second, the current CAB-TS analysis assumes an exact optimization oracle, whereas CAB-UCB only requires an approximate one.
Relaxing this assumption would require overcoming the proof obstacle that an approximate optimization oracle may return an unfavorable allocation among near-optimal allocations.
% \looseness=-1

% We proposed CAB, developed its algorithm, established regret upper bounds, and further conducted experimental evaluations of its performance.
% We conclude with several future research directions.
% One possible direction is to improve the dependence on $\kappa_\mu$.
% An approach toward this improvement is considered in \cite{liu_2025_combinatorial_log_bandits,zhang2025generalizedlinearbanditsoptimal}.
% However, instead of assuming Lipschitz continuity with respect to $\mu$, these approaches adopt a self-concordance assumption.
% Another natural extension is to consider dynamic environments, in which the set of available arms is not fixed but may evolve, with some arms leaving as a result of the allocation process and new arms emerging.
% Addressing such settings would require algorithms that can adapt to both statistical uncertainty and changes in the action set.

% \section*{Acknowledgments}
% aaa

\bibliography{reference}
\bibliographystyle{plainnat}
\newpage
% \clearpage
\appendix

% \onecolumn
\section{Notations}
\label{app:notation}
\cref{tab: notation} summarizes the symbols used in this paper.

\begin{table}[H]
\centering
\caption{Notation}
\renewcommand{\arraystretch}{1.2}
% \small
\begin{adjustbox}{max width=\textwidth}
\begin{tabular}{ll}
\toprule
Symbol & Meaning \\ 
\midrule
$T \in \mathbb{N}$ & time horizon \\
$N \in \mathbb{N}$ & the number of users\\
$K \in \mathbb{N}$ & the number of arms\\ 
$d \in \mathbb{N}$ & dimension of feature vector \\
$\sigma$ & sub-Gaussian parameter \\
$\kappa_\mu$ & parameter satisfying \cref{asp:kappa}\\
% \midrule
$L_\mu$ & Lipschitz constant of function $\mu$\\
$L_r$ & Lipschitz constant of function $r$\\
$D>0$ & Upper bound on $\nrm{\bm{\theta}^\ast}_2$ \\
$\Theta= \set{\bm{\theta} \in \R^d : \nrm{\bm{\theta}}_2 \leq D}$ & bounded parameter space \\
\midrule
$\mu\colon\R\to\R_{\geq0}$ & expectation of feedback \\
$r\colon\R_{\geq0}\to\R_{\geq0}$& satisfaction function \\
$\bm{\phi}_t(i,a)$ & feature vector according to user $i$ and arm $a$ at round $t$\\
$y_t(i)$ & feedback for $i$ at round $t$\\
$\Pi=\set{\pi:[N]\to[K]}$ & set of all functions from $[N]$ to $[K]$\\
$f_t(\pi;\bm{\theta}) = \sum_{a\in\brk{K}} r\prn{\sum_{i\in\pi^{-1}(a)}\mu(\bm{\phi}_t(i,a)^\top\bm{\theta})}$ & cumulative expected satisfaction at round $t$ \\
$\pi_t\in\Pi$ & chosen allocation at round $t$ \\
$\pi_t^\ast=\argmax_{\pi\in\Pi} f_t(\pi;\bm{\theta}^\ast)$ & optimal allocation at round $t$ \\
$\bm{x}_t(i)=\bm{\phi}_t(i,\pi_t(i))$ & chosen feature vector for user $i$ at round $t$ \\
$\bm{V}_{t}=\sum_{s=1}^{t-1}\sum_{i=1}^N\bm{x}_{s}(i)\bm{x}_{s}(i)^\top + \lambda_0 \bm{I} $ & information matrix at round $t$ \\
$\bm{\theta}^\ast$ & unknown parameter \\
$\bar{\bm{\theta}}_t$ & MLE of $\bm{\theta}^\ast$ at round $t$ \\
$\mathcal{R}_T^\alpha=\sum_{t=1}^{T}\prn*{\alpha f_t(\pi_t^\ast;\bm{\theta}^\ast) - f_t(\pi_t;\bm{\theta}^\ast)}$ & approximate regret \\
$\mathcal{R}_T=\sum_{t=1}^{T}\prn*{ f_t(\pi_t^\ast;\bm{\theta}^\ast) - f_t(\pi_t;\bm{\theta}^\ast)}$ & standard regret \\
\bottomrule
\end{tabular}
\label{tab: notation}
\end{adjustbox}
\end{table}

\section{Related work}
\label{app:related_work}
We introduce several related works.
\paragraph{Contextual Combinatorial Semi-bandits and Generalized Linear Models}
From a technical perspective, a closely related problem setting is that of contextual combinatorial semi-bandits (CCS).
CCS was first studied by \citet{siam_qin2014contextual}.
CCS are problems in which, at each round, one first observes the arms together with their associated contexts, then selects a combination of arms based on these observations and past outcomes, and finally observes the reward, which depends on the chosen contexts and an unknown parameter.
They consider a general framework that includes nonlinear reward functions and propose a UCB algorithm, while assuming a linear model for the feedback.
In the linear-feedback setting, their algorithm achieves a regret upper bound of $\tilde{O}\prn{{\max\set{\sqrt{d}, \sqrt{N}}}\sqrt{dNT}}$.
To the best of our knowledge, the best known bound is $\tilde{O}\prn{d\sqrt{NT}+dN}$, which is achieved by the UCB algorithm of \citet{takemura2021near_combinatorial}.
In \citet{ICDM2019_Takemura_ArmWise_Randomization}, in addition to UCB algorithms, a TS algorithm was studied under the setting where the reward is linear in the feedback, and it was shown that a regret upper bound of $\mathcal{R}_T = \tilde{O}\prn{\max\set{d, \sqrt{dN}}\sqrt{dNT}}$ can be achieved.
For CCS with linear reward functions, the existing works have investigated lower bounds in addition to upper bounds.\footnote{Lower bounds for general reward functions are not meaningful. Indeed, if the reward is constant, the regret can always be reduced to $0$.}
\citet{aistats_kveton2015tight} established a lower bound for non-contextual combinatorial semi-bandits in terms of the number of base arms and the action size. Applying their bound to an instance with $d$ base arms and action size $N$ yields
$\Omega\prn{\min\set{\sqrt{dNT},NT}}$.
For CCS, an improved lower bound of $\Omega\prn{\min\set{d\sqrt{NT}+dN,NT}}$ was derived by \citet{takemura2021near_combinatorial}.

As a related research direction, generalized linear (contextual) bandits have also been studied.
In this framework, a generalized linear model is adopted as the feedback model, rather than a linear model, and the contextual bandits setting was first investigated by \citet{NIPS_2010_Filippi_GLM}.
In the non-combinatorial case, existing work on GLM has primarily focused on UCB algorithms \citep{icml2017glmcontextbandits,nips_jun_2017_glm,neurips_lee_2024_glm,zhang2025generalizedlinearbanditsoptimal}, while TS algorithms have also been developed \citep{nips_jun_2017_glm,aistats2021_qin_glm_ts,AAAI_Kim_Lee_Paik_2023_TS_GLM}.
On the other hand, to the best of our knowledge, there are only a few studies that consider combinatorial settings.
One notable example is \citet{liu_2025_combinatorial_log_bandits}, which uses the UCB algorithm and considers a contextual setting in which the feedback is sampled from a Bernoulli distribution with its mean specified by a logistic model.

\paragraph{Fair Allocation}
One line of research with a closely related idea is fair resource allocation, although the technical relevance is limited.
Among them, some studies use an evaluation metric called $\alpha$-fairness, namely $fair_\alpha(x)=\frac{x^{1-\alpha}-1}{1-\alpha}$ ($0\leq\alpha<1$). 
The function $fair_\alpha(x)$ is concave, and its use is close in spirit to the idea of this work \citep{NEURIPS2020_semih_fair_allocation,acm_salem_fair_allocation_2022,NEURIPS2023_sinha_fair_resource_allocation}.
However, they consider an objective function of the form $\sum_{i}fair(\sum_{t=1}^T w_t(i))$, where $w_t(i)$ denotes the utility for $i$ at round $t$, which evaluates the overall fairness across the entire horizon. 
This differs from our setting.
If one emphasizes the final fairness of the allocation over the whole period, their objective function is more appropriate. 
In contrast, we consider that dissatisfaction over a short period may also lead to churn.
Therefore, as an objective function in CAB, we argue that $\sum_{t=1}^T f_t$ is appropriate.

Moreover, there are technical differences. 
Representative aspects are the feedback model and the action space.
Since the true utility value or the reward vector is observed at the end of each round in their works, their feedback assumption is stronger than the setting we consider (\eg the platform observes only whether a match occurred).
Regarding the action space, while \citet[Section 4]{NEURIPS2023_sinha_fair_resource_allocation} discusses integrality constraints, their formulation essentially considers fractional decisions rather than combinatorial ones.
Thus, although the high-level idea is similar to our work, the technical setting is entirely different, and a direct comparison of theoretical results is impossible.

\paragraph{Bandits With Fairness Constraints}
One line of research addressing fairness in the context of the bandit problem is \citet{joseph2016_fairness}.
In their work, fairness is defined as allocating arms without favoring any particular arm, based on their expected rewards.
On the other hand, a problem that deals with a concept of fairness similar to that considered in this study is the combinatorial sleeping bandits with fairness constraints proposed by \citet{li2019combinatorial_fairness}.
In this problem, for each arm $a$, a minimum selection count $n_a$ is specified, and the objective is to maximize the cumulative expected reward under this constraint.
\citet{li2019combinatorial_fairness} provided a UCB algorithm, while \citet{huang2020thompson_fairness} later proposed a TS algorithm.
\citet{xu2020combinatorial_fairness} focused on the objective function, considering a setting where the total reward $R$, obtained as a linear combination of the rewards from the arms, is transformed by a strictly concave function $f$, resulting in the objective function $f(R)$.
More recently, studies have considered constraints that combine knapsack constraints with fairness constraints \citep{liu2022combinatorial_knap_fairness}.

\paragraph{Bandits With Knapsacks}
The Bandits with Knapsacks (BwK) problem extends the standard bandit setting by introducing knapsack-type resource constraints.
Given budget limitations on multiple resources, the learner can no longer obtain rewards once any resource budget is depleted.
The goal is to maximize the cumulative expected reward.
As a method of introducing constraints into the multi-armed bandits, \citet{badanidiyuru2013bwk} proposed the BwK framework, which incorporates budget constraints.
Building on this framework, subsequent studies have extended the knapsack constraints to linear contextual bandits \citep{badanidiyuru2014resourceful_bwk,agrawal2016linear_bwk} and combinatorial semi-bandits \citep{sankararaman2018combinatorial_bwk}.

\section{NP-hardness}
\label{app:detail_setting}
We discuss the computational complexity of this problem.
As the following theorem shows, computing $\pi^\ast_t$ and $\pi_t$ is NP-hard.
\begin{theorem}
    \label{thm:nphard}
    Consider a set of $N$ items and $K$ players, and let $r\colon \mathbb{R}_{\geq0} \to \mathbb{R}_{\geq 0}$ be a monotone concave function.
    We consider the problem of maximizing $\sum_{i\in[K]} v_i(S_i)$ subject to $\bigcup_{i\in[K]} S_i = [N]$ and $S_i \cap S_j = \emptyset$ for $i \neq j$, where $v_a \colon 2^{[N]} \to \mathbb{R}_{\geq 0}$ is defined by $v_a(S) = r\left(\sum_{i \in S} w_{i,a}\right)$, and $w_{i,a} > 0$ denotes the value of item $i$ for agent $a$.
    Then, there exists a concave and monotone increasing function $r$ for which the above problem is NP-hard.
\end{theorem}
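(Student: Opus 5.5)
We reduce from a strongly NP-hard partitioning problem; the natural choice is 3-PARTITION (or PARTITION, which suffices for plain NP-hardness). We take $r(x)=\min\{x,B\}$, which is concave, monotone nondecreasing and bounded on $\R_{\geq0}$. (If strict monotonicity is required, the variant $r(x)=\min\{x,B\}+\varepsilon x$ with tiny $\varepsilon$ should work with the same threshold argument, or one can use $r(x)=\min\{x,B\}$ directly when ``monotone increasing'' is read as nondecreasing.)

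Given a 3-PARTITION instance with positive integers $s_1,\dots,s_{3K}$, $\sum_i s_i = KB$, and $B/4<s_i<B/2$, we set $N=3K$ items and $K$ players, with identical weights $w_{i,a}=s_i>0$ for all $a$, and the $r$ above. Any partition $(S_1,\dots,S_K)$ then has value $\sum_a \min\{\sum_{i\in S_a}s_i,\,B\}\le \sum_a\sum_{i\in S_a}s_i = KB$. Equality holds iff every bundle has load at most $B$; since the loads sum to $KB$, this forces every load to equal $B$. So the optimum equals $KB$ iff the 3-PARTITION instance is a yes-instance (the size bounds force triples, though this is not even needed).

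Hence an exact maximizer decides 3-PARTITION. The construction is polynomial, and because 3-PARTITION is strongly NP-hard, the numbers may be encoded in unary, so no issue arises from numeric magnitude. For the $\varepsilon$ variant, we would choose $\varepsilon<1/(KB)$ so that the extra term $\varepsilon KB$ is the same constant for all partitions (the total load is fixed). In that case it changes nothing, which is cleaner than it first looks.

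The main care point is matching the theorem's exact formulation: the function $r$ must be fixed independently of the instance, whereas here $B$ depends on the instance. I would handle this by rescaling the weights to $w_{i,a}=s_i/B$ and using the single fixed function $r(x)=\min\{x,1\}$. The rescaled weights are rationals of polynomial size, so the reduction remains polynomial and the threshold becomes $K$. This gives one concave monotone $r$ for which the problem is NP-hard, as claimed.
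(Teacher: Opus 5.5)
Your reduction is correct, but it takes a different route from the paper's.

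The paper reduces from Subset Sum using only two players with \emph{asymmetric} weights $w_{i,1}=Va_i$, $w_{i,2}=a_i$ (where $V=\sum_i a_i$) and $r(x)=\min\{Vt,x\}$. Player~1 is a high-value bin that saturates exactly at $Vt$, while player~2 absorbs the remaining items at unit rate and never reaches the cap. A three-case check on whether $\sum_{i\in S}a_i$ is below, equal to, or above $t$ then shows that the optimum equals $Vt+V-t$ iff a subset sums to $t$.

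You instead give all players \emph{identical} valuations, so the total load is fixed. Writing $L_a=\sum_{i\in S_a}s_i$, the objective becomes $KB-\sum_a\max\{L_a-B,0\}$. This makes the equivalence immediate, and via 3-PARTITION it gives strong NP-hardness when $K$ is part of the input. The paper's asymmetric weights are what let it handle an arbitrary Subset Sum target $t$ rather than an equal split. Your PARTITION variant still recovers hardness for $K=2$.

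Your rescaling to the single fixed $r(x)=\min\{x,1\}$ also addresses a point the paper's proof leaves implicit. The paper's cap $Vt$ depends on the instance, so as written it only shows hardness for an instance-dependent $r$. The same rescaling ($w_{i,1}=a_i/t$, $w_{i,2}=a_i/(Vt)$, $r(x)=\min\{x,1\}$) would repair it.

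Finally, the paper reads ``monotone increasing'' as nondecreasing, so your $\varepsilon x$ perturbation is optional. It works for every $\varepsilon>0$, because identical valuations make $\varepsilon\sum_a L_a$ a constant; the restriction $\varepsilon<1/(KB)$ is unnecessary.
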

Our proof uses an approach similar to that of \citet[Theorem 10]{ACM2001_Lehmann_Combinatorial_auctions}.
\begin{proof}
    We will perform a reduction from the well-known NP-complete problem ``Subset Sum'' (as detailed in \citet{Garey1979_nphard}). 
    The problem is as follows: given a sequence of integers $a_1,\dots,a_m$ and a target total $t$, the objective is to determine if a subset $S$ of these integers exists such that the sum of the elements in $S$ equals $t$ (i.e., $\sum_{i\in S}a_i=t$). 
    Based on this input, we will construct two valuations for the $m$ items.

    We consider the decision problem of determining whether $\max f(\calf)=Vt+V-t$ holds for $\calf=\set{U_1,U_2}$ with $U_1\cap U_2=\emptyset$, $U_1\cup U_2=U$, $f(\calf)=v_1(U_1)+v_2(U_2)$, $r(x)=\min\set{Vt, x}$, and $V=\sum_{i\in U}a_i$.
    Let $w_{i,1} = Va_i, w_{i,2} = a_i$. We allocate $S$ to valuation $1$ and $S^c$ to valuation $2$. We examine three cases: $\sum_{i\in S}a_i=t$, $\sum_{i\in S}a_i<t$, and $\sum_{i\in S}a_i>t$.
    \begin{itemize}
        \item If $\sum_{i\in S}a_i=t$, then $v_1(S)+v_2(S^c)=Vt+V-t$.
        \item If $\sum_{i\in S}a_i<t$, then $v_1(S)+v_2(S^c)=V\sum_{i\in S}a_i+V-\sum_{i\in S}a_i< Vt+V-t$.
        \item If $\sum_{i\in S}a_i>t$, then $v_1(S)+v_2(S^c)=Vt+V-\sum_{i\in S}a_i< Vt+V-t$.
    \end{itemize}
    Therefore, the instance of ``Subset Sum'' is a Yes-instance precisely when the proposition is satisfied, and a No-instance otherwise.
\end{proof}
\section{Details of \cref{subsec:ucb,subsec:ts}}\label{app:details_main_result}
In this section, we provide the omitted proofs in \cref{subsec:ucb,subsec:ts}.

\subsection{Technical lemmas}
Here, in preparation for the proofs of the theorems, we introduce technical lemmas that are commonly used in the proofs of \cref{thm:main_regret_ucb,thm:bound_TS}.

The following lemma guarantees that the regularized MLE $\bar{{\bm{\theta}}}_t$ remains close to the true parameter with high probability.
While \citet[Lemma 9]{aistas_kveton20a_glm} relies on an initial exploration phase to establish this property, an analogous result can be derived by employing regularization instead.
\begin{lemma}
    \label{lem:nrm_theta_bar_t}
    Assume that $\lambda_0\geq \frac{\sigma^2}{\kappa_\mu^{2}} \prn*{d\log\prn*{1+\frac{NT}{d}}+2\log\frac{1}{\delta}}$.
    Then, for any $\delta\in(0,1)$, with probability at least $1-\delta$, for all $t\in[T]$,
    \[\nrm{\bar{{\bm{\theta}}}_t-{\bm{\theta}}^\ast}_2\leq D+1.\]
\end{lemma}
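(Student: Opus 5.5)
The plan is to compare the gradient of the regularized negative log-likelihood at $\bar{\bm{\theta}}_t$ and at $\bm{\theta}^\ast$ via a mean-value argument, and then control the resulting noise term with a self-normalized martingale bound.

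\textbf{First-order condition.} Define $\bm{S}_t=\sum_{s<t}\sum_{i}\eta_s(i)\bm{x}_s(i)$ with $\eta_s(i)=y_s(i)-\mu(\bm{x}_s(i)^\top\bm{\theta}^\ast)$, and
\[G_t(\bm{\theta})=\sum_{s<t}\sum_i \mu(\bm{x}_s(i)^\top\bm{\theta})\bm{x}_s(i)+\kappa_\mu\lambda_0\bm{\theta}.\]
The regularized objective is strictly convex, so its minimizer $\bar{\bm{\theta}}_t$ satisfies $G_t(\bar{\bm{\theta}}_t)=\sum_{s<t}\sum_i y_s(i)\bm{x}_s(i)$. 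Subtracting $G_t(\bm{\theta}^\ast)$ gives
\[G_t(\bar{\bm{\theta}}_t)-G_t(\bm{\theta}^\ast)=\bm{S}_t-\kappa_\mu\lambda_0\bm{\theta}^\ast.\]
By the mean value theorem the left side equals $\bm{M}_t(\bar{\bm{\theta}}_t-\bm{\theta}^\ast)$, where
\[\bm{M}_t=\sum_{s<t}\sum_i \alpha_s(i)\bm{x}_s(i)\bm{x}_s(i)^\top+\kappa_\mu\lambda_0\bm{I},\qquad \alpha_s(i)=\int_0^1\dot\mu\bigl(\bm{x}_s(i)^\top(v\bar{\bm{\theta}}_t+(1-v)\bm{\theta}^\ast)\bigr)\,dv.\]

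\textbf{Main obstacle: circularity.} Bounding $\alpha_s(i)\ge\kappa_\mu$ requires the segment to lie in $\mathcal{B}$, i.e.\ $\nrm{\bar{\bm{\theta}}_t-\bm{\theta}^\ast}_2\le\max\{D+1,2D\}$, which is essentially what we want to prove. I would break this with a continuity/convexity argument. Suppose the distance exceeds $D+1$. Let $\bm{\theta}'$ be the point on the segment at distance exactly $D+1$ from $\bm{\theta}^\ast$. By monotonicity of the gradient of a convex function along the segment, $\langle G_t(\bm{\theta}')-G_t(\bm{\theta}^\ast),\bm{u}\rangle\le\langle G_t(\bar{\bm{\theta}}_t)-G_t(\bm{\theta}^\ast),\bm{u}\rangle$ for the unit direction $\bm{u}$ of the segment. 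Along the segment from $\bm{\theta}^\ast$ to $\bm{\theta}'$, which lies in $\mathcal{B}$, the derivative is at least $\kappa_\mu(\bm{u}^\top\bm{V}_t\bm{u})\ge\kappa_\mu\lambda_0$. Hence
\[\kappa_\mu(D+1)\,\bm{u}^\top\bm{V}_t\bm{u}\le \bm{u}^\top(\bm{S}_t-\kappa_\mu\lambda_0\bm{\theta}^\ast).\]

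\textbf{Noise term.} By Cauchy--Schwarz, $\bm{u}^\top\bm{S}_t\le\nrm{\bm{S}_t}_{\bm{V}_t^{-1}}\sqrt{\bm{u}^\top\bm{V}_t\bm{u}}$. I then invoke the self-normalized bound of Abbasi-Yadkori et al., flattening the $N$ per-round observations into a single sequence. This gives, simultaneously for all $t$ with probability $1-\delta$,
\[\nrm{\bm{S}_t}_{\bm{V}_t^{-1}}\le\sigma\sqrt{d\log(1+NT/(d\lambda_0))+2\log(1/\delta)}\le\kappa_\mu\sqrt{\lambda_0},\]
where the last step uses the assumption on $\lambda_0$ (the version with $\lambda_0\ge1$, or the given form, dominates the log term). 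Also $\bm{u}^\top\kappa_\mu\lambda_0\bm{\theta}^\ast\le\kappa_\mu\lambda_0 D\le\kappa_\mu D\,\bm{u}^\top\bm{V}_t\bm{u}$.

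\textbf{Conclusion.} Write $q=\bm{u}^\top\bm{V}_t\bm{u}\ge\lambda_0$. Combining the bounds and dividing by $\kappa_\mu$,
\[(D+1)q\le\sqrt{\lambda_0 q}+Dq.\]
This gives $q\le\sqrt{\lambda_0q}$, i.e.\ $q\le\lambda_0$, so equality holds throughout. A strict version of the argument (taking the distance strictly greater than $D+1$ in place of $\bm{\theta}'$, or noting the resulting strict inequality) yields a contradiction. Hence $\nrm{\bar{\bm{\theta}}_t-\bm{\theta}^\ast}_2\le D+1$ for all $t$ on this event. The delicate points are the strictness at the boundary and matching the logarithmic term to the stated condition on $\lambda_0$.
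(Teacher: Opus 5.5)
Your argument is correct, but the localization step takes a different route from the paper. The paper applies \citet[Lemma A]{chen1999} to the map $\bm{\theta}\mapsto\bm{V}_t^{-1/2}(\bm{G}_t(\bm{\theta})-\bm{G}_t(\bm{\theta}^\ast))$. The Jacobian dominates $\kappa_\mu\bm{V}_t$ on the ball of radius $D+1$ around $\bm{\theta}^\ast$, so the condition $\|\bm{S}_t-\kappa_\mu\lambda_0\bm{\theta}^\ast\|_{\bm{V}_t^{-1}}\le\kappa_\mu(D+1)\sqrt{\lambda_{\min}(\bm{V}_t)}$ already places $\bar{\bm{\theta}}_t$ in that ball. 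The triangle inequality then reduces this to $\|\bm{S}_t\|_{\bm{V}_t^{-1}}\le\kappa_\mu\sqrt{\lambda_0}$, which is exactly your sufficient condition. From there the self-normalized bound finishes both proofs the same way.

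You instead project onto the single direction $\bm{u}$ and use monotonicity of $s\mapsto\bm{u}^\top(\bm{G}_t(\bm{\theta}^\ast+s\bm{u})-\bm{G}_t(\bm{\theta}^\ast))$. This is more elementary and self-contained. It needs only $\dot\mu=\ddot m\ge0$ globally (true because $\ddot m$ is a conditional variance), plus the $\kappa_\mu$ bound on the part of the segment inside $\mathcal{B}$. It also makes visible why the circularity is harmless. The paper's route additionally needs injectivity of $\bm{G}_t$ to invoke Chen's lemma; that follows from the same global monotonicity but is not spelled out. In exchange, the paper's route is the standard one in the GLM-bandit literature.

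Your strictness concern closes immediately. The regularizer gives that directional derivative a floor of $\kappa_\mu\lambda_0>0$ for every $s$. So if $\|\bar{\bm{\theta}}_t-\bm{\theta}^\ast\|_2>D+1$, the inequality is strict and yields $q<\lambda_0$, contradicting $q\ge\lambda_0$.

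The log-term mismatch you flag is genuine, and the paper has the same issue. The martingale bound carries $\log(1+NT/(d\lambda_0))$, which is dominated by the assumed $\log(1+NT/d)$ only when $\lambda_0\ge1$. That requirement, or a version of the condition with $\lambda_0$ inside the logarithm, should be stated explicitly.
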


\begin{proof}
    Let
    $
        S_t = \sum_{s =1}^{t-1}\sum_{i=1}^N \big(y_s (i)-\mu({\bm{x}_{s}} (i)^\top {\bm{\theta}}^\ast)\big){\bm{x}_{s}} (i).
    $
    First, using \citet[Lemma A]{chen1999}, we show that the following proposition holds:
    \[
        \nrm{S_t-\kappa_\mu{\lambda_0}{\bm{\theta}}^\ast}_{{\bm{V}_{t}^{-1}}}\leq \kappa_\mu (D+1) \sqrt{{\lambda_0}}
        \Rightarrow
        \nrm{\bar{{\bm{\theta}}}_t-{\bm{\theta}}^\ast}_2\leq D+1.
    \]
Define the map
$
{\bm{G}_{t}}({\bm{\theta}})=\sum_{s=1}^{t-1}\sum_{i=1}^N \mu({\bm{x}_{s}}(i)^\top{\bm{\theta}}){\bm{x}_{s}}(i) + \kappa_\mu{\lambda_0}{\bm{\theta}} .
$
From the definition of $\bar{{\bm{\theta}}}_t$, we have 
% \[
$
{\bm{G}_{t}}(\bar{{\bm{\theta}}}_t)=\sum_{s=1}^{t-1}\sum_{i=1}^N y_s(i){\bm{x}_{s}}(i).
$
Thus, it holds that
\[
{\bm{G}_{t}}(\bar{{\bm{\theta}}}_t)-{\bm{G}_{t}}({\bm{\theta}}^\ast)=
\sum_{s=1}^{t-1}\sum_{i=1}^N (y_s(i)-\mu({\bm{x}_{s}}(i)^\top{\bm{\theta}}^\ast)){\bm{x}_{s}}(i) - \kappa_\mu{\lambda_0}{\bm{\theta}}^\ast
= S_t-\kappa_\mu{\lambda_0}{\bm{\theta}}^\ast.
\]
Moreover, for any ${\bm{\theta}}\in\set{{\bm{\theta}}\mid\nrm{{\bm{\theta}}-{\bm{\theta}}^\ast}_2\leq D+1}$, from \cref{asp:kappa}, we have
\begin{align*}
\bm{\nabla}_{\bm{\theta}} {\bm{G}_{t}}({\bm{\theta}})
&=\sum_{s=1}^{t-1}\sum_{i=1}^N \dot\mu({\bm{x}_{s}}(i)^\top{\bm{\theta}}){\bm{x}_{s}}(i){\bm{x}_{s}}(i)^\top + \kappa_\mu{\lambda_0} {\bm{I}}\\
&\succeq \kappa_\mu\sum_{s=1}^{t-1}\sum_{i=1}^N {\bm{x}_{s}}(i){\bm{x}_{s}}(i)^\top + \kappa_\mu{\lambda_0} {\bm{I}}
= \kappa_\mu {\bm{V}_{t}}.
\end{align*}
Therefore, applying \citet[Lemma A]{chen1999} 
to the map ${\bm{\theta}}\mapsto {\bm{V}_{t}}^{-1/2}\prn{{\bm{G}_{t}}({\bm{\theta}})-{\bm{G}_{t}}({\bm{\theta}}^\ast)}$ 
yields the proposition
\[
\nrm{S_t-\kappa_\mu{\lambda_0}{\bm{\theta}}^\ast}_{{\bm{V}_{t}^{-1}}}\leq \kappa_\mu (D+1)\sqrt{\lambda_{\min}({\bm{V}_{t}})}
\Rightarrow
\nrm{\bar{{\bm{\theta}}}_t-{\bm{\theta}}^\ast}_2\leq D+1.
\]
Since $\lambda_{\min}({\bm{V}_{t}})\geq {\lambda_0}$, we obtain
\begin{align*}
% \label{eq:prop_s_t}
\nrm{S_t-\kappa_\mu{\lambda_0}{\bm{\theta}}^\ast}_{{\bm{V}_{t}^{-1}}}\leq \kappa_\mu (D+1)\sqrt{{\lambda_0}}
\Rightarrow
\nrm{\bar{{\bm{\theta}}}_t-{\bm{\theta}}^\ast}_2\leq D+1.
\end{align*}

Next, we upper bound $\nrm{S_t-\kappa_\mu{\lambda_0}{\bm{\theta}}^\ast}_{{\bm{V}_{t}^{-1}}}$.
By the triangle inequality and ${\bm{V}_{t}}\succeq {\lambda_0} {\bm{I}}$, we have
\begin{align*}
\nrm{S_t-\kappa_\mu{\lambda_0}{\bm{\theta}}^\ast}_{{\bm{V}_{t}^{-1}}}
&\leq \nrm{S_t}_{{\bm{V}_{t}^{-1}}}+\kappa_\mu{\lambda_0}\nrm{{\bm{\theta}}^\ast}_{{\bm{V}_{t}^{-1}}}
\leq \nrm{S_t}_{{\bm{V}_{t}^{-1}}}+\kappa_\mu\sqrt{{\lambda_0}}\nrm{{\bm{\theta}}^\ast}_2\\
&\leq \nrm{S_t}_{{\bm{V}_{t}^{-1}}}+\kappa_\mu D \sqrt{{\lambda_0}},
\end{align*}
where the last inequality follows from $\nrm{{\bm{\theta}}^\ast}_2\leq D$.
Thus, we have 
\begin{align}
    \label{eq:prop_s_t}
    \nrm{S_t}_{{\bm{V}_{t}^{-1}}}\leq \kappa_\mu \sqrt{{\lambda_0}}
    \Rightarrow
    \nrm{\bar{{\bm{\theta}}}_t-{\bm{\theta}}^\ast}_2\leq D+1.
\end{align}
Here, from \citet[Theorem 1]{NIPS2011_Abbasi_bandit}, with probability at least $1-\delta$, for all $t\geq 1$, it holds that
% \[
% \nrm{S_t}_{{\bm{V}_{t}^{-1}}}^2
% \leq 2\sigma^2\log\!\Big(\frac{\det({\bm{V}_{t}})^{1/2}\det({\lambda_0} {\bm{I}})^{-1/2}}{\delta}\Big)
% \leq \sigma^2\Big(d\log\Big(1+\frac{}{d{\lambda_0}}\Big)+2\log\frac{1}{\delta}\Big).
% \]
    \begin{equation}
        \label{eq:bound_S_t}
        \nrm{S_t}_{{\bm{V}_{t}^{-1}}}^2\leq 2\sigma^2 \log\prn*{\frac{\det\prn{{\bm{V}_{t}}}^{1/2}\det\prn*{\lambda_0 {\bm{I}}}^{-1/2}}{\delta}}
        \leq \sigma^2 \prn*{d\log\prn*{1+\frac{NT}{d\lambda_0}}+2\log\frac{1}{\delta}},
    \end{equation}
    where the second inequality follows from $\det({\bm{V}_{t}})\leq \prn{\prn{\tr (\lambda_0 {\bm{I}}) + Nt}/d}^d=\prn{\lambda_0 + N t /d}^d$.
Thus, using \eqref{eq:bound_S_t} and the definition of $\lambda_0$, it holds that
$
\nrm{S_t}_{{\bm{V}_{t}^{-1}}} \leq \kappa_\mu \sqrt{{\lambda_0}}.
$
Therefore, by \eqref{eq:prop_s_t}, we have $\nrm{\bar{{\bm{\theta}}}_t - {\bm{\theta}}^\ast}_2\leq D+1$ for any $t$ with probability at least $1-\delta$.
\end{proof}

The following lemma shows that the regularized MLE also controls the error of the objective function.
\begin{lemma}
    \label{lem:bound_bar_ast}
    Assume that it holds that $\nrm{\bar{{\bm{\theta}}}_t-{\bm{\theta}}^\ast}_2\leq D+1$.
    Then, for any $\delta\in(0,1)$, with probability at least $1-\delta$, for all $t\in[T]$,
    \[\abs{f_t(\pi;{\bm{\theta}}^\ast)-f_t(\pi;\bar{{\bm{\theta}}}_t)}\leq c_1\sum_{i=1}^N \nrm{{\bm{\phi}}_t(i,\pi(i))}_{{\bm{V}_{t}^{-1}}},\]
    where $c_1= \kappa_\mu^{-1} L_r L_\mu \prn*{\sigma\sqrt{d\log\prn*{1+\frac{NT}{d\lambda_0}}+2\log\frac{1}{\delta}} + \kappa_\mu D\sqrt{\lambda_0}}$.
\end{lemma}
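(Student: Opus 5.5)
The plan is to reduce the nonlinear, non-separable objective error to a sum of user-wise link-function errors. Then each of those is controlled by the standard GLM confidence ellipsoid around the regularized MLE. The reduction has two steps.

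\emph{Step 1 (aggregation).} For fixed $\pi$, write
\[
\abs{f_t(\pi;\bm{\theta}^\ast)-f_t(\pi;\bar{\bm{\theta}}_t)}\leq \sum_{a\in[K]}\abs*{r\prn*{\textstyle\sum_{i\in\pi^{-1}(a)}\mu(\bm{\phi}_t(i,a)^\top\bm{\theta}^\ast)}-r\prn*{\textstyle\sum_{i\in\pi^{-1}(a)}\mu(\bm{\phi}_t(i,a)^\top\bar{\bm{\theta}}_t)}}.
\]
Both arguments are nonnegative by \cref{asp:kappa}, since $\bar{\bm{\theta}}_t$ lies in the region $\mathcal{B}$ under the hypothesis $\nrm{\bar{\bm{\theta}}_t-\bm{\theta}^\ast}_2\leq D+1$. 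So $r$ may be evaluated there, and its Lipschitz property applies. The triangle inequality inside the sum then gives
\[
\abs{f_t(\pi;\bm{\theta}^\ast)-f_t(\pi;\bar{\bm{\theta}}_t)}\leq L_r\sum_{i=1}^N\abs{\mu(\bm{\phi}_t(i,\pi(i))^\top\bm{\theta}^\ast)-\mu(\bm{\phi}_t(i,\pi(i))^\top\bar{\bm{\theta}}_t)}.
\]
Lipschitzness of $\mu$ followed by Cauchy--Schwarz in the $\bm{V}_t$ geometry bounds each summand by $L_\mu\nrm{\bm{\phi}_t(i,\pi(i))}_{\bm{V}_t^{-1}}\nrm{\bar{\bm{\theta}}_t-\bm{\theta}^\ast}_{\bm{V}_t}$. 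It therefore remains to show
\[
\nrm{\bar{\bm{\theta}}_t-\bm{\theta}^\ast}_{\bm{V}_t}\leq \kappa_\mu^{-1}\prn*{\sigma\sqrt{d\log(1+NT/(d\lambda_0))+2\log(1/\delta)}+\kappa_\mu D\sqrt{\lambda_0}}.
\]

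\emph{Step 2 (confidence ellipsoid).} I reuse the map $\bm{G}_t$ from the proof of \cref{lem:nrm_theta_bar_t}, which satisfies $\bm{G}_t(\bar{\bm{\theta}}_t)-\bm{G}_t(\bm{\theta}^\ast)=S_t-\kappa_\mu\lambda_0\bm{\theta}^\ast$. By the mean value theorem along the segment from $\bm{\theta}^\ast$ to $\bar{\bm{\theta}}_t$, this difference equals $\bm{M}_t(\bar{\bm{\theta}}_t-\bm{\theta}^\ast)$, where
\[
\bm{M}_t=\sum_{s,i}\alpha_{s,i}\bm{x}_s(i)\bm{x}_s(i)^\top+\kappa_\mu\lambda_0\bm{I}
\]
and $\alpha_{s,i}=\int_0^1\dot{\mu}(\cdot)\,dv$ is an averaged derivative. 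The whole segment lies in the ball of radius $D+1$ around $\bm{\theta}^\ast$, because $\nrm{\bar{\bm{\theta}}_t-\bm{\theta}^\ast}_2\leq D+1$. Hence $\alpha_{s,i}\geq\kappa_\mu$ and $\bm{M}_t\succeq\kappa_\mu\bm{V}_t$.

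Using $\bm{M}_t\succeq\kappa_\mu\bm{V}_t$ gives $\nrm{\bm{z}}_{\bm{M}_t\bm{V}_t^{-1}\bm{M}_t}\geq\kappa_\mu\nrm{\bm{z}}_{\bm{V}_t}$ for any $\bm{z}$. Applying this with $\bm{z}=\bar{\bm{\theta}}_t-\bm{\theta}^\ast$ yields
\[
\kappa_\mu\nrm{\bar{\bm{\theta}}_t-\bm{\theta}^\ast}_{\bm{V}_t}\leq\nrm{S_t-\kappa_\mu\lambda_0\bm{\theta}^\ast}_{\bm{V}_t^{-1}}.
\]
The right-hand side is then bounded exactly as in \cref{lem:nrm_theta_bar_t}: the triangle inequality with $\bm{V}_t\succeq\lambda_0\bm{I}$ gives $\kappa_\mu D\sqrt{\lambda_0}$ for the bias term, and the self-normalized bound \eqref{eq:bound_S_t} from \citet[Theorem 1]{NIPS2011_Abbasi_bandit}, which holds simultaneously for all $t$ with probability $1-\delta$, gives the $\sigma\sqrt{\cdot}$ term. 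Combining the two steps gives the constant $c_1$.

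The main obstacle is the comparison $\bm{M}_t\succeq\kappa_\mu\bm{V}_t$ in Step 2, because the mean value theorem does not hold for vector-valued maps with a single intermediate point. The fix is to use the integral form of $\bm{M}_t$. Then $\nrm{\bm{z}}_{\bm{M}_t\bm{V}_t^{-1}\bm{M}_t}\geq\kappa_\mu\nrm{\bm{z}}_{\bm{V}_t}$ follows from $\bm{V}_t^{-1/2}\bm{M}_t\bm{V}_t^{-1/2}\succeq\kappa_\mu\bm{I}$ together with symmetry of that matrix.
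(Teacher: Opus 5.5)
Your proposal is correct and follows essentially the same route as the paper. Both arguments bound the objective error by user-wise link errors via the Lipschitz properties of $r$ and $\mu$, apply Cauchy--Schwarz in the $\bm{V}_t$-geometry, use the linearization $S_t-\kappa_\mu\lambda_0\bm{\theta}^\ast=\bm{M}_t(\bar{\bm{\theta}}_t-\bm{\theta}^\ast)$ with $\bm{M}_t\succeq\kappa_\mu\bm{V}_t$ on the $D+1$ ball, and finish with the self-normalized bound \eqref{eq:bound_S_t}. The differences are minor refinements that reach the same constant $c_1$: your integral-averaged matrix replaces the paper's single intermediate point $\bm{\theta}^\prime$ (not literally valid for a vector-valued map), and you apply the triangle inequality in the $\bm{V}_t^{-1}$-norm rather than after inverting the averaged Hessian.
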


\begin{proof}
    Let $S_t = \sum_{s =1}^{t-1}\sum_{i=1}^N \prn{y_s (i)-\mu({\bm{x}_{s}} (i)^\top {\bm{\theta}}^\ast)}{\bm{x}_{s}} (i)$, $\mathcal{D}_1 = \set{{\bm{x}_{s}} (i), y_s (i)}_{i\in[N], s  < t}$, and $\mathcal{D}_2=\set{{\bm{x}_{s}} (i),\mu({\bm{x}_{s}} (i)^\top {\bm{\theta}}^\ast)}_{i\in[N], s  < t}$.
    
    First, we expand \citet[Lemma 1]{aistas_kveton20a_glm} to the regularized MLE.
    It holds that
    \begin{align*}
        S_t &= \bm{\nabla}_{\bm{\theta}}\mathcal{L}(\mathcal{D}_2;{\bm{\theta}}^\ast) - \bm{\nabla}_{\bm{\theta}}\mathcal{L}(\mathcal{D}_1;{\bm{\theta}}^\ast)
        = \bm{\nabla}_{\bm{\theta}}\tilde{\mathcal{L}}(\mathcal{D}_1;\bar{{\bm{\theta}}}_t,\kappa_\mu\lambda_0) - \bm{\nabla}_{\bm{\theta}}\mathcal{L}(\mathcal{D}_1;{\bm{\theta}}^\ast)\\
        &= \bm{\nabla}_{\bm{\theta}}\tilde{\mathcal{L}}(\mathcal{D}_1;\bar{{\bm{\theta}}}_t,\kappa_\mu\lambda_0) - \bm{\nabla}_{\bm{\theta}}\tilde{\mathcal{L}}(\mathcal{D}_1;{\bm{\theta}}^\ast,\kappa_\mu\lambda_0) + \kappa_\mu\lambda_0 {\bm{\theta}}^\ast\\
        &= \bm{\nabla}_{\bm{\theta}}^2 \tilde{\mathcal{L}}(\mathcal{D}_1;{\bm{\theta}}^\prime,\kappa_\mu\lambda_0) \prn{\bar{{\bm{\theta}}}_t-{\bm{\theta}}^\ast} + \kappa_\mu \lambda_0 {\bm{\theta}}^\ast \\
        &= \bm{V} \prn{\bar{{\bm{\theta}}}_t-{\bm{\theta}}^\ast} + \kappa_\mu\lambda_0 {\bm{\theta}}^\ast,
    \end{align*}
    where ${\bm{\theta}}^\prime$ is a convex combination of $\bar{{\bm{\theta}}}_t$ and ${\bm{\theta}}^\ast$, and $\bm{V} = \bm{\nabla}_{\bm{\theta}}^2 \tilde{\mathcal{L}}(\mathcal{D}_1;{\bm{\theta}}^\prime,\kappa_\mu\lambda_0)$.
    We used $\bm{\nabla}_{\bm{\theta}}\mathcal{L}(\mathcal{D}_2;{\bm{\theta}}^\ast) = \bm{\nabla}_{\bm{\theta}}\tilde{\mathcal{L}}(\mathcal{D}_1;\bar{{\bm{\theta}}}_t,\kappa_\mu\lambda_0) = 0$ in the second equality.
    % By \citet[Lemma 1]{aistas_kveton20a_glm}, for $\mathcal{D}_1 = \set{{\bm{x}_{s}} (i), y_s ({\bm{x}_{s}} (i))}_{i\in[k], s  < t}$ and $\mathcal{D}_2=\set{{\bm{x}_{s}} (i),\mu({\bm{x}_{s}} (i)^\top {\bm{\theta}}^\ast)}_{i\in[k], s  < t}$, it holds that 
    % \[S_t = \bm{\nabla}_{\bm{\theta}}^2 \tilde{\mathcal{L}}_0(\mathcal{D}_1;{\bm{\theta}}^\prime) = {\bm{\nabla}_{\bm{\theta}}^2 \mathcal{L}(\mathcal{D}_1;{\bm{\theta}}^\prime)}\prn{\bar{{\bm{\theta}}}_t-{\bm{\theta}}^\ast} + \lambda_0 {\bm{\theta}}^\ast,\]
    % where ${\bm{\theta}}^\prime$ is a convex combination of $\bar{{\bm{\theta}}}_t$ and ${\bm{\theta}}^\ast$.

    Next, we bound 
    $\abs{f_t(\pi;{\bm{\theta}}^\ast)-f_t(\pi;\bar{{\bm{\theta}}}_t)}$.
    \begin{align*}
        \abs{f_t(\pi;{\bm{\theta}}^\ast)-f_t(\pi;\bar{{\bm{\theta}}}_t)} 
        &\leq L_r L_\mu \sum_{i=1}^N \abs{{\bm{x}_t}(i)^\top({\bm{\theta}}^\ast - \bar{{\bm{\theta}}}_t)}\\
        &\leq L_r L_\mu \sum_{i=1}^N \nrm{{\bm{x}_t}(i)}_{{\bm{V}_{t}^{-1}}}\nrm{{\bm{\theta}}^\ast - \bar{{\bm{\theta}}}_t}_{{\bm{V}_{t}}} \\
        &= L_r L_\mu \sum_{i=1}^N \nrm{{\bm{x}_t}(i)}_{{\bm{V}_{t}^{-1}}}\nrm{{\bm{V}^{-1}}S_t - \kappa_\mu\lambda_0 {\bm{V}^{-1}}{\bm{\theta}}^\ast}_{{\bm{V}_{t}}}\\
        &\leq L_r L_\mu \sum_{i=1}^N \nrm{{\bm{x}_t}(i)}_{{\bm{V}_{t}^{-1}}} \prn{\nrm{{\bm{V}^{-1}}S_t}_{{\bm{V}_{t}}} + \kappa_\mu\lambda_0\nrm{{\bm{V}^{-1}}{\bm{\theta}}^\ast}_{{\bm{V}_{t}}}}\\
        &\leq L_r L_\mu \sum_{i=1}^N \nrm{{\bm{x}_t}(i)}_{{\bm{V}_{t}^{-1}}} \prn{\sqrt{S_t^\top {\bm{V}^{-1}} {\bm{V}_{t}} {\bm{V}^{-1}} S_t} + \kappa_\mu\lambda_0 \sqrt{{\bm{\theta}}^{\ast \top} {\bm{V}^{-1}} {\bm{V}_{t}} {\bm{V}^{-1}} {\bm{\theta}}^\ast}}\\
        &\leq \kappa_\mu^{-1}L_r L_\mu \sum_{i=1}^N \nrm{{\bm{x}_t}(i)}_{{\bm{V}_{t}^{-1}}} \prn{\nrm{S_t}_{{\bm{V}_{t}^{-1}}} + \kappa_\mu D\sqrt{\lambda_0}},
    \end{align*}
    where the second inequality follows from the Cauchy--Schwarz inequality,
    and the last inequality follows from the $\kappa_\mu {\bm{V}_{t}}\preceq {\bm{V}}$ on $\nrm{\bar{{\bm{\theta}}}_t-{\bm{\theta}}^\ast}_2\leq D+1$.

    Therefore, combining the above inequality with \eqref{eq:bound_S_t} in the proof of \cref{lem:nrm_theta_bar_t}, we have that, with probability at least $1-\delta$, for all $t\geq 1$,
    \[\abs{f_t(\pi;{\bm{\theta}}^\ast)-f_t(\pi;\bar{{\bm{\theta}}}_t)}\leq \kappa_\mu^{-1} L_r L_\mu \prn*{\sigma\sqrt{d\log\prn*{1+\frac{NT}{d\lambda_0}}+2\log\frac{1}{\delta}} + \kappa_\mu D \sqrt{\lambda_0}}\sum_{i=1}^{N}\nrm{\bm{x}_t(i)}_{{\bm{V}_{t}^{-1}}}.\]
\end{proof}

We can bound $\sum_{t=1}^{T}\sum_{i=1}^N\nrm{{\bm{x}_t}(i)}_{{\bm{V}_{t}^{-1}}}$ using the following lemma:
\begin{lemma}[{\citealt[Lemma 2]{takemura2021near_combinatorial}}]
    \label{lem:upper_bound_x_t}
    Let $\set{{\bm{x}_t}(i)}_{(i,t)\in[N]\times\N}$ be a sequence in $\R^d$ satisfying $\nrm{{\bm{x}_t}(i)}_2\leq 1$.
    For all $t\geq 1$, define ${\bm{V}_{t}} = \lambda_0 {\bm{I}} + \sum_{s =1}^{t-1}\sum_{i=1}^N {\bm{x}_{s}} (i){\bm{x}_{s}} (i)^\top$, where $\lambda_0\geq 0$.
    It holds that
    \begin{align*}
        \sum_{t=1}^{T}\sum_{i=1}^N\min\set*{\frac{1}{\sqrt{N}},\nrm{{\bm{x}_t}(i)}_{{\bm{V}_{t}^{-1}}}}&\leq \sqrt{2 d NT \log\prn*{1+\frac{N T}{d \lambda_0}}},\quad\text{and}\\
        \sum_{t=1}^{T}\sum_{i=1}^N\ind\brk*{\nrm{{\bm{x}_t}(i)}_{{\bm{V}_{t}^{-1}}}>\frac{1}{\sqrt{N}}}&<2dN\log\prn*{1+\frac{N T}{d \lambda_0}}.
    \end{align*}
\end{lemma}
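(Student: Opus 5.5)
The plan is the standard elliptical-potential argument, adapted to batched updates in which $N$ rank-one terms are added to $\bm{V}_t$ simultaneously at the end of each round. We may assume $\lambda_0>0$: if $\lambda_0=0$ the right-hand sides are infinite and there is nothing to prove. Write $u_t=\sum_{i=1}^N\nrm{\bm{x}_t(i)}_{\bm{V}_t^{-1}}^2$.

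\textbf{Step 1 (determinant recursion for one round).} Since $\bm{V}_{t+1}=\bm{V}_t+\sum_{i=1}^N\bm{x}_t(i)\bm{x}_t(i)^\top$, we have
\[
\det(\bm{V}_{t+1})=\det(\bm{V}_t)\det(\bm{I}+\bm{A}_t),\qquad \bm{A}_t=\sum_{i=1}^N\bm{V}_t^{-1/2}\bm{x}_t(i)\bm{x}_t(i)^\top\bm{V}_t^{-1/2}.
\]
The matrix $\bm{A}_t$ is positive semidefinite. Hence $\det(\bm{I}+\bm{A}_t)=\prod_j(1+\lambda_j)\ge 1+\operatorname{tr}\bm{A}_t=1+u_t$. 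Summing the logarithms over $t$ telescopes, and the AM--GM bound on $\det(\bm{V}_{T+1})$ already used after \eqref{eq:bound_S_t} gives
\[
\sum_{t=1}^T\log(1+u_t)\le\log\frac{\det\bm{V}_{T+1}}{\det(\lambda_0\bm{I})}\le d\log\prn*{1+\frac{NT}{d\lambda_0}}.
\]

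\textbf{Step 2 (truncation).} Next observe that
\[
\sum_{i=1}^N\min\set*{\tfrac1N,\nrm{\bm{x}_t(i)}_{\bm{V}_t^{-1}}^2}\le\min\set{1,u_t}\le 2\log(1+u_t).
\]
The last inequality is elementary: for $u\in[0,1]$, concavity gives $\log(1+u)\ge u\log 2\ge u/2$; for $u>1$, $2\log(1+u)>2\log2>1$. Combined with Step 1, this yields
\[
\Sigma:=\sum_{t=1}^T\sum_{i=1}^N\min\set*{\tfrac1N,\nrm{\bm{x}_t(i)}_{\bm{V}_t^{-1}}^2}\le 2d\log\prn*{1+\frac{NT}{d\lambda_0}}.
\]
This truncation step is the only place where the batch structure really matters. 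Truncating each user at $1/N$ keeps the per-round sum at most $1$, which is exactly what lets us compare with $\log(1+u_t)$ even though $\bm{V}_t$ is not updated between users within a round. I expect this step to be the main point of care.

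\textbf{Step 3 (conclusions).} For the first claim, note $\min\set{1/\sqrt N,\nrm{\cdot}}=\sqrt{\min\set{1/N,\nrm{\cdot}^2}}$. Applying Cauchy--Schwarz over the $NT$ index pairs gives a bound of $\sqrt{NT\,\Sigma}\le\sqrt{2dNT\log(1+NT/(d\lambda_0))}$.

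For the second claim, every index with $\nrm{\bm{x}_t(i)}_{\bm{V}_t^{-1}}>1/\sqrt N$ contributes exactly $1/N$ to $\Sigma$. So their number is at most $N\Sigma\le 2dN\log(1+NT/(d\lambda_0))$.

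To obtain the strict inequality, use the strict version $\min\set{1,u}<2\log(1+u)$, which holds for every $u>0$. If some such index exists, the round containing it has $u_t>0$, so the strict inequality applies in that round. If no such index exists, the count is $0$, and $0<2dN\log(1+NT/(d\lambda_0))$ holds trivially.
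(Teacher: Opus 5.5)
Your proof is correct and complete. That includes the strict inequality in the second claim and the degenerate case $\lambda_0=0$. The paper gives no proof of its own; it imports the lemma from Takemura et al. Your argument is the standard batched elliptical-potential derivation behind that lemma: the determinant recursion, per-user truncation at $1/N$, and the resulting bound $2d\log(1+NT/(d\lambda_0))$ on the truncated sum. It reproduces the stated constants exactly.
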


This lemma implies that 
\begin{align}
    \label{eq:sum_nrm_xt_bound}
    \sum_{t=1}^{T}\sum_{i=1}^N\nrm{{\bm{x}_t}(i)}_{{\bm{V}_{t}^{-1}}}
    &=
    \sum_{t=1}^{T}\sum_{i=1}^N
    \ind\brk*{\nrm{{\bm{x}_t}(i)}_{{\bm{V}_{t}^{-1}}}>\frac{1}{\sqrt{N}}}
    \nrm{{\bm{x}_t}(i)}_{{\bm{V}_{t}^{-1}}}
    \nonumber\\
    &\quad+
    \sum_{t=1}^{T}\sum_{i=1}^N
    \ind\brk*{\nrm{{\bm{x}_t}(i)}_{{\bm{V}_{t}^{-1}}}\leq\frac{1}{\sqrt{N}}}
    \nrm{{\bm{x}_t}(i)}_{{\bm{V}_{t}^{-1}}}
    \nonumber\\
    &\leq
    \frac{1}{\sqrt{\lambda_0}}
    \sum_{t=1}^{T}\sum_{i=1}^N
    \ind\brk*{\nrm{{\bm{x}_t}(i)}_{{\bm{V}_{t}^{-1}}}>\frac{1}{\sqrt{N}}}
    +
    \sum_{t=1}^{T}\sum_{i=1}^N
    \min\set*{\frac{1}{\sqrt{N}},\nrm{{\bm{x}_t}(i)}_{{\bm{V}_{t}^{-1}}}}
    \nonumber\\
    &\leq \sqrt{2 d NT \log\prn*{1+\frac{N T}{d \lambda_0}}} + \frac{2dN}{\sqrt{\lambda_0}} \log\prn*{1+\frac{N T}{d \lambda_0}}
\end{align}

\subsection{Details of \cref{subsec:ucb}}\label{app:details_ucb}
We provide the full version of \cref{thm:main_regret_ucb}.
\begin{theorem}
    \label{thm:main_regret_ucb_complete}
    Fix any $\delta\in(0,1)$. If we run \cref{alg:ucb} 
    with $c_1=\frac{L_r L_\mu}{\kappa_\mu} \big(\sigma\sqrt{d\log\prn*{1+\frac{NT}{d\lambda_0}}+2\log\frac{1}{\delta}} + \kappa_\mu D \sqrt{\lambda_0}\big)$ and $\lambda_0\geq \frac{\sigma^2}{\kappa_\mu^{2}} \prn*{d\log\prn*{1+\frac{NT}{d}}+2\log\frac{1}{\delta}}$, 
    then, with probability at least $1-2\delta$, the regret of algorithm is upper bounded by
    \begin{align*}
        \mathcal{R}_T^\alpha
        &\leq 2c_1\prn*{\sqrt{2 d NT \log\prn*{1+\frac{N T}{d \lambda_0}}} + \frac{2dN}{\sqrt{\lambda_0}} \log\prn*{1+\frac{N T}{d \lambda_0}}}
        % \\
        % &=\tilde{O}\prn{\kappa_\mu^{-1} L_r L_\mu \prn{\sigma\sqrt{d} + \kappa_\mu \sqrt{\lambda_0}} \sqrt{dNT}}
        .  
    \end{align*}
\end{theorem}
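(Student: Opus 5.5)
We follow the standard optimism argument, working on the intersection of the good events of \cref{lem:nrm_theta_bar_t} and \cref{lem:bound_bar_ast}. By a union bound this intersection has probability at least $1-2\delta$. The choice of $\lambda_0$ in the statement is exactly the hypothesis of \cref{lem:nrm_theta_bar_t}, so on this event $\nrm{\bar{\bm{\theta}}_t-\bm{\theta}^\ast}_2\le D+1$ for all $t$. The hypothesis of \cref{lem:bound_bar_ast} therefore holds, and with the stated $c_1$ we get, for every $t$ and every $\pi\in\Pi$ simultaneously,
\[
\abs{f_t(\pi;\bm{\theta}^\ast)-f_t(\pi;\bar{\bm{\theta}}_t)}\le g_t(\pi).
\]
One point to check is that the proof of \cref{lem:bound_bar_ast} applies to an arbitrary $\pi$, and not only to the played $\pi_t$. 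This is fine: the confidence event comes from the self-normalized bound \eqref{eq:bound_S_t}, which does not depend on $\pi$. Also, the Lipschitz step uses only that both arguments of $\mu$ lie in $\mathcal{B}$, which is guaranteed by the radius $D+1$.

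Next comes the per-round decomposition. The two-sided bound gives optimism: $f_t(\pi;\bm{\theta}^\ast)\le f_t(\pi;\bar{\bm{\theta}}_t)+g_t(\pi)$ for all $\pi$, and $f_t(\pi_t;\bar{\bm{\theta}}_t)-g_t(\pi_t)\le f_t(\pi_t;\bm{\theta}^\ast)$. By the $\alpha$-approximation guarantee of the oracle applied to the surrogate $F_t(\pi)=f_t(\pi;\bar{\bm{\theta}}_t)+g_t(\pi)$,
\[
\alpha f_t(\pi_t^\ast;\bm{\theta}^\ast)\le \alpha F_t(\pi_t^\ast)\le \alpha\max_\pi F_t(\pi)\le F_t(\pi_t)=f_t(\pi_t;\bar{\bm{\theta}}_t)+g_t(\pi_t)\le f_t(\pi_t;\bm{\theta}^\ast)+2g_t(\pi_t).
\]
The first inequality needs $\alpha\ge 0$, which holds. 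The second uses $F_t(\pi_t^\ast)\le\max_\pi F_t(\pi)$, and this requires $F_t\ge0$; that holds because $r\ge0$ and $g_t\ge0$. Hence the per-round approximate regret is at most $2g_t(\pi_t)=2c_1\sum_{i}\nrm{\bm{x}_t(i)}_{\bm{V}_t^{-1}}$.

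Finally, we sum over $t$ and apply the consequence \eqref{eq:sum_nrm_xt_bound} of \cref{lem:upper_bound_x_t}. This bounds $\sum_{t,i}\nrm{\bm{x}_t(i)}_{\bm{V}_t^{-1}}$ by $\sqrt{2dNT\log(1+NT/(d\lambda_0))}+\frac{2dN}{\sqrt{\lambda_0}}\log(1+NT/(d\lambda_0))$. The large-norm terms there are controlled via $\nrm{\bm{x}}_{\bm{V}_t^{-1}}\le 1/\sqrt{\lambda_0}$. Multiplying by $2c_1$ gives exactly the stated bound.

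The main obstacle is mild: it is making sure the confidence statement of \cref{lem:bound_bar_ast} holds uniformly over all allocations, including the unknown $\pi_t^\ast$, and not just along the played trajectory, while keeping the failure probability at $2\delta$. Since both events are defined through quantities independent of $\pi$, namely $\nrm{S_t}_{\bm{V}_t^{-1}}$ and $\nrm{\bar{\bm{\theta}}_t-\bm{\theta}^\ast}_2$, uniformity is free. The rest is bookkeeping.
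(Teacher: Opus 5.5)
Your proposal is correct and follows essentially the same route as the paper: your optimism chain is the paper's three-term decomposition collapsed into one line, with the oracle term bounded by $g_t(\pi_t)-\alpha g_t(\pi_t^\ast)$ and the two estimation terms bounded by \cref{lem:bound_bar_ast}, followed by the same summation via \eqref{eq:sum_nrm_xt_bound}. One small nit: the step $\alpha F_t(\pi_t^\ast)\le\alpha\max_\pi F_t(\pi)$ needs only $\alpha\ge 0$ and $\pi_t^\ast\in\Pi$, not $F_t\ge 0$; nonnegativity matters only for the $\alpha$-approximation guarantee to be meaningful.
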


If we set $\lambda_0 =  \kappa_\mu^{-2}\sigma^2 \prn*{d\log\prn*{1+\frac{NT}{d}}+2\log\frac{1}{\delta}}$, we have
\[
    \mathcal{R}_T^\alpha = \tilde{O}\prn{d\sqrt{NT} + dN}.
\]

\begin{proof}
    We define ${\bm{x}_t}^\ast(i)={\bm{\phi}}_t(i,\pi_t^\ast(i))$, $\Delta_t = \bar{{\bm{\theta}}}_t-{\bm{\theta}}^\ast$.
    First, we bound the one-step regret, $f_t\prn{\pi_t^\ast;{\bm{\theta}}^\ast}-f_t\prn{\pi_t;{\bm{\theta}}^\ast}$. 
    Here, we want to bound the following terms:
    \begin{align*}
    % $
        \alpha f_t\prn{\pi_t^\ast;{\bm{\theta}}^\ast}-f_t\prn{\pi_t;{\bm{\theta}}^\ast}&=
        (\alpha f_t\prn{\pi_t^\ast;\bar{{\bm{\theta}}}_t}-f_t\prn{\pi_t;\bar{{\bm{\theta}}}_t})\\
        &\qquad+\alpha (f_t\prn{\pi_t^\ast;{\bm{\theta}}^\ast}-f_t\prn{\pi_t^\ast;\bar{{\bm{\theta}}}_t})
        +(f_t\prn{\pi_t;\bar{{\bm{\theta}}}_t}-f_t\prn{\pi_t;{\bm{\theta}}^\ast}).
    % $
    \end{align*}
    From the definition of $\pi_t,$ the first term is bounded as 
    \begin{align}
        \label{eq:ucb_first_term}
        % $
        \alpha f_t\prn{\pi_t^\ast;\bar{{\bm{\theta}}}_t}-f_t\prn{\pi_t;\bar{{\bm{\theta}}}_t} 
        \leq 
        g_t(\pi_t)-\alpha g_t(\pi_t^\ast)
        =
        c_1 \sum_{i=1}^{N}\prn{\nrm{{\bm{x}_t}(i)}_{{\bm{V}_{t}^{-1}}} -\alpha \nrm{{\bm{x}_t}^\ast(i)}_{{\bm{V}_{t}^{-1}}}}.
        % $
    \end{align}
    The second and third terms are bounded as 
    \begin{align}
        \label{eq:ucb_secound_term}
        \alpha \prn{f_t\prn{\pi_t^\ast;{\bm{\theta}}^\ast}-f_t\prn{\pi_t^\ast;\bar{{\bm{\theta}}}_t}}&\leq \alpha c_1\sum_{i=1}^N \nrm{{\bm{x}_t}^\ast(i)}_{{\bm{V}_{t}^{-1}}},\\
        \label{eq:ucb_third_term}
        f_t\prn{\pi_t;\bar{{\bm{\theta}}}_t}-f_t\prn{\pi_t;{\bm{\theta}}^\ast}&\leq c_1\sum_{i=1}^N \nrm{{\bm{x}_t}(i)}_{{\bm{V}_{t}^{-1}}}
    \end{align}
    % where we used Lipschitz continuity of $p$ and $\mu$ in the second and third inequality, and used the Cauchy--Schwartz inequality in the last inequality.
    from \cref{lem:bound_bar_ast}.

    Combining \eqref{eq:ucb_first_term}, \eqref{eq:ucb_secound_term}, and \eqref{eq:ucb_third_term}, with probability at least $1-2\delta$, it holds that
    \begin{align*}
        \alpha f_t\prn{\pi_t^\ast;{\bm{\theta}}^\ast}-f_t\prn{\pi_t;{\bm{\theta}}^\ast}
        \leq 2c_1 \sum_{i=1}^{N}\nrm{{\bm{x}_t}(i)}_{{\bm{V}_{t}^{-1}}}
        .
    \end{align*}
    Since $\nrm{{\bm{x}_t}(i)}_2\leq 1$, we have
    \begin{align}
        \label{eq:bound_of_reg_tau}
        \sum_{t=1}^{T}\prn{\alpha f_t\prn{\pi_t^\ast;{\bm{\theta}}^\ast}-f_t\prn{\pi_t;{\bm{\theta}}^\ast}}
        &\leq 2c_1 \sum_{t=1}^T\sum_{i=1}^N \nrm{{\bm{x}_t}(i)}_{{\bm{V}_{t}^{-1}}} \nonumber\\
        &\leq 2c_1\prn*{\sqrt{2 d NT \log\prn*{1+\frac{N T}{d \lambda_0}}} + \frac{2dN}{\sqrt{\lambda_0}} \log\prn*{1+\frac{N T}{d \lambda_0}}},
    \end{align}
    where the last inequality follows from \eqref{eq:sum_nrm_xt_bound}.
    This is the desired upper bound.
\end{proof}

\subsection{Details of \cref{subsec:regret_ts}}
\label{app:details_ts}

% \begin{wrapfigure}[18]{r}{0.54\textwidth}
%     \vspace{-30pt}
    % \begin{minipage}{0.54\textwidth}
        \begin{algorithm}[H]
        \caption{CAB-TS}
        \label{alg:ts}
            {\begin{algorithmic}[1]
    \Require {The total rounds $T$, the number of users $N$, tuning parameter $\lambda_0$ and $a$, and access to an exact optimization oracle.}
    \State {$\mathcal{D}_1\gets\varnothing$, $\bm{V}_1\gets\lambda_0 {\bm{I}}$, and $\bm{H}_1\gets L_\mu\lambda_0 {\bm{I}}$.}
    \For {$t=1,\dots,T$}
        \State {${\bm{V}_{t}} \gets \lambda_0 {\bm{I}} + \sum_{s =1}^{t-1}\sum_{i=1}^N {\bm{x}_{s}} (i){\bm{x}_{s}} (i)^\top$.}
        \State {$\bar{\bm{\theta}}_t\gets\argmin_{\bm{\theta}\in\R^d}\tilde{\mathcal{L}} (\mathcal{D}_t;\bm{\theta},\kappa_\mu\lambda_0)$.}
        \State {If $t\geq2$, ${\bm{H}_{t}}\gets \sum_{s=1}^{t-1}\sum_{i=1}^N \dot{\mu}({\bm{x}_{s}} (i)^\top\bar{\bm{\theta}}_t)\prn{{\bm{x}_{s}} (i){\bm{x}_{s}} (i)^\top+ \frac{\lambda_0}{N(t-1)}{\bm{I}}}$.}
        \For {$i=1,\dots,N$}
            \State {$\tilde{\bm{\epsilon}}_t(i)\overset{\text{i.i.d.}}{\sim} \mathcal{N}(\bm{0},a^2 \bm{H}_t^{-1})$.}
        \EndFor
        \State {Call an exact optimization oracle for $f_t(\pi;\bar{\bm{\theta}}_t)+h_t(\pi;\tilde{\mathcal{E}}_t)$ and let $\pi_t$ denote its output
        % , where $f_t$ and $h_t$ are defined in \eqref{eq:definition_f_t} and \eqref{eq:definition_h}
        .
        }
        \State {Observe $y_t(i)$ for any $i\in[N]$.}
        \State {${\bm{x}_t}(i)\gets {\bm{\phi}}_t(i,\pi_t(i))$ for any $i\in[N]$ and $\mathcal{D}_{t+1}\gets \mathcal{D}_{t}\cup\set{\prn{\bm{x}_t(i),y_t(i)}}_{i\in[N]}$.}
    \EndFor
\end{algorithmic}}
        \end{algorithm}
%     \end{minipage}
%     % \vspace{-3pt}
% \end{wrapfigure}

For completeness, we provide the pseudo-code of CAB-TS (\cref{alg:ts}) and the full statement of \cref{thm:bound_TS}.
\begin{theorem}
    \label{thm:bound_TS_complete}
    \cref{alg:ts} with $a=c_1 \sqrt{L_\mu N}$ and $\lambda_0\geq \kappa_\mu^{-2}\sigma^2 \prn*{d\log\prn*{1+\frac{NT}{d}}+2\log\frac{1}{\delta}}$ achieves the following regret bound for any $\delta\in(0,1/T)$:
    \begin{align*}
        \E\brk{\mathcal{R}_T} &\leq \prn{c_1+c_2}\prn*{1+\frac{2}{0.15-2\delta}} \prn*{\sqrt{2 d NT \log\prn*{1+\frac{N T}{d \lambda_0}}} + \frac{2dN}{\sqrt{\lambda_0}} \log\prn*{1+\frac{N T}{d \lambda_0}}}\\
        &\qquad+ 4\delta KMT 
        % & = \tilde{O}\prn*{\kappa_\mu^{-3/2}L_rL_\mu^{3/2}N\prn{\sigma\sqrt{d}+\kappa_\mu\sqrt{\lambda_0}}\sqrt{dT}}=\tilde{O}\prn{dN\sqrt{T}}
        ,
    \end{align*}
    where $c_1 = \kappa_\mu^{-1} L_r L_\mu \prn*{\sigma\sqrt{d\log\prn*{1+\frac{NT}{d\lambda_0}}+2\log\frac{1}{\delta}} + \kappa_\mu D\sqrt{\lambda_0}}$ and $c_2= c_1\sqrt{2 \kappa_\mu^{-1} L_\mu N \log\frac{KN}{\delta}}$.
\end{theorem}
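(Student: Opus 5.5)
I will follow the standard linear-TS template of Agrawal--Goyal and Abeille--Lazaric, adapted to the user-wise perturbation $h_t$.

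\textbf{Good event and per-round objective.} First I fix the good event $E_t$ on which two things hold. One is $\nrm{\bar{\bm{\theta}}_t-\bm{\theta}^\ast}_2\le D+1$, so that \cref{lem:nrm_theta_bar_t} and \cref{lem:bound_bar_ast} apply. The other is that every perturbation is small, namely $\abs{\bm{\phi}_t(i,a)^\top\tilde{\bm{\epsilon}}_t(i)}\le c_2/\sqrt{N}\cdot\nrm{\bm{\phi}_t(i,a)}_{\bm{V}_t^{-1}}$ for all $i\in[N]$ and $a\in[K]$. The second part follows from a Gaussian tail bound and a union bound over $KN$ pairs. It uses $\bm{H}_t\succeq\kappa_\mu\bm{V}_t$, which gives $\nrm{\bm{\phi}}_{\bm{H}_t^{-1}}\le\kappa_\mu^{-1/2}\nrm{\bm{\phi}}_{\bm{V}_t^{-1}}$, together with $a=c_1\sqrt{L_\mu N}$. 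This is where the factor $\sqrt{2\kappa_\mu^{-1}L_\mu N\log(KN/\delta)}$ in $c_2$ comes from. Off $E_t$ the per-round regret is at most $KM$, which contributes the term $4\delta KMT$.

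On $E_t$ I define the surrogate $\tilde f_t(\pi)=f_t(\pi;\bar{\bm{\theta}}_t)+h_t(\pi;\tilde{\mathcal{E}}_t)$. By \cref{lem:bound_bar_ast}, $\abs{\tilde f_t(\pi)-f_t(\pi;\bm{\theta}^\ast)}\le(c_1+c_2)\,s_t(\pi)$, where $s_t(\pi)=\sum_i\nrm{\bm{\phi}_t(i,\pi(i))}_{\bm{V}_t^{-1}}$.

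\textbf{Saturated set and optimism.} Next I define the unsaturated set $\mathcal{U}_t=\set{\pi: f_t(\pi_t^\ast;\bm{\theta}^\ast)-f_t(\pi;\bm{\theta}^\ast)\le (c_1+c_2)s_t(\pi)}$, and let $\bar\pi_t=\argmin_{\pi\in\mathcal{U}_t}s_t(\pi)$. Note that $\pi_t^\ast\in\mathcal{U}_t$. The key anti-concentration step is to lower-bound the probability of optimism, $\P_t\bigl(\tilde f_t(\pi_t^\ast)>f_t(\pi_t^\ast;\bm{\theta}^\ast)\bigr)\ge p$ for a constant $p$. This is where the i.i.d. user-wise sampling is essential. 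Given $\mathcal{F}_{t-1}$, $h_t(\pi_t^\ast;\tilde{\mathcal{E}}_t)$ is a centered Gaussian with variance
\[
a^2\sum_i\nrm{\bm{x}_t^\ast(i)}_{\bm{H}_t^{-1}}^2\ge a^2L_\mu^{-1}\sum_i\nrm{\bm{x}^\ast_t(i)}_{\bm{V}_t^{-1}}^2\ge c_1^2 s_t(\pi_t^\ast)^2,
\]
where the last inequality is Cauchy--Schwarz together with the factor $N$ in $a^2$. Since $f_t(\pi_t^\ast;\bar{\bm{\theta}}_t)\ge f_t(\pi_t^\ast;\bm{\theta}^\ast)-c_1 s_t(\pi_t^\ast)$, optimism holds whenever a standard normal exceeds $1$, which happens with probability about $0.158\ge0.15$. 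Subtracting the failure probability of $E_t$ gives $p=0.15-2\delta$. I expect this to be the main obstacle. The constants must line up so that the variance dominates $c_1 s_t(\pi_t^\ast)$ uniformly, and the conditioning on $E_t$ must be handled correctly.

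\textbf{Regret decomposition.} Using exactness of the oracle, whenever $\tilde f_t(\pi_t^\ast)>f_t(\pi_t^\ast;\bm{\theta}^\ast)$, every saturated $\pi$ satisfies $\tilde f_t(\pi)\le f_t(\pi;\bm{\theta}^\ast)+(c_1+c_2)s_t(\pi)<f_t(\pi_t^\ast;\bm{\theta}^\ast)$. Hence $\pi_t\in\mathcal{U}_t$, which gives $\P_t(\pi_t\in\mathcal{U}_t)\ge p$ and therefore $\E_t[s_t(\pi_t)]\ge p\, s_t(\bar\pi_t)$.

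I then bound the instantaneous regret on $E_t$. The regret $f_t(\pi_t^\ast;\bm{\theta}^\ast)-f_t(\pi_t;\bm{\theta}^\ast)$ splits as $[f_t(\pi^\ast_t)-f_t(\bar\pi_t)]+[f_t(\bar\pi_t)-f_t(\pi_t)]$, where $f_t$ is evaluated at $\bm{\theta}^\ast$. The first bracket is at most $(c_1+c_2)s_t(\bar\pi_t)$ because $\bar\pi_t\in\mathcal{U}_t$. For the second bracket, I use $\tilde f_t(\pi_t)\ge\tilde f_t(\bar\pi_t)$ and the surrogate bound, which give at most $(c_1+c_2)(s_t(\bar\pi_t)+s_t(\pi_t))$. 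Taking $\E_t$ yields an instantaneous regret of at most $(c_1+c_2)(1+2/p)\E_t[s_t(\pi_t)]$.

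\textbf{Summation.} Finally I sum over $t$ and apply \eqref{eq:sum_nrm_xt_bound} to $\sum_t s_t(\pi_t)$. Adding the bad-event contribution $4\delta KMT$ gives the stated bound.
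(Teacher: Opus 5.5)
Your proposal is correct and is essentially the paper's proof. It uses the same three events: the confidence bound of \cref{lem:bound_bar_ast}, uniform concentration of the user-wise perturbations, and anti-concentration of $h_t(\pi_t^\ast;\tilde{\mathcal{E}}_t)$ via independence, Cauchy--Schwarz and $\bm{H}_t\preceq L_\mu\bm{V}_t$. It also uses the same saturated/unsaturated split with the minimizer of $s_t$ over unsaturated allocations, the same appeal to exact optimality, and arrives at the same factor $1+2/(0.15-2\delta)$ plus $4\delta KMT$.

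There is one constant slip. Since $a=c_1\sqrt{L_\mu N}$ already carries the $\sqrt{N}$, the Gaussian tail plus union bound gives the per-pair threshold $c_2\nrm{\bm{\phi}}_{\bm{V}_t^{-1}}$ (as in the paper's $E_{2,t}^\prime$), not $(c_2/\sqrt{N})\nrm{\bm{\phi}}_{\bm{V}_t^{-1}}$. Summing the correct bound over users yields exactly the $\abs{h_t(\pi;\tilde{\mathcal{E}}_t)}\le c_2 s_t(\pi)$ you use afterwards, so nothing downstream changes.
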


If we set $\lambda_0 =  \kappa_\mu^{-2}\sigma^2 \prn*{d\log\prn*{1+\frac{NT}{d}}+2\log\frac{1}{\delta}}$, we have
\[
    \mathcal{R}_T = \tilde{O}\prn{dN\sqrt{T} + dN^{3/2}}.
\]

To prove \cref{thm:bound_TS_complete}, we first bound the one-step regret.
For convenience, we define 
\begin{align}
    \label{eq:definition_E1t}
    E_{1,t}=\set{{}^\forall \pi\in\Pi \mid \abs{f_t(\pi;{\bm{\theta}}^\ast)-f_t(\pi;\bar{{\bm{\theta}}}_t)}\leq c_1\sum_{i=1}^N \nrm{{\bm{\phi}}_t(i,\pi(i))}_{{\bm{V}_{t}^{-1}}}}.
\end{align}

\begin{lemma}
    \label{lem:sub_bound_TS}
    Define events $E_{2,t}$ and $E_{3,t}$ as 
    \begin{align*}
        E_{2,t}&=\set*{{}^\forall \pi\in\Pi \mid \abs{h_t(\pi;\tilde{\mathcal{E}}_t)}\leq c_2 \sum_{i=1}^N\nrm{{\bm{\phi}}_t(i,\pi(i))}_{{\bm{V}_{t}^{-1}}}},\:\: \text{and}\\
        E_{3,t}&=\set*{h_t(\pi_t^\ast;\tilde{\mathcal{E}}_t)\geq c_1\sum_{i=1}^{N}\nrm{{\bm{\phi}}_t(i,\pi_t^\ast(i))}_{{\bm{V}_{t}^{-1}}}}.
    \end{align*}
    Let $\P_t\prn{E_{2,t}}\geq 1-p_2$ and $\P_t\prn{E_{3,t}}\geq p_3$ ($p_3>p_2$).
    If $E_{1,t}$ holds, then we have
    \[\E_t\brk{ f_t(\pi_t^\ast;{\bm{\theta}}^\ast)-f_t(\pi_t;{\bm{\theta}}^\ast)}\leq (c_1+c_2)\prn*{1+\frac{2}{p_3-p_2}}\E_t\brk*{\sum_{i=1}^N \nrm{{\bm{x}_t}(i)}_{{\bm{V}_{t}^{-1}}}} + p_2KM\]
\end{lemma}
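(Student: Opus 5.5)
We follow the standard TS analysis of \citet{agrawal2013_ts_cb} and its combinatorial adaptations: split $\Pi$ into ``saturated'' and ``unsaturated'' allocations and lower-bound the probability that an unsaturated allocation is selected. Throughout we work on $E_{1,t}$ (which is $\mathcal{F}_{t-1}$-measurable) and write $s_t(\pi)=\sum_{i=1}^N\nrm{{\bm{\phi}}_t(i,\pi(i))}_{{\bm{V}_{t}^{-1}}}$. Define the unsaturated set
\[
C_t=\set*{\pi\in\Pi : f_t(\pi_t^\ast;{\bm{\theta}}^\ast)-f_t(\pi;{\bm{\theta}}^\ast)\leq (c_1+c_2)s_t(\pi)},
\]
which contains $\pi_t^\ast$, and let $\bar\pi_t=\argmin_{\pi\in C_t}s_t(\pi)$, which is $\mathcal{F}_{t-1}$-measurable.

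\textbf{Per-round decomposition.} On $E_{2,t}$, we write
\[
f_t(\pi_t^\ast;{\bm{\theta}}^\ast)-f_t(\pi_t;{\bm{\theta}}^\ast)=\Delta(\bar\pi_t)+\bigl(f_t(\bar\pi_t;{\bm{\theta}}^\ast)-f_t(\pi_t;{\bm{\theta}}^\ast)\bigr),
\]
where $\Delta(\bar\pi_t)\leq(c_1+c_2)s_t(\bar\pi_t)$. For the second term, we use $E_{1,t}$, $E_{2,t}$, and the exact optimality of $\pi_t$ for $f_t(\cdot;\bar{\bm{\theta}}_t)+h_t(\cdot;\tilde{\mathcal{E}}_t)$:
\[
f_t(\bar\pi_t;{\bm{\theta}}^\ast)-f_t(\pi_t;{\bm{\theta}}^\ast)\leq f_t(\bar\pi_t;\bar{\bm{\theta}}_t)+c_1s_t(\bar\pi_t)-f_t(\pi_t;\bar{\bm{\theta}}_t)+c_1s_t(\pi_t)\leq (c_1+c_2)\bigl(s_t(\bar\pi_t)+s_t(\pi_t)\bigr).
\]
On $E_{2,t}^c$, the regret is at most $KM$ because $0\leq r\leq M$, so this event contributes $p_2KM$. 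Taking conditional expectations, it remains to show
\[
s_t(\bar\pi_t)\leq \frac{\E_t[s_t(\pi_t)]}{p_3-p_2}.
\]
Together with the direct term $\E_t[s_t(\pi_t)]$, this gives the factor $1+\frac{2}{p_3-p_2}$.

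\textbf{Key step.} We will show $\P_t(\pi_t\in C_t)\geq p_3-p_2$. If $\pi_t\in C_t$, then $s_t(\pi_t)\geq s_t(\bar\pi_t)$ by the definition of $\bar\pi_t$, so $\E_t[s_t(\pi_t)]\geq s_t(\bar\pi_t)\P_t(\pi_t\in C_t)$, which is exactly the needed inequality.

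To lower-bound $\P_t(\pi_t\in C_t)$, suppose $E_{2,t}\cap E_{3,t}$ holds. For any $\pi\notin C_t$, using $E_{1,t}$ and $E_{2,t}$,
\[
f_t(\pi;\bar{\bm{\theta}}_t)+h_t(\pi;\tilde{\mathcal{E}}_t)\leq f_t(\pi;{\bm{\theta}}^\ast)+(c_1+c_2)s_t(\pi)<f_t(\pi_t^\ast;{\bm{\theta}}^\ast).
\]
Using $E_{1,t}$ and $E_{3,t}$,
\[
f_t(\pi_t^\ast;\bar{\bm{\theta}}_t)+h_t(\pi_t^\ast;\tilde{\mathcal{E}}_t)\geq f_t(\pi_t^\ast;{\bm{\theta}}^\ast)-c_1s_t(\pi_t^\ast)+c_1s_t(\pi_t^\ast)=f_t(\pi_t^\ast;{\bm{\theta}}^\ast).
\]
Thus $\pi_t^\ast$ strictly beats every $\pi\notin C_t$, and since the oracle is exact, $\pi_t\in C_t$. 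Therefore
\[
\P_t(\pi_t\in C_t)\geq \P_t(E_{2,t}\cap E_{3,t})\geq p_3-p_2.
\]

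\textbf{Main obstacle.} The delicate point is measurability and conditioning. The analysis must condition on $\mathcal{F}_{t-1}$ so that $\bar\pi_t$, $C_t$, and $E_{1,t}$ are fixed while $\tilde{\mathcal{E}}_t$ is the only randomness; $\pi_t^\ast$ is deterministic because the contexts are oblivious. Care is also needed to combine the $E_{2,t}^c$ contribution with the bound on $\E_t[s_t(\pi_t)]$, which is handled by splitting the expectation on $E_{2,t}$ and bounding the remainder by $p_2KM$.
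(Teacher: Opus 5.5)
Your proposal is correct and follows essentially the same route as the paper. Your set $C_t$ and minimizer $\bar\pi_t$ are the paper's $\bar{\mathcal{S}}_t$ and $\pi_t'$, and you use the same bound $(c_1+c_2)\bigl(2s_t(\bar\pi_t)+s_t(\pi_t)\bigr)$ on $E_{2,t}$, the same argument via exact optimality that on $E_{2,t}\cap E_{3,t}$ the maximizer cannot be saturated (giving $\P_t(\pi_t\in C_t)\geq p_3-p_2$), and the same split on $E_{2,t}^c$ charged at $KM$.
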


\begin{proof}
    Let 
    $c=c_1+c_2$,
    $\S_t=\set{\pi\in\Pi \mid c\sum_{i=1}^N \nrm{{\bm{\phi}_t(i,\pi(i))}}_{{\bm{V}_{t}^{-1}}}< f_t(\pi_t^\ast;{\bm{\theta}}^\ast)-f_t(\pi;{\bm{\theta}}^\ast)}$,
    $\bar{\S}_t=\Pi/\S_t$,
    and $\pi_t^\prime=\argmin_{\pi\in\bar{\S}_t}\sum_{i=1}^N \nrm{{\bm{\phi}}_t(i,\pi(i))}_{{\bm{V}_{t}^{-1}}}$.

    First, we will bound $f_t(\pi_t^\ast;{\bm{\theta}}^\ast)-f_t(\pi_t;{\bm{\theta}}^\ast)$ on $E_{2,t}$.
    At round $t$ on $E_{2,t}$, we have 
    \begin{align*}
        f_t(\pi_t^\ast;{\bm{\theta}}^\ast) - f_t(\pi_t;{\bm{\theta}}^\ast)
        &\leq f_t(\pi_t^\ast;{\bm{\theta}}^\ast) - f_t(\pi_t^\prime;{\bm{\theta}}^\ast) + {f_t(\pi_t^\prime;\bar{{\bm{\theta}}}_t)+h_t(\pi_t^\prime;\tilde{\mathcal{E}}_t)} \\
        &\quad - (f_t(\pi_t;\bar{{\bm{\theta}}}_t)+ h_t(\pi_t;\tilde{\mathcal{E}}_t)) + h_t(\pi_t;\tilde{\mathcal{E}}_t) - h_t(\pi_t^\prime;\tilde{\mathcal{E}}_t) \\
        &\quad + c_1\sum_{i=1}^{N}\prn{\nrm{{\bm{\phi}}_t(i,\pi_t^\prime(i))}_{{\bm{V}_{t}^{-1}}} + \nrm{{\bm{\phi}}_t(i,\pi_t(i))}_{{\bm{V}_{t}^{-1}}}}\\
        &\leq (c_1+c_2)\sum_{i=1}^{N}\prn{2\nrm{{\bm{\phi}}_t(i,\pi_t^\prime(i))}_{{\bm{V}_{t}^{-1}}} + \nrm{{\bm{\phi}}_t(i,\pi_t(i))}_{{\bm{V}_{t}^{-1}}}},
    \end{align*}
    where the first inequality follows from the definition of $E_{1,t}$,
    and the second inequality follows from the definition of $E_{2,t}$, the optimality of $\pi_t$, and the definition of $\mathcal{S}_t$ and $\pi_t^\prime$.

    Second, we want to bound $\E_t\brk*{\sum_{i=1}^N\nrm{{\bm{\phi}}_t(i,\pi_t^\prime(i))}_{{\bm{V}_{t}^{-1}}}}$ by $\E_t\brk*{\sum_{i=1}^N\nrm{{\bm{\phi}}_t(i,\pi_t(i))}_{{\bm{V}_{t}^{-1}}}}$.
    Note that
    \begin{align*}
        \E_t\brk*{\sum_{i=1}^N\nrm{{\bm{\phi}}_t(i,\pi_t(i))}_{{\bm{V}_{t}^{-1}}}}
        &\geq \E_t\brk*{\sum_{i=1}^N\nrm{{\bm{\phi}}_t(i,\pi_t^\prime(i))}_{{\bm{V}_{t}^{-1}}}\mid\pi_t\in\bar{\S}_t}\P_t\prn{\pi_t\in\bar{\S}_t}\\
        &\geq \sum_{i=1}^N\nrm{{\bm{\phi}}_t(i,\pi_t^\prime(i))}_{{\bm{V}_{t}^{-1}}}\P_t\prn{\pi_t\in\bar{\S}_t}.
    \end{align*}
    Thus, $\E_t\brk{ f_t(\pi_t^\ast;{\bm{\theta}}^\ast)-f_t(\pi_t;{\bm{\theta}}^\ast)}\leq c \prn{1+2/\P_t\prn{\pi_t\in\bar{\S}_t}}\sum_{i=1}^N\nrm{{\bm{\phi}}_t(i,\pi_t(i))}_{{\bm{V}_{t}^{-1}}}$ on $E_{2,t}$.

    Third, we will lower-bound $\P_t\prn{\pi_t\in\bar{\S}_t}$.
    \begin{align*}
        \P_t(\pi_t\in\bar{\mathcal{S}}_t)&\geq \P_t\prn*{{}^\exists \pi \in\bar{\mathcal{S}}_t, f_t(\pi;\bar{{\bm{\theta}}}_t) + h_t(\pi;\tilde{\mathcal{E}}_t)> \max_{\pi^\prime\in\mathcal{S}_t}f_t(\pi^\prime;\bar{{\bm{\theta}}}_t) + h_t(\pi^\prime;\tilde{\mathcal{E}}_t)}\\
        &\geq \P_t\prn*{f_t(\pi_t^\ast;\bar{{\bm{\theta}}}_t) + h_t(\pi_t^\ast;\tilde{\mathcal{E}}_t)> \max_{\pi^\prime\in\mathcal{S}_t}f_t(\pi^\prime;\bar{{\bm{\theta}}}_t) + h_t(\pi^\prime;\tilde{\mathcal{E}}_t)}\\
        &\geq \P_t\prn*{f_t(\pi_t^\ast;\bar{{\bm{\theta}}}_t) + h_t(\pi_t^\ast;\tilde{\mathcal{E}}_t)> f_t(\pi_t^\ast;{\bm{\theta}}^\ast), E_{2,t}}\\
        &\geq \P_t\prn*{f_t(\pi_t^\ast;\bar{{\bm{\theta}}}_t) + h_t(\pi_t^\ast;\tilde{\mathcal{E}}_t)> f_t(\pi_t^\ast;\bar{{\bm{\theta}}}_t) + c_1\sum_{i=1}^{N}\nrm{{\bm{\phi}}_t(i,\pi_t^\ast(i))}_{{\bm{V}_{t}^{-1}}}} - \P\prn*{\bar{E}_{2,t}}\\
        &= \P_t\prn*{h_t(\pi_t^\ast;\tilde{\mathcal{E}}_t)\geq c_1\sum_{i=1}^{N}\nrm{{\bm{\phi}}_t(i,\pi_t^\ast(i))}_{{\bm{V}_{t}^{-1}}}} - \P\prn*{\bar{E}_{2,t}},
    \end{align*}
    where 
    % the first inequality follows from the definition of $\bar{\S}_t$,
    the second inequality follows from the fact that $\pi_t^\ast\in\bar{\S}_t$,
    the third inequality follows from 
    $f_t(\pi;\bar{{\bm{\theta}}}_t)+h_t(\pi;\tilde{\mathcal{E}}_t)\leq f_t(\pi;{\bm{\theta}}^\ast) + (c_1+c_2)\sum_{i=1}^N\nrm{{\bm{\phi}}_t(i,\pi(i))}_{{\bm{V}_{t}^{-1}}}\leq f_t(\pi_t^\ast;{\bm{\theta}}^\ast)$
    for any $\pi\in\S_t$ on $E_{1,t}$ and $E_{2,t}$,
    and the fourth inequality follows from the definition of $E_{1,t}$.

    Finally, we can achieve the desired bound, using the inequality, 
    \begin{align*}
    \E_t\brk{ f_t(\pi_t^\ast;{\bm{\theta}}^\ast)-f_t(\pi_t;{\bm{\theta}}^\ast)}&
    =\E_t[\prn{ f_t(\pi_t^\ast;{\bm{\theta}}^\ast)-f_t(\pi_t;{\bm{\theta}}^\ast)}\ind\brk{E_{2,t}}]\\
    &\quad+\E_t[\prn{ f_t(\pi_t^\ast;{\bm{\theta}}^\ast)-f_t(\pi_t;{\bm{\theta}}^\ast)}\ind\brk{\bar{E}_{2,t}}]\\
    &\leq\E_t[\prn{ f_t(\pi_t^\ast;{\bm{\theta}}^\ast)-f_t(\pi_t;{\bm{\theta}}^\ast)}\ind\brk{E_{2,t}}]+KM\P_t(\bar{E}_{2,t}). 
    \end{align*}
\end{proof}

\begin{remark}
    \label{rem:exact_oracle_ts}
    We explain why the analysis of CAB-TS assumes access to an exact optimization oracle.
    For brevity, write
    $F_t(\pi)=f_t(\pi;\bar{{\bm{\theta}}}_t)+h_t(\pi;\tilde{\mathcal{E}}_t)$.
    The proof of \cref{lem:sub_bound_TS} uses the bound
    \[
        \P_t(\pi_t\in\bar{\mathcal{S}}_t)
        \geq
        \P_t\prn*{{}^\exists \pi \in\bar{\mathcal{S}}_t
        ,
        F_t(\pi)> \max_{\pi^\prime\in\mathcal{S}_t}F_t(\pi^\prime)}.
    \]
    This step relies on exact optimality.
    Since $\pi_t^\ast\in\bar{\mathcal{S}}_t$, the set $\bar{\mathcal{S}}_t$ is nonempty.
    On the event on the right-hand side, an exact maximizer of $F_t$ cannot lie in $\mathcal{S}_t$; otherwise, the allocation in $\bar{\mathcal{S}}_t$ would have a strictly larger value, contradicting optimality.
    Hence the event implies $\pi_t\in\bar{\mathcal{S}}_t$.
    This implication is not preserved by an $\alpha$-approximate optimization oracle with $\alpha<1$.
    Exact maximization only requires the separation $\max_{\pi\in\bar{\mathcal{S}}_t}F_t(\pi)>\max_{\pi\in\mathcal{S}_t}F_t(\pi)$.
    However, an $\alpha$-approximate optimization oracle may still return an allocation in $\mathcal{S}_t$ whenever $\max_{\pi\in\mathcal{S}_t}F_t(\pi)\geq \alpha \max_{\pi\in\bar{\mathcal{S}}_t}F_t(\pi)$, because such an allocation satisfies the approximation guarantee.
    Thus, since the proof only ensures that a good allocation can attain the largest objective value, and does not provide the stronger margin $\max_{\pi\in\mathcal{S}_t}F_t(\pi)<\alpha \max_{\pi\in\bar{\mathcal{S}}_t}F_t(\pi)$, an $\alpha$-approximate optimization oracle with $\alpha<1$ may still return an allocation in $\mathcal{S}_t$, so the above probability lower bound is unavailable.
\end{remark}

$p_2$ and $p_3$ in \cref{lem:sub_bound_TS} can be bounded as follows, respectively.

\begin{lemma}
    \label{lem:bound_tilde_bar}
    For each $t\geq1$, $E_{2,t}$ holds with probability at least $1-2\delta$.
    % where
    % \[c_2= c_1\kappa_r^{-1}\kappa_\mu^{-3/2} L_r L_\mu^{3/2}\sqrt{2 N \log(KN/\delta)}.\]
\end{lemma}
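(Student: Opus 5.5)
The plan is to reduce the statement to a per-user Gaussian tail bound combined with a union bound over the $KN$ pairs $(i,a)$, rather than over all $K^N$ allocations. Since $h_t(\pi;\tilde{\mathcal{E}}_t)=\sum_{i=1}^N \bm{\phi}_t(i,\pi(i))^\top\tilde{\bm{\epsilon}}_t(i)$, the triangle inequality gives
\[
\abs{h_t(\pi;\tilde{\mathcal{E}}_t)}\leq \sum_{i=1}^N \abs{\bm{\phi}_t(i,\pi(i))^\top\tilde{\bm{\epsilon}}_t(i)}.
\]
It therefore suffices to show that, with conditional probability at least $1-2\delta$, every pair $(i,a)\in[N]\times[K]$ satisfies
\[
\abs{\bm{\phi}_t(i,a)^\top\tilde{\bm{\epsilon}}_t(i)}\leq c_2\nrm{\bm{\phi}_t(i,a)}_{\bm{V}_t^{-1}}.
\]
Summing this over $i$ with $a=\pi(i)$ then yields the inequality defining $E_{2,t}$ simultaneously for all $\pi\in\Pi$.

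Conditionally on $\mathcal{F}_{t-1}$, the quantities $\bar{\bm{\theta}}_t$ and $\bm{H}_t$ are fixed, and each $\bm{\phi}_t(i,a)^\top\tilde{\bm{\epsilon}}_t(i)$ is a centered Gaussian with variance $a^2\nrm{\bm{\phi}_t(i,a)}_{\bm{H}_t^{-1}}^2$. The contexts are chosen obliviously, so they are not affected by this conditioning. The standard Gaussian tail bound $\P(\abs{Z}>s\,\mathrm{sd})\leq 2e^{-s^2/2}$ with $s=\sqrt{2\log(KN/\delta)}$ gives failure probability at most $2\delta/(KN)$ for each pair. A union bound over the $KN$ pairs then gives total failure probability at most $2\delta$.

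It remains to convert the $\bm{H}_t^{-1}$-norm into the $\bm{V}_t^{-1}$-norm. The paper notes $\kappa_\mu\bm{V}_t\preceq\bm{H}_t$, which gives $\nrm{x}_{\bm{H}_t^{-1}}\leq\kappa_\mu^{-1/2}\nrm{x}_{\bm{V}_t^{-1}}$. For $t=1$, we have $\bm{H}_1=L_\mu\lambda_0\bm{I}\succeq\kappa_\mu\bm{V}_1$ directly. The weighting by $\dot\mu(\bm{x}_s(i)^\top\bar{\bm{\theta}}_t)$ requires $\dot\mu\geq\kappa_\mu$ at the point $\bar{\bm{\theta}}_t$. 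This holds under Assumption~\ref{asp:kappa} whenever $\nrm{\bar{\bm{\theta}}_t-\bm{\theta}^\ast}_2\leq D+1$, which is Lemma~\ref{lem:nrm_theta_bar_t}. I would either invoke that lemma's event, whose failure is absorbed into the overall $\delta$ budget, or simply use the lower bound $\dot\mu\geq\kappa_\mu$ as the paper does when it asserts $\kappa_\mu\bm{V}_t\preceq\bm{H}_t$.

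Combining the pieces, each term is bounded by
\[
a\,\kappa_\mu^{-1/2}\sqrt{2\log(KN/\delta)}\,\nrm{\bm{\phi}_t(i,a)}_{\bm{V}_t^{-1}} .
\]
With $a=c_1\sqrt{L_\mu N}$, the coefficient equals $c_1\sqrt{2\kappa_\mu^{-1}L_\mu N\log(KN/\delta)}=c_2$, as required. The only delicate point is the $\kappa_\mu\bm{V}_t\preceq\bm{H}_t$ comparison at the random estimate $\bar{\bm{\theta}}_t$. The rest is routine, and the key structural observation is that per-user independence lets the union bound run over $KN$ pairs instead of all allocations.
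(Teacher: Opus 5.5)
Your proposal is correct and follows essentially the same route as the paper. Both arguments use a per-pair Gaussian tail bound with a union bound over the $KN$ pairs $(i,a)$, pass to $\bm{V}_t^{-1}$-norms via $\kappa_\mu\bm{V}_t\preceq\bm{H}_t$, and finish with the triangle inequality summed over users; the only difference is that the paper imports the per-pair step from Kveton et al.'s Lemma~4 instead of deriving it.

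Your caveat that $\kappa_\mu\bm{V}_t\preceq\bm{H}_t$ needs $\|\bar{\bm{\theta}}_t-\bm{\theta}^\ast\|_2\le D+1$ is well taken. It matches how the paper actually uses the lemma: conditionally, on the $\mathcal{F}_{t-1}$-measurable event $E_{4,t}$ in the proof of the CAB-TS regret theorem.
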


\begin{lemma}
    \label{lem:bound_E3}
    $E_{3,t}$ holds with probability at least $0.15$.
    % $a = c_1 \kappa_r^{-1}\kappa_\mu^{-1} \sqrt{L_\mu N}$
\end{lemma}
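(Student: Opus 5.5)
Conditional on $\mathcal{F}_{t-1}$, the allocation $\pi_t^\ast$ is deterministic, because the contexts are chosen obliviously and $\pi_t^\ast$ depends only on $\bm{\theta}^\ast$. The plan is to compute the exact conditional law of $h_t(\pi_t^\ast;\tilde{\mathcal{E}}_t)$ and then apply a Gaussian anti-concentration bound.

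Write $\bm{x}^\ast_t(i)=\bm{\phi}_t(i,\pi_t^\ast(i))$. Since the $\tilde{\bm{\epsilon}}_t(i)$ are i.i.d.\ $\mathcal{N}(\bm{0},a^2\bm{H}_t^{-1})$ given $\mathcal{F}_{t-1}$, the quantity $h_t(\pi_t^\ast;\tilde{\mathcal{E}}_t)=\sum_i \bm{x}_t^\ast(i)^\top\tilde{\bm{\epsilon}}_t(i)$ is a sum of independent centered Gaussians. Its variance is
\[
s^2=a^2\sum_{i=1}^N\nrm{\bm{x}_t^\ast(i)}_{\bm{H}_t^{-1}}^2 .
\]
This is the point where user-wise independence matters. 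With a common perturbation, the variance would be $a^2\nrm{\sum_i\bm{x}_t^\ast(i)}^2_{\bm{H}_t^{-1}}$, and the vectors could cancel.

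Next, lower-bound $s$ in terms of the target threshold $c_1\sum_i\nrm{\bm{x}_t^\ast(i)}_{\bm{V}_t^{-1}}$. From $\bm{H}_t\preceq L_\mu\bm{V}_t$ (stated in the algorithm description; it also holds for $t=1$ since $\bm{H}_1=L_\mu\lambda_0\bm{I}$) we get $\bm{H}_t^{-1}\succeq L_\mu^{-1}\bm{V}_t^{-1}$. Hence $s^2\geq a^2L_\mu^{-1}\sum_i\nrm{\bm{x}_t^\ast(i)}^2_{\bm{V}_t^{-1}}$. Cauchy--Schwarz gives $\sum_i\nrm{\cdot}^2\geq N^{-1}(\sum_i\nrm{\cdot})^2$. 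With $a=c_1\sqrt{L_\mu N}$ this yields
\[
s\geq c_1\sum_{i=1}^N\nrm{\bm{x}_t^\ast(i)}_{\bm{V}_t^{-1}}.
\]
The choice of $a$ is tuned exactly so that this inequality holds with constant $1$.

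Then handle the anti-concentration. If $s=0$, the threshold is $0$ and $h_t=0$, so $E_{3,t}$ holds trivially (the inequality is non-strict). Otherwise,
\[
\P_t(E_{3,t})=\P(Z\geq \text{threshold}/s)\geq\P(Z\geq1)=1-\Phi(1)\approx0.1587\geq0.15,
\]
where $Z\sim\mathcal{N}(0,1)$.

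The main obstacle is the measurability point: one must check that $\bm{H}_t$, $\bm{V}_t$ and $\pi_t^\ast$ are all $\mathcal{F}_{t-1}$-measurable. $\bm{H}_t$ depends on $\bar{\bm{\theta}}_t$, which is built from past data, so it is fine. The other point needing care is that the Loewner comparison $\kappa_\mu\bm{V}_t\preceq\bm{H}_t\preceq L_\mu\bm{V}_t$ only needs $\dot\mu\leq L_\mu$ everywhere, which follows from the Lipschitz continuity of $\mu$. No high-probability event is required for this upper comparison, so the bound holds for every $t$ unconditionally.
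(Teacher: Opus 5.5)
Your proof is correct and follows the paper's argument step for step: independence across users makes $h_t(\pi_t^\ast;\tilde{\mathcal{E}}_t)$ a centered Gaussian with variance $a^2\sum_{i}\nrm{\bm{\phi}_t(i,\pi_t^\ast(i))}_{\bm{H}_t^{-1}}^2$; then $\bm{H}_t\preceq L_\mu\bm{V}_t$, Cauchy--Schwarz and $a=c_1\sqrt{L_\mu N}$ place the threshold within one standard deviation, and $\P(Z\geq 1)\approx 0.1587\geq 0.15$ finishes it. Your remarks on the degenerate case $s=0$ and on the $\mathcal{F}_{t-1}$-measurability of $\pi_t^\ast$ and $\bm{H}_t$ make explicit what the paper leaves implicit, but they do not change the route.
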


We prove these lemmas.

\begin{proof}[Proof of \cref{lem:bound_tilde_bar}]
    From \citet[Lemma 4]{aistas_kveton20a_glm}, $E_{2,t}^\prime=\set{{}^\forall (i,a)\in [N]\times[K] \mid \abs{{\bm{\phi}}_t(i,a)^\top\tilde{\bm{\epsilon}}_t(i)}\leq c_2\nrm{{\bm{\phi}}_t(i,a)}_{{\bm{V}_{t}^{-1}}}}$ holds with probability at least $1-2\delta$.
    Here, we use the fact that $\bm{H}_t \succeq \kappa_\mu \bm{V}_t$ due to the definition of $\bm{H}_t$ and $\bm{V}_t$.
    Their proof uses the union bound, so we can apply that lemma to our algorithm, which samples $\tilde{\bm{\epsilon}}_t(i)$ independently for each $i\in[N]$.
    On $E_{2,t}^\prime$, we have $\sum_{i=1}^N \abs{{\bm{\phi}}_t(i,\pi(i))^\top\tilde{\bm{\epsilon}}_t(i)}\leq c_2\sum_{i=1}^N \nrm{{\bm{\phi}}_t(i,\pi(i))}_{{\bm{V}_{t}^{-1}}}$ for any $\pi\in\Pi$.
    In addition, we have $\abs{h_t(\pi;\tilde{\mathcal{E}}_t)}\leq \sum_{i=1}^N \abs{{\bm{\phi}}_t(i,\pi(i))^\top{\tilde{\bm{\epsilon}}_t(i)}}$ from the triangle inequality.
    Therefore, on $E_{2,t}^\prime$, it holds that $\abs{h_t(\pi;\tilde{\mathcal{E}}_t)}\leq c_2 \sum_{i=1}^N\nrm{{\bm{\phi}}_t(i,\pi(i))}_{{\bm{V}_{t}^{-1}}}$ for any $\pi\in\Pi$.
    In other words, $E_{2,t}$ holds with probability at least $1-2\delta$.
\end{proof}

\begin{proof}[Proof of \cref{lem:bound_E3}]
    Using ${\bm{H}_{t}} \preceq L_\mu {\bm{V}_{t}}$, and Cauchy--Schwarz inequality, we have
    \begin{align}
        \label{eq:probability_f_3t}
        &\P_t\prn*{h_t(\pi_t^\ast;\tilde{\mathcal{E}}_t) \geq c_1\sum_{i=1}^N\nrm{{\bm{\phi}}_t(i,\pi_t^\ast(i))}_{{\bm{V}_{t}^{-1}}}}\nonumber\\ 
        % &\geq \P_t\prn*{f_t(\pi_t^\ast;\tilde{{\bm{\theta}}}_t) - f_t(\pi_t^\ast;\bar{{\bm{\theta}}}_t) > c_1\sqrt{d\cond\prn{{\bm{V}_{t}}}}\nrm{\sum_{{\bm{\phi}}\in \pi_t^\ast}{\bm{\phi}}}_{{\bm{V}_{t}^{-1}}}}\\
        &\geq \P_t\prn*{h_t(\pi_t^\ast;\tilde{\mathcal{E}}_t) \geq c_1 \sqrt{L_\mu N \sum_{i=1}^N\nrm{{\bm{\phi}}_t(i,\pi_t^\ast(i))}_{{\bm{H}_{t}}^{-1}}^2 }} \nonumber\\
        & = \P_t\prn*{\sum_{i=1}^N {\bm{\phi}}_t(i,\pi_t^\ast(i))^\top\tilde{\bm{\epsilon}}_t(i)\geq a\sqrt{\sum_{i=1}^N\nrm{{\bm{\phi}}_t(i,\pi_t^\ast(i))}_{{\bm{H}_{t}}^{-1}}^2 }}
    \end{align}
    
    Next, we derive a lower bound.
    From $\tilde{\bm{\epsilon}}_t(i)\sim \mathcal{N}(\bm{0},a^2{\bm{H}_{t}}^{-1})$, we have ${\bm{\phi}}_t(i,\pi_t^\ast(i))^\top\tilde{\bm{\epsilon}}_t(i) \sim \mathcal{N}\prn{0, a^2\nrm{{\bm{\phi}}_t(i,\pi_t^\ast(i))}_{{\bm{H}_{t}}^{-1}}^2}$.
    Moreover, since $\set{\tilde{\bm{\epsilon}}_t(i)}_{i=1}^N$ are independent, it follows that $\sum_{i=1}^N {\bm{\phi}}_t(i,\pi_t^\ast(i))^\top\tilde{\bm{\epsilon}}_t(i)\sim \mathcal{N}\prn{0, a^2\sum_{i=1}^{N}\nrm{{\bm{\phi}}_t(i,\pi_t^\ast(i))}_{{\bm{H}_{t}}^{-1}}^2}$.
    Consequently, we obtain
    \begin{align}
        \label{eq:probability_linear_3t}
        \P_t\prn*{\sum_{i=1}^N {\bm{\phi}}_t(i,\pi_t^\ast(i))^\top\tilde{\bm{\epsilon}}_t(i)\geq a\sqrt{\sum_{i=1}^N\nrm{{\bm{\phi}}_t(i,\pi_t^\ast(i))}_{{\bm{H}_{t}}^{-1}}^2 }} \geq 0.15.
    \end{align}
    Combining \eqref{eq:probability_f_3t} with \eqref{eq:probability_linear_3t} yields the desired bound.   
\end{proof}
    
We are now ready to prove \cref{thm:bound_TS_complete}.
\begin{proof}[Proof of \cref{thm:bound_TS_complete}]
    Let $E_{4,t} = \set{\nrm{\bar{{\bm{\theta}}}_t-{\bm{\theta}}^\ast}_2\leq D+1}$, $\P\prn{E_{4,t}}\geq 1-p_4$, $p_1\geq \P(\bar{E}_{1,t}\land E_{4,t})$, $\P_t\prn{{E}_{2,t}}\geq 1-p_2$ on the event $E_{4,t}$, and $\P_t\prn{E_{3,t}}\geq p_3$.

    \begin{align*}
        \E\brk{\mathcal{R}_T}&= \sum_{t=1}^{T}\E\brk{ f_t(\pi_t^\ast;{\bm{\theta}}^\ast)-f_t(\pi_t;{\bm{\theta}}^\ast)}\\
        &\leq \sum_{t=1}^{T}\E\brk{\prn{ f_t(\pi_t^\ast;{\bm{\theta}}^\ast)-f_t(\pi_t;{\bm{\theta}}^\ast)}\ind\brk{E_{1,t},E_{4,t}}} + \prn{p_1+p_4} KMT\\
        &\leq \sum_{t=1}^{T}\E\brk{\E_t\brk{\prn{ f_t(\pi_t^\ast;{\bm{\theta}}^\ast)-f_t(\pi_t;{\bm{\theta}}^\ast)}\ind\brk{E_{1,t},E_{4,t}}}} + \prn{p_1+p_4} KMT
    \end{align*}

    From \cref{lem:sub_bound_TS}, it holds that
    \begin{align*}
        \E\brk{\mathcal{R}_T} &\leq \prn{c_1+c_2}\prn*{1+\frac{2}{p_3-p_2}} \E\brk*{\sum_{t=1}^{T}\sum_{i=1}^N \nrm{{\bm{x}_t}(i)}_{{\bm{V}_{t}^{-1}}}} + \prn{p_1+p_2+p_4}KMT\\
        &\leq \prn{c_1+c_2}\prn*{1+\frac{2}{p_3-p_2}} \E\brk*{\sqrt{NT\sum_{t=1}^{T}\sum_{i=1}^N \nrm{{\bm{x}_t}(i)}_{{\bm{V}_{t}^{-1}}}^2}} + \prn{p_1+p_2+p_4}KMT,
    \end{align*} 
    where we used the Cauchy--Schwarz inequality in the last inequality.

    Next, we bound $p_1$, $p_2$, $p_3$, and $p_4$.
    From \cref{lem:bound_bar_ast,lem:bound_tilde_bar,lem:bound_E3,lem:nrm_theta_bar_t}, we have $p_1\leq \delta$, $p_2\leq 2\delta$, $p_3\geq0.15$, and $p_4\leq\delta$.
    In addition, from the definition of $\lambda_0$, we have $p_4\leq \delta$ from \citet[Lemma 9]{aistas_kveton20a_glm}.
    Therefore, using these bounds and \eqref{eq:sum_nrm_xt_bound}, we can achieve the desired bound.
\end{proof}

\begin{remark}
    \label{rem:reason_for_iid}
    For each $i$, sampling i.i.d. from a Gaussian distribution is used to obtain a lower bound for \eqref{eq:probability_f_3t}.
    Indeed, $\sum_{i=1}^N {\bm{\phi}}_t(i,\pi_t^\ast(i))^\top\tilde{\bm{\epsilon}}_t(i)\sim\mathcal{N}\prn{0, a^2\sum_{i=1}^{N}\nrm{{\bm{\phi}}_t(i,\pi_t^\ast(i))}_{{\bm{H}_{t}}^{-1}}^2}$, which leads to the bound in \eqref{eq:probability_linear_3t}, is derived from the independence.
    On the other hand, if we sample a single $\bar{\bm{\epsilon}}_t$ from $\mathcal{N}\prn{0, a^2\sum_{i=1}^{N}\nrm{{\bm{\phi}}_t(i,\pi_t^\ast(i))}_{{\bm{H}_{t}}^{-1}}^2}$ and set $\tilde{\bm{\epsilon}}_t(i)=\bar{\bm{\epsilon}}_
    t$ for any $i\in[N]$, then $\sum_{i=1}^N {\bm{\phi}}_t(i,\pi_t^\ast(i))^\top\tilde{\bm{\epsilon}}_t(i)\sim\mathcal{N}\prn{0, a^2\nrm{\sum_{i=1}^{N}{\bm{\phi}}_t(i,\pi_t^\ast(i))}_{{\bm{H}_{t}}^{-1}}^2}$, and this prevents us from obtaining a desirable probability bound.
\end{remark}

\subsection{Variant of \cref{alg:ts}}
\label{app:variant_ts}
\begin{algorithm}[H]
\caption{CAB-TS (heuristic variant)}
\label{alg:ts2}
\begin{algorithmic}[1]
    \Require {The total rounds $T$, the number of users $N$, tuning parameter $\lambda_0$ and $a$, and access to a practical allocation routine.}
    \State {$\mathcal{D}_1=\varnothing$, $\bm{V}_1=\lambda_0 \bm{I}$, and $\bm{H}_1=L_\mu \lambda_0 \bm{I}$.}
    \For {$t=1,\dots,T$}
        \State {${\bm{V}_{t}} \gets \lambda_0 {\bm{I}} + \sum_{s =1}^{t-1}\sum_{i=1}^N {\bm{x}_{s}} (i){\bm{x}_{s}} (i)^\top$.}
        \State {$\bar{{\bm{\theta}}}_t\gets\argmin_{{\bm{\theta}}\in\R^d}\tilde{\mathcal{L}} (\mathcal{D}_t;{\bm{\theta}},\kappa_\mu\lambda_0)$.}
        \State {If $t\geq2$, ${\bm{H}_{t}}\gets \sum_{s=1}^{t-1}\sum_{i=1}^N \dot{\mu}({\bm{x}_{s}} (i)^\top\bar{\bm{\theta}}_t)\prn{{\bm{x}_{s}} (i){\bm{x}_{s}} (i)^\top+ \frac{\lambda_0}{N(t-1)}{\bm{I}}}$.}
        \For {$i=1,\dots,N$}
            \State {$\tilde{{\bm{\theta}}}_t(i)\overset{\text{i.i.d.}}{\sim} \mathcal{N}(\bar{{\bm{\theta}}}_t,a^2 {\bm{H}_{t}}^{-1})$.}
        \EndFor
        \State {Call the practical allocation routine for $\tilde{f}_t\prn{\pi;\tilde{\Theta}_t}$ and let $\pi_t$ denote its output.}
        \State {Observe $y_t(i)$ for any $i\in[N]$.}
        \State {Let ${\bm{x}_t}(i)\gets {\bm{\phi}}_t(i,\pi_t(i))$ for any $i\in[N]$ and $\mathcal{D}_{t+1}\gets \mathcal{D}_{t}\cup\set{\prn{{\bm{x}_t}(i),y_t(i)}}_{i\in[N]}$.}
    \EndFor
\end{algorithmic}
\end{algorithm}
In this section, we summarize a heuristic variant of CAB-TS related to \cref{alg:ts}. We retain it only for the experiments and do not include it in the theoretical contribution.

In \cref{alg:ts}, we sample $\tilde{\bm{\epsilon}}_t(i)$ from $\mathcal{N}(\bm{0},a^2 {\bm{H}_{t}}^{-1})$ for any $i\in[N]$, and maximize $f_t(\pi;\bar{{\bm{\theta}}}_t)+h_t(\pi;\tilde{\mathcal{E}}_t)$. 
However, it is also a natural idea to sample $\tilde{{\bm{\theta}}}_t(i)$ from $\mathcal{N}(\bar{{\bm{\theta}}}_t,a^2 {\bm{H}_{t}}^{-1})$ and instead maximize $\tilde{f}_t$, defined as follows:
\begin{align}
    \label{eq:definition_f_tilde}
    \tilde{f}_t(\pi;\vartheta)=\sum_{a\in[K]}r\prn*{\sum_{i\in\pi^{-1}(a)}\mu({\bm{\phi}}_t(i,a)^\top\vartheta(i))}.
\end{align}
The algorithm can be written as \cref{alg:ts2}.
% We use it in the experiments only as a heuristic variant.

\begin{remark}
    \label{rem:worse_point}
    The heuristic variant \cref{alg:ts2} is retained only for empirical comparison and is not supported by a theoretical guarantee in this paper.
    To extend the proof of \cref{thm:bound_TS_complete} by a similar argument, we would need a positive lower bound on the probability of the event
    \[
        E_{3,t}^\star
        =
        \set*{
            \tilde{f}_t(\pi_t^\ast;\tilde{\Theta}_t) - f_t(\pi_t^\ast;\bar{{\bm{\theta}}}_t)
            \geq
            c_1^\star \sum_{i\in[N]}\nrm{{\bm{\phi}}_t(i,\pi_t^\ast(i))}_{{\bm{V}_{t}^{-1}}}
        }.
    \]
    However, for the direct nonlinear objective, the concavity of $r$ alone does not yield a useful lower bound on
    $\tilde{f}_t(\pi_t^\ast;\tilde{\Theta}_t) - f_t(\pi_t^\ast;\bar{{\bm{\theta}}}_t)$,
    and thus obtaining such a probabilistic lower bound is difficult.
    Even if one imposes a uniform lower-bound assumption on $\dot{r}$, analogous to the assumption on $\dot{\mu}$, this issue remains unresolved.
\end{remark}

\section{Omitted details of \cref{subsec:one_pass}}
\label{app:onepass}
In this section, we provide the omitted details of \cref{subsec:one_pass}.
For completeness, we first restate CAB-OFU with one-pass OMD update and the quantities used in the analysis, since the main text only presents a compressed description.

\subsection{Complete description of CAB-OFU with one-pass OMD update}
\label{subsec:onepass_complete_description}
We consider the following discussions under \cref{asp:self_concordance}.
The one-pass variant is summarized in \cref{alg:onepass_ofu}.

\begin{algorithm}[H]
\caption{CAB-OFU with one-pass OMD update}
\label{alg:onepass_ofu}
{\begin{algorithmic}[1]
\Require {The total rounds $T$, the number of users $N$, and tuning parameters $\lambda_{\mathrm{op}}$, $\eta$, $\alpha$, and $\delta$.}
\State {Initialize $\bm{\theta}_1\in\Theta$ and $\bm{Q}_1\gets \lambda_{\mathrm{op}}\bm{I}$.}
\For {$t=1,\dots,T$}
    \State {Call an $\alpha$-approximate OFU oracle satisfying \eqref{eq:onepass_ofu_rule} and obtain $\pi_t$.}
    \State {Observe $y_t(i)$ for all $i\in[N]$.}
    \State {Set $\bm{x}_t(i)\gets\bm{\phi}_t(i,\pi_t(i))$ for all $i\in[N]$.}
    \State {Update $\bm{\theta}_{t+1}$ by \eqref{eq:re_onepass_theta_update}.}
\EndFor
\end{algorithmic}}
\end{algorithm}

For each round $t$ and user $i$, define the negative log-likelihood
\[
    \ell_{t,i}(\bm{\theta})
    =
    - y_t(i)\bm{x}_t(i)^\top \bm{\theta}
    + m\prn{\bm{x}_t(i)^\top \bm{\theta}},
\]
where $\dot{m}=\mu$.
The one-pass OMD update uses the quadratic surrogate
\[
    \tilde{\ell}_t(\bm{\theta})
    =
    \sum_{i=1}^N
    \prn{
        \inpr{\nabla_{\bm{\theta}} \ell_{t,i}(\bm{\theta}_t), \bm{\theta}-\bm{\theta}_t}
        +
        \frac{1}{2}\nrm{\bm{\theta}-\bm{\theta}_t}_{\nabla_{\bm{\theta}}^2 \ell_{t,i}(\bm{\theta}_t)}^2
    }.
\]
The parameter is then updated by
\begin{equation}
    \label{eq:re_onepass_theta_update}
    \bm{\theta}_{t+1}
    =
    \argmin_{\bm{\theta}\in\Theta}
    \prn*{
        \tilde{\ell}_t(\bm{\theta})
        +
        \frac{1}{2\eta}\nrm{\bm{\theta}-\bm{\theta}_t}_{\bm{Q}_t}^2
    },
\end{equation}
and the matrix sequence is
\[
    \bm{Q}_{t+1}
    =
    \lambda_{\mathrm{op}} \bm{I}
    +
    \sum_{s=1}^{t}\sum_{i=1}^N
    \nabla_{\bm{\theta}}^2 \ell_{s,i}(\bm{\theta}_{s+1})
    =
    \lambda_{\mathrm{op}} \bm{I}
    +
    \sum_{s=1}^{t}\sum_{i=1}^N
    \dot{\mu}\prn{\bm{x}_s(i)^\top\bm{\theta}_{s+1}}
    \bm{x}_s(i)\bm{x}_s(i)^\top.
\]

We use the confidence set
\[
    C_t(\delta)
    =
    \set*{
        \bm{\theta}\in\Theta
        \,\middle|\,
        \nrm{\bm{\theta}-\bm{\theta}_t}_{\bm{Q}_t}
        \leq
        \beta_t(\delta)
    },
\]
where
\[
    \beta_t(\delta)
    =
    \sqrt{
        4\lambda_{\mathrm{op}} D^2
        +
        2\eta \log\prn*{1/\delta}
        +
        d\prn*{6\eta^2+\eta}
        \log\prn{1+ L_\mu Nt/\lambda_{\mathrm{op}}}
    }.
\]
Given the optimistic value
\[
    \mathrm{OPT}^{\mathrm{op}}_t
    =
    \max_{\pi\in\Pi}\max_{\bm{\theta}\in C_t(\delta)} f_t(\pi;\bm{\theta}),
\]
the allocation is chosen by an $\alpha$-approximate OFU rule:
\begin{equation}
    \label{eq:onepass_ofu_rule}
    \max_{\bm{\theta}\in C_t(\delta)} f_t(\pi_t;\bm{\theta})
    \geq
    \alpha \mathrm{OPT}^{\mathrm{op}}_t.
\end{equation}

Next, we compare the cost of updating the parameter-update step.
If the regularized MLE at round $t$ is solved by an iterative method with $I_t$ optimization iterations, and one pass over the first $t-1$ rounds costs $O(t)$ when suppressing the dependence on $d$ and $N$, then the MLE-based update costs $O(t I_t)$.
In contrast, once $\bm{Q}_t$ is maintained incrementally, the one-pass OMD update \eqref{eq:re_onepass_theta_update} uses only the current-round surrogate and $\bm{Q}_t$.
Hence, if the quadratic subproblem is solved in $\tilde{I}_t$ optimization iterations, its update cost is $O(\tilde{I}_t)$.
The optimistic allocation step is separate and, in the practical implementation, may require multiple calls to the submodular welfare oracle.

\subsection{Proof of \cref{thm:onepass_regret}}
% The technical lemmas used in the proof are collected in \cref{subsec:onepass_lemmas}.
We provide the complete version of \cref{thm:onepass_regret} and its proof.
\begin{theorem}[complete version of \cref{thm:onepass_regret}]
    \label{thm:onepass_regret_complete}
    Under \cref{asp:self_concordance}, with probability at least $1-\delta$, the $\alpha$-approximate regret of \cref{alg:onepass_ofu} with $\eta=1+RD$ and $\lambda_{\mathrm{op}}\geq\max\set{14d\eta R^2, 6\eta RDL_\mu N }$ is upper bounded as
    \begin{align*}
        \mathcal{R}_T^\alpha&\leq 2\kappa_\mu^{-1/2} L_r L_\mu \beta_T(\delta) \prn*{\sqrt{2 d NT \log\prn*{1+\frac{N T}{d \lambda_{\mathrm{op}}}}} + \frac{2dN}{\sqrt{\lambda_{\mathrm{op}}}} \log\prn*{1+\frac{N T}{d \lambda_{\mathrm{op}}}}} \nonumber\\
        &\qquad + 16 \kappa_\mu^{-1} RL_\mu L_r d N \beta_T^2(\delta)\log\prn*{1+\frac{N T}{d \lambda_{\mathrm{op}}}}
    \end{align*}
\end{theorem}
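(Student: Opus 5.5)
The argument follows the OFU template, combined with the confidence set of \citet{zhang2025generalizedlinearbanditsoptimal} for the one-pass OMD estimator.

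\textbf{Step 1: confidence set.} Under \cref{asp:self_concordance}, with $\eta=1+RD$ and $\lambda_{\mathrm{op}}\geq 14d\eta R^2$, I would invoke (or re-derive, summing the per-user losses within a round) the one-pass OMD confidence result. It gives that, with probability at least $1-\delta$, $\bm{\theta}^\ast\in C_t(\delta)$ for all $t$ with the stated $\beta_t(\delta)$. The derivation uses the self-concordance control $\nabla^2\ell_{t,i}(\bm{\theta})\succeq e^{-R|\bm{x}^\top(\bm{\theta}-\bm{\theta}')|}\nabla^2\ell_{t,i}(\bm{\theta}')$ together with a martingale concentration bound for $\sum_{s,i}(y_s(i)-\mu(\bm{x}_s(i)^\top\bm{\theta}^\ast))\bm{x}_s(i)$ in the $\bm{Q}_t^{-1}$ norm. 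This step is the main obstacle. The batch of $N$ observations per round makes the surrogate curvature of order $N$, which is why $\lambda_{\mathrm{op}}\geq 6\eta RDL_\mu N$ is needed: it keeps $\bm{Q}_t$ dominating a single round's Hessian, so that the stability term $\nrm{\bm{\theta}_{t+1}-\bm{\theta}_t}$ stays controlled.

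\textbf{Step 2: one-step regret.} On this event, let $\tilde{\bm{\theta}}_t\in\argmax_{\bm{\theta}\in C_t(\delta)}f_t(\pi_t;\bm{\theta})$. By \eqref{eq:onepass_ofu_rule} and $\bm{\theta}^\ast\in C_t(\delta)$,
\[
\alpha f_t(\pi_t^\ast;\bm{\theta}^\ast)-f_t(\pi_t;\bm{\theta}^\ast)\leq f_t(\pi_t;\tilde{\bm{\theta}}_t)-f_t(\pi_t;\bm{\theta}^\ast)\leq L_rL_\mu\sum_{i}\abs{\bm{x}_t(i)^\top(\tilde{\bm{\theta}}_t-\bm{\theta}^\ast)}.
\]
Cauchy--Schwarz in the $\bm{Q}_t$ norm bounds each summand by $2\beta_t\nrm{\bm{x}_t(i)}_{\bm{Q}_t^{-1}}$.

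\textbf{Step 3: comparing $\bm{Q}_t$ with $\bm{V}_t$.} To pass from $\bm{Q}_t$ to the unweighted design matrix (with $\lambda_0=\lambda_{\mathrm{op}}$), I would use $\dot\mu\geq\kappa_\mu$ on $\Theta$ (\cref{asp:kappa}), so that $\bm{Q}_t\succeq\kappa_\mu\bm{V}_t$ and hence $\nrm{\bm{x}}_{\bm{Q}_t^{-1}}\leq\kappa_\mu^{-1/2}\nrm{\bm{x}}_{\bm{V}_t^{-1}}$. Summing over $t$ and applying \eqref{eq:sum_nrm_xt_bound} with $\beta_t\leq\beta_T$ then gives the first term of the bound.

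\textbf{Step 4: the second-order term.} The second term, $16\kappa_\mu^{-1}RL_\mu L_rdN\beta_T^2\log(\cdot)$, arises if the linearization in Step 2 is done around $\bm{\theta}^\ast$ using self-concordance rather than the crude Lipschitz bound. In that case the remainder $R\,\dot\mu\,(\bm{x}^\top\Delta)^2$ contributes $R\beta_T^2\nrm{\bm{x}}^2_{\bm{Q}_t^{-1}}$. Summing these squared norms, after the same $\kappa_\mu^{-1}$ conversion, via the elliptical potential lemma (splitting into rounds where $\nrm{\bm{x}_t(i)}_{\bm{V}_t^{-1}}>1/\sqrt{N}$ or not, as in \cref{lem:upper_bound_x_t}) yields the $dN\beta_T^2\log$ term. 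Adding the terms from Steps 3 and 4 completes the argument, with failure probability $\delta$ coming only from Step 1.
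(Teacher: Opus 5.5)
Your argument is correct, and its skeleton matches the paper's. Both work on the event of \cref{lem:onepass_conf_set} and use the $\alpha$-approximate OFU rule to reduce each round to $f_t(\pi_t;\tilde{\bm{\theta}}_t)-f_t(\pi_t;\bm{\theta}^\ast)$. Both then apply Cauchy--Schwarz in the $\bm{Q}_t$-norm with $\nrm{\tilde{\bm{\theta}}_t-\bm{\theta}^\ast}_{\bm{Q}_t}\leq 2\beta_t(\delta)$, pass to $\bm{V}_t$ via $\bm{Q}_t\succeq\kappa_\mu\bm{V}_t$, and sum with \eqref{eq:sum_nrm_xt_bound}.

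The real difference is the linearization step. The paper Taylor-expands $\mu$ around $\bm{x}_t(i)^\top\bm{\theta}^\ast$ and bounds the integral remainder by $RL_\mu(\bm{x}_t(i)^\top(\tilde{\bm{\theta}}_t-\bm{\theta}^\ast))^2$ via \cref{asp:self_concordance}. It then replaces the first-order coefficient $\dot\mu(\bm{x}_t(i)^\top\bm{\theta}^\ast)$ by $L_\mu$. At that point its first-order term is exactly your Lipschitz bound, so the second term of the theorem is pure overhead.

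Your Steps~2--3 therefore already prove that $\mathcal{R}_T^\alpha$ is bounded by the first term of the theorem alone. This implies the stated bound, since the second term is nonnegative. Step~4 is unnecessary. Saying ``adding the terms from Steps~3 and~4'' conflates two alternative linearizations (Lipschitz versus Taylor) rather than combining complementary pieces. This is harmless, since it only loosens the bound.

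What your route buys:
\begin{itemize}
\item It is simpler.
\item It uses self-concordance only inside the confidence set.
\item It drops the $\kappa_\mu^{-1}dN\beta_T^2(\delta)$ term. That term is of order $\kappa_\mu^{-1}dN\max\{d,N\}$ in $(d,N)$, because $\beta_T^2(\delta)\geq 4\lambda_{\mathrm{op}}D^2$ with $\lambda_{\mathrm{op}}\geq\max\{14d\eta R^2,6\eta RDL_\mu N\}$.
\end{itemize}
The paper's Taylor route would only pay off if the local slope $\dot\mu(\bm{x}_t(i)^\top\bm{\theta}^\ast)$ were kept rather than bounded by $L_\mu$.

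Two minor points.

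First, $\bm{Q}_t\succeq\kappa_\mu\bm{V}_t$ also needs $\kappa_\mu\leq 1$ to handle the $\lambda_{\mathrm{op}}\bm{I}$ part. The paper assumes this without loss of generality; you should state it.

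Second, your sketch of how the confidence set would be re-derived does not match how it actually goes. A self-normalized bound on $\sum_{s,i}(y_s(i)-\mu(\bm{x}_s(i)^\top\bm{\theta}^\ast))\bm{x}_s(i)$ in the $\bm{Q}_t^{-1}$-norm is not directly available, because the weights $\dot\mu(\bm{x}_s(i)^\top\bm{\theta}_{s+1})$ in $\bm{Q}_t$ depend on $y_s$. In any case, the one-pass iterate is not characterized by a full-history score equation. The paper instead combines the OMD proximal inequality (\cref{lem:onepass_linear_sum_bound}) with a likelihood-ratio martingale and Ville's inequality (\cref{lem:onepass_high_prob_bound}), treating each round's $N$ losses as a single unit. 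Since you only invoke the result, this does not affect the proof of the theorem.
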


\begin{proof}
    By decreasing $\kappa_\mu$ if necessary, we may assume $\kappa_\mu\leq 1$.
    Let $(\pi_t,\tilde{\bm{\theta}}^{\mathrm{op}}_t)$ be an output of the $\alpha$-approximate optimization oracle for $\max_{\pi\in\Pi}\max_{\bm{\theta}\in C_t(\delta)} f_t(\pi;\bm{\theta}).$
    We work on the high-probability event in \cref{lem:onepass_conf_set}, so that
    $\nrm{\tilde{\bm{\theta}}^{\mathrm{op}}_t-\bm{\theta}^\ast}_{\bm{Q}_t}\leq 2\beta_t(\delta)$
    for all $t\in[T]$.
    Let
    \[
        \bm{V}_t
        =
        \lambda_{\mathrm{op}} \bm{I}
        +
        \sum_{s=1}^{t-1}\sum_{i=1}^{N}
        \bm{x}_s(i)\bm{x}_s(i)^\top.
    \]
    By the lower bound encoded in $\kappa_\mu$, we have
    \[
        \bm{Q}_t
        =
        \lambda_{\mathrm{op}} \bm{I}
        +
        \sum_{s=1}^{t-1}\sum_{i=1}^{N}
        \dot{\mu}\prn*{\bm{x}_s(i)^\top\bm{\theta}_{s+1}}
        \bm{x}_s(i)\bm{x}_s(i)^\top
        \succeq
        \kappa_\mu \bm{V}_t.
    \]
    We first decompose the instantaneous regret and then bound the linear and quadratic terms from the Taylor expansion separately.
    We have
    \begin{align}\label{eq:onepass_regret_taylor}
        \mathcal{R}_T^\alpha&= \sum_{t=1}^T \prn*{\alpha f_t(\pi_t^\ast;\bm{\theta}^\ast) - f_t(\pi_t;\bm{\theta}^\ast)}\nonumber\\
        &\leq \sum_{t=1}^{T} \prn*{f_t(\pi_t;\tilde{\bm{\theta}}^{\mathrm{op}}_t) - f_t(\pi_t;\bm{\theta}^\ast)}
        \leq L_r\sum_{t=1}^{T} \sum_{i=1}^{N} \abs*{\mu(\bm{x}_t(i)^\top \tilde{\bm{\theta}}^{\mathrm{op}}_t) - \mu(\bm{x}_t(i)^\top \bm{\theta}^\ast)}\nonumber\\
        &\leq
        L_r \sum_{t=1}^{T} \sum_{i=1}^{N}
        \Biggl|
            \dot{\mu}(\bm{x}_t(i)^\top {\bm{\theta}}^\ast)\bm{x}_t(i)^\top\prn{\tilde{\bm{\theta}}^{\mathrm{op}}_t-\bm{\theta}^\ast}
            \nonumber\\
            &\qquad
            +
            \prn*{\bm{x}_t(i)^\top\prn*{\tilde{\bm{\theta}}^{\mathrm{op}}_t-\bm{\theta}^\ast}}^2
            \int_0^1 (1-v)
            \ddot{\mu}\prn[\Big]{
                \bm{x}_t(i)^\top \bm{\theta}^\ast
                + v \bm{x}_t(i)^\top \prn*{\tilde{\bm{\theta}}^{\mathrm{op}}_t-\bm{\theta}^\ast}
            } \, dv
        \Biggr| \nonumber\\
        &\leq
        L_r \sum_{t=1}^{T} \sum_{i=1}^{N}
        \Biggl(
            \dot{\mu}(\bm{x}_t(i)^\top {\bm{\theta}}^\ast)
            \abs*{\bm{x}_t(i)^\top\prn{\tilde{\bm{\theta}}^{\mathrm{op}}_t-\bm{\theta}^\ast}}
            % \nonumber\\
            % &\qquad\qquad\qquad\qquad
            +
            RL_\mu\abs*{
                \prn*{\bm{x}_t(i)^\top\prn*{\tilde{\bm{\theta}}^{\mathrm{op}}_t-\bm{\theta}^\ast}}^2
            }
        \Biggr)
        % &\leq 2L_r L_\mu \beta_T(\delta)\sum_{t=1}^{T} \sum_{i=1}^{N}\nrm{\bm{x}_t(i)}_{\bm{H}_t^{-1}} +  2 RL_\mu \beta_T^2(\delta) \sum_{t=1}^{T} \sum_{i=1}^{N} \nrm{\bm{x}_t(i)}_{\bm{H}_t^{-1}}^2 \nonumber\\
        % &\leq 2L_r L_\mu \beta_T(\delta)\sum_{t=1}^{T} \sum_{i=1}^{N}\nrm{\bm{x}_t(i)}_{\bm{V}_t^{-1}} +  2 RL_r L_\mu \beta_T^2(\delta) \sum_{t=1}^{T} \sum_{i=1}^{N} \nrm{\bm{x}_t(i)}_{\bm{V}_t^{-1}}^2
        .
    \end{align}
    Here the first inequality follows from the $\alpha$-approximate OFU rule, since $\bm{\theta}^\ast\in C_t(\delta)$ implies
    \[
        f_t(\pi_t;\tilde{\bm{\theta}}^{\mathrm{op}}_t)
        \geq
        \alpha \max_{\pi\in\Pi}\max_{\bm{\theta}\in C_t(\delta)} f_t(\pi;\bm{\theta})
        \geq
        \alpha f_t(\pi_t^\ast;\bm{\theta}^\ast),
    \]
    the second inequality follows from the Lipschitz continuity of $r$, the third inequality follows from the second-order Taylor expansion of $\mu$ around $\bm{\theta}^\ast$ with the integral remainder form, and the fourth inequality follows from \cref{asp:self_concordance}.

    We next control the first-order term in \eqref{eq:onepass_regret_taylor}. 
    By Cauchy--Schwarz and \cref{lem:onepass_conf_set},
% \begin{comment}
    \begin{align}
        \label{eq:onepass_linear_bound}
        &\sum_{t=1}^{T} \sum_{i=1}^{N} \dot{\mu}(\bm{x}_t(i)^\top {\bm{\theta}}^\ast)\abs*{\bm{x}_t(i)^\top\prn{\tilde{\bm{\theta}}^{\mathrm{op}}_t-\bm{\theta}^\ast}}\nonumber\\
        &\leq \sum_{t=1}^{T} \sum_{i=1}^{N} \dot{\mu}(\bm{x}_t(i)^\top {\bm{\theta}}^\ast) \nrm{\bm{x}_t(i)}_{\bm{Q}_t^{-1}} \nrm{\tilde{\bm{\theta}}^{\mathrm{op}}_t-\bm{\theta}^\ast}_{\bm{Q}_t}
        \leq 2 \beta_T(\delta) \sum_{t=1}^{T} \sum_{i=1}^{N}\dot{\mu}(\bm{x}_t(i)^\top {\bm{\theta}}^\ast)  \nrm{\bm{x}_t(i)}_{\bm{Q}_t^{-1}}
        % \nonumber\\
        % &\leq 2 \beta_T(\delta) \sum_{t\in\mathcal{T}_1} \sum_{i=1}^{N}\dot{\mu}(\bm{x}_t(i)^\top {\bm{\theta}}^\ast)  \nrm{\bm{x}_t(i)}_{\bm{H}_t^{-1}} + 2 \beta_T(\delta) \sum_{t\in \mathcal{T}_2} \sum_{i=1}^{N}\dot{\mu}(\bm{x}_t(i)^\top \tilde{\bm{\theta}}_t)  \nrm{\bm{x}_t(i)}_{\bm{H}_t^{-1}}
        .
    \end{align}
    % where we split the time indices according to whether the derivative term is larger at $\bm{\theta}^\ast$ or at the current estimate:
    % \[
    %     \mathcal{T}_1
    %     =
    %     \set*{
    %         t\in[T] \,|dle|\,
    %         \sum_{i=1}^{N} \dot{\mu}(\bm{x}_t(i)^\top {\bm{\theta}}^\ast)
    %         \geq
    %         \sum_{i=1}^{N} \dot{\mu}(\bm{x}_t(i)^\top {\bm{\theta}}_{t+1})
    %     },\text{ and}
    %     \quad
    %     \mathcal{T}_2 = [T] \setminus \mathcal{T}_1.
    % \]
    Using $\dot{\mu}(\bm{x}_t(i)^\top {\bm{\theta}}^\ast)\leq L_\mu$ and $\bm{Q}_t\succeq \kappa_\mu \bm{V}_t$, we have
    \begin{align}
        \label{eq:onepass_T1_bound}
        &\sum_{t=1}^{T} \sum_{i=1}^{N} \dot{\mu}(\bm{x}_t(i)^\top {\bm{\theta}}^\ast)\abs{\bm{x}_t(i)^\top\prn{\tilde{\bm{\theta}}^{\mathrm{op}}_t-\bm{\theta}^\ast}}\nonumber\\
        &\leq 2L_\mu \beta_T(\delta)\sum_{t=1}^{T} \sum_{i=1}^{N}\nrm{\bm{x}_t(i)}_{\bm{Q}_t^{-1}}\nonumber\\
        &\leq 2\kappa_\mu^{-1/2}L_\mu \beta_T(\delta)\sum_{t=1}^{T} \sum_{i=1}^{N}\nrm{\bm{x}_t(i)}_{\bm{V}_t^{-1}}\nonumber\\
        &\leq 2\kappa_\mu^{-1/2}L_\mu \beta_T(\delta)\prn*{\sqrt{2 d NT \log\prn*{1+\frac{N T}{d \lambda_{\mathrm{op}}}}} + \frac{2dN}{\sqrt{\lambda_{\mathrm{op}}}} \log\prn*{1+\frac{N T}{d \lambda_{\mathrm{op}}}}}
        ,
    \end{align}
    where the last inequality follows from \eqref{eq:sum_nrm_xt_bound}.
    By applying \citet[Lemma 5]{takemura2021near_combinatorial}, we obtain
    \begin{equation}
        \label{eq:sum_nrm_xt_bound_square}
        \sum_{t=1}^{T} \sum_{i=1}^{N} \nrm*{\bm{x}_t(i)}_{\bm{V}_t^{-1}}^2\leq 4 d N \log\prn*{1+\frac{N T}{d \lambda_{\mathrm{op}}}}.
    \end{equation}

    % The same argument applies to the rounds in $\mathcal{T}_2$ after swapping the roles of $\bm{\theta}^\ast$ and $\tilde{\bm{\theta}}_t$:
    % \begin{align}
    %     \label{eq:onepass_T2_bound}
    %     \sum_{t\in \mathcal{T}_2} \sum_{i=1}^{N}\dot{\mu}(\bm{x}_t(i)^\top \tilde{\bm{\theta}}_t) \nrm{\bm{x}_t(i)}_{\bm{H}_t^{-1}}
    %     \leq \sum_{t\in \mathcal{T}_2}\sum_{i=1}^{N}\sqrt{\dot{\mu}(\bm{x}_t(i)^\top {\bm{\theta}}^\ast)}\nrm*{\sqrt{\dot{\mu}(\bm{x}_t(i)^\top \tilde{\bm{\theta}}_t)}\bm{x}_t(i)}_{\bm{H}_t^{-1}},
    % \end{align}
    % where the inequality follows from the definition of $\mathcal{T}_2$.
    % This term is bounded in the same way as \eqref{eq:onepass_T1_bound}, so it contributes at the same order.
    % Thus, we can upper bound \eqref{eq:onepass_linear_bound} as
    % % \[
    %     \begin{align}
    %         \label{eq:onepass_first_term_linear_bound}
    %         &\sum_{t=1}^{T} \sum_{i=1}^{N} \dot{\mu}(\bm{x}_t(i)^\top {\bm{\theta}}^\ast)\abs*{\bm{x}_t(i)^\top\prn{\tilde{\bm{\theta}}_t-\bm{\theta}^\ast}}\\
    %         &\leq 4L_r \beta_T(\delta)\sqrt{\kappa_\mu^\ast NT + L_\mu D NT} \sqrt{\text{log term}} + 2RL_\mu L_r\beta_T^2(\delta)\text{log term}.
    %     \end{align}
    % % \]

    It remains to control the second-order remainder term in \eqref{eq:onepass_regret_taylor}. By the self-concordance-type bound and \cref{lem:onepass_conf_set},
    \begin{align}
        \label{eq:onepass_remainder_bound}
        RL_\mu\sum_{t=1}^{T} \sum_{i=1}^{N} \abs*{
                \prn*{\bm{x}_t(i)^\top\prn*{\tilde{\bm{\theta}}^{\mathrm{op}}_t-\bm{\theta}^\ast}}^2
            }
        &\leq 4 RL_\mu \beta_T^2(\delta) \sum_{t=1}^{T} \sum_{i=1}^{N} \nrm{\bm{x}_t(i)}_{\bm{Q}_t^{-1}}^2\nonumber\\
        &\leq 4\kappa_\mu^{-1} RL_\mu \beta_T^2(\delta) \sum_{t=1}^{T} \sum_{i=1}^{N} \nrm{\bm{x}_t(i)}_{\bm{V}_t^{-1}}^2\nonumber\\
        &\leq 16 \kappa_\mu^{-1} RL_\mu dN \beta_T^2(\delta)  \log\prn*{1+\frac{N T}{d \lambda_{\mathrm{op}}}},
    \end{align}
    where the last inequality follows from \eqref{eq:sum_nrm_xt_bound_square}.
% \end{comment}

    Combining \eqref{eq:onepass_T1_bound} and \eqref{eq:onepass_remainder_bound}, we obtain the following regret bound:
    \begin{align}
        \label{eq:onepass_final_regret_bound}
        \mathcal{R}_T^\alpha&\leq 2\kappa_\mu^{-1/2} L_r L_\mu \beta_T(\delta) \prn*{\sqrt{2 d NT \log\prn*{1+\frac{N T}{d \lambda_{\mathrm{op}}}}} + \frac{2dN}{\sqrt{\lambda_{\mathrm{op}}}} \log\prn*{1+\frac{N T}{d \lambda_{\mathrm{op}}}}} \nonumber\\*
        &\qquad + 16 \kappa_\mu^{-1} RL_\mu L_r d N \beta_T^2(\delta)\log\prn*{1+\frac{N T}{d \lambda_{\mathrm{op}}}}
    \end{align}
\end{proof}

\subsection{Useful lemmas for the proof of \cref{thm:onepass_regret_complete}}
\label{subsec:onepass_lemmas}
In this section, we present the lemmas to prove \cref{thm:onepass_regret_complete}.

First, we prove that the confidence set defined in \cref{lem:onepass_conf_set} contains the true parameter $\bm{\theta}^\ast$ with high probability.
\begin{lemma}
    \label{lem:onepass_conf_set}
    Let $\delta \in (0,1)$ and
    \[
        C_t(\delta)=\set*{\bm{\theta}\in\Theta \mid \nrm{\bm{\theta}-\bm{\theta}_t}_{\bm{Q}_t} \leq \beta_t({\delta})},
    \]
    where
    \begin{equation*}
        \beta_t({\delta}) = \sqrt{4\lambda_{\mathrm{op}} D^2 +2\eta\log\prn*{\frac{1}{\delta}}+d\prn{6\eta^2+\eta}\log\prn*{1+\frac{L_\mu Nt}{\lambda_{\mathrm{op}}}}}.
    \end{equation*}
    Set $\eta=1+RD$ and $\lambda_{\mathrm{op}}\geq\max\set{14d\eta R^2, 6\eta RDL_\mu N}$ for \cref{alg:onepass_ofu}.
    Then, with probability at least $1-\delta$, we have $\bm{\theta}^\ast\in C_t(\delta)$, equivalently $\nrm{\bm{\theta}_t-\bm{\theta}^\ast}_{\bm{Q}_t}\leq \beta_t(\delta)$, for any $t\in[T]$.
\end{lemma}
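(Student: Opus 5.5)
I will follow the one-pass OMD confidence-set analysis of \citet{zhang2025generalizedlinearbanditsoptimal}, adapted so that each round contributes $N$ losses $\ell_{t,i}$ instead of one. The argument is by induction on $t$: on a single high-probability martingale event, I will show that $\nrm{\bm{\theta}_{t}-\bm{\theta}^\ast}_{\bm{Q}_{t}}^2\leq\beta_t^2(\delta)$ for all $t$. The base case $t=1$ is immediate. Indeed, $\bm{Q}_1=\lambda_{\mathrm{op}}\bm{I}$ and $\bm{\theta}_1,\bm{\theta}^\ast\in\Theta$ give $\nrm{\bm{\theta}_1-\bm{\theta}^\ast}_{\bm{Q}_1}^2\leq 4\lambda_{\mathrm{op}}D^2$.

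\textbf{Step 1: one-step OMD inequality.} From the first-order optimality condition of \eqref{eq:re_onepass_theta_update} over the convex set $\Theta$, tested against $\bm{\theta}^\ast\in\Theta$, I obtain
\[
\inpr{\nabla\tilde{\ell}_t(\bm{\theta}_{t+1}),\bm{\theta}_{t+1}-\bm{\theta}^\ast}\leq\frac{1}{2\eta}\prn*{\nrm{\bm{\theta}_t-\bm{\theta}^\ast}_{\bm{Q}_t}^2-\nrm{\bm{\theta}_{t+1}-\bm{\theta}^\ast}_{\bm{Q}_t}^2-\nrm{\bm{\theta}_{t+1}-\bm{\theta}_t}_{\bm{Q}_t}^2}.
\]
Next I split $\sum_i\ell_{t,i}$ into its conditional mean and the noise term $-\sum_i\epsilon_t(i)\bm{x}_t(i)^\top\bm{\theta}$, where $\epsilon_t(i)=y_t(i)-\mu(\bm{x}_t(i)^\top\bm{\theta}^\ast)$. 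For the mean part I use \cref{asp:self_concordance}. With $\nrm{\bm{x}}\leq1$ and $\bm{\theta},\bm{\theta}'\in\Theta$, it gives $\dot\mu(\bm{x}^\top\bm{\theta})\geq e^{-2RD}\dot\mu(\bm{x}^\top\bm{\theta}')$, and the Taylor remainder of $m$ between $\bm{\theta}_{t+1}$ and $\bm{\theta}^\ast$ is at least $\frac{1}{2+2RD}\dot\mu(\bm{x}^\top\bm{\theta}_{t+1})(\bm{x}^\top(\bm{\theta}_{t+1}-\bm{\theta}^\ast))^2$. The choice $\eta=1+RD$ is made exactly so that this term absorbs the missing increment $\sum_i\nabla^2\ell_{t,i}(\bm{\theta}_{t+1})$. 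As a result, the left side $\nrm{\bm{\theta}_{t+1}-\bm{\theta}^\ast}_{\bm{Q}_t}^2$ can be upgraded to $\nrm{\bm{\theta}_{t+1}-\bm{\theta}^\ast}_{\bm{Q}_{t+1}}^2$. The price is a surrogate-versus-true-loss discrepancy: the gap between $\nabla\tilde{\ell}_t(\bm{\theta}_{t+1})$ and the true gradient is a third-order term. By \cref{asp:self_concordance} this term is bounded by $R\sum_i\dot\mu\,(\bm{x}_t(i)^\top(\bm{\theta}_{t+1}-\bm{\theta}_t))^2\nrm{\cdots}$.

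\textbf{Step 2: telescoping.} Summing over $s\leq t$ yields
\[
\nrm{\bm{\theta}_{t+1}-\bm{\theta}^\ast}_{\bm{Q}_{t+1}}^2\leq 4\lambda_{\mathrm{op}}D^2+2\eta\sum_{s\le t}\sum_i\epsilon_s(i)\bm{x}_s(i)^\top(\bm{\theta}_{s+1}-\bm{\theta}^\ast)+(\text{stability})-\sum_s\nrm{\bm{\theta}_{s+1}-\bm{\theta}_s}_{\bm{Q}_s}^2.
\]
The difficulty here is that $\bm{\theta}_{s+1}$ depends on $\epsilon_s$, so the noise term is not a martingale. I will handle it as Zhang et al.\ do. I write $\bm{\theta}_{s+1}-\bm{\theta}^\ast=(\bm{\theta}_s-\bm{\theta}^\ast)+(\bm{\theta}_{s+1}-\bm{\theta}_s)$. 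The first piece is predictable and forms a martingale. I control it with a self-normalized or Freedman-type bound, using sub-Gaussianity and the conditional variance $\ddot m=\dot\mu$. After absorbing half of the quadratic term, this contributes $2\eta\log(1/\delta)$ plus a $d\eta\log\det$ term. The second piece I bound by Cauchy--Schwarz: a $\tfrac12$ fraction is absorbed by the negative stability term $\nrm{\bm{\theta}_{s+1}-\bm{\theta}_s}_{\bm{Q}_s}^2$, and the rest leaves $\eta^2\sum_s\nrm{\sum_i\epsilon_s(i)\bm{x}_s(i)}_{\bm{Q}_s^{-1}}^2$.

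\textbf{Step 3: closing the bound, the main obstacle.} The remaining sums are of the form $\sum_s\sum_i\dot\mu\,\nrm{\bm{x}_s(i)}_{\bm{Q}_{s+1}^{-1}}^2$. By the elliptical potential lemma (as in \eqref{eq:sum_nrm_xt_bound_square}) they are at most $d\log(1+L_\mu Nt/\lambda_{\mathrm{op}})$, up to a constant factor from the discrepancy between $\bm{Q}_s$ and $\bm{Q}_{s+1}$. This discrepancy is the genuinely combinatorial difficulty. Each round adds $N$ rank-one terms at once, and to keep $\bm{Q}_{s+1}\preceq 2\bm{Q}_s$ I need $\lambda_{\mathrm{op}}\gtrsim N L_\mu$. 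Self-concordance inflates this further, since $\dot\mu$ evaluated at $\bm{\theta}_{s+1}$ versus at $\bm{\theta}_s$ differ by a factor $e^{R\nrm{\bm{x}}\nrm{\bm{\theta}_{s+1}-\bm{\theta}_s}}$. This is where the condition $\lambda_{\mathrm{op}}\geq 6\eta RDL_\mu N$ enters. The third-order surrogate error from Step 1 is bounded via $\nrm{\bm{\theta}_{s+1}-\bm{\theta}_s}_2\leq 2D$ and absorbed using $\lambda_{\mathrm{op}}\geq14d\eta R^2$. Collecting terms then produces the $d(6\eta^2+\eta)\log(\cdot)$ contribution. A final union bound over $t$, together with the stopping-time form of the concentration inequality, gives the statement simultaneously for all $t\in[T]$ with probability at least $1-\delta$.
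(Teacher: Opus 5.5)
Your Step 1 is essentially the paper's \cref{lem:onepass_linear_sum_bound}, but your concentration step (Steps 2--3) does not prove the lemma as stated.

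In your own accounting, $\eta=1+RD$ makes the curvature $\frac{1}{2+2RD}\nabla^2\ell_{s,i}(\bm{\theta}_{s+1})$ exactly pay for upgrading $\bm{Q}_s$ to $\bm{Q}_{s+1}$. No negative quadratic is therefore left to absorb the conditional variance of your predictable martingale $2\eta\sum_s\langle\sum_i\epsilon_s(i)\bm{x}_s(i),\bm{\theta}_s-\bm{\theta}^\ast\rangle$. A gradient-monotonicity variant only leaves a Bregman term at $\bm{\theta}_{s+1}$, not at $\bm{\theta}_s$, so your phrase ``absorbing half of the quadratic term'' has no referent.

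If you instead absorb the variance into the left-hand side through the induction hypothesis, you get a term in which $\log(1/\delta)$ multiplies $d\log T$ and carries $\sigma^2$ and $\kappa_\mu^{-1}$. That is not the additive $2\eta\log(1/\delta)$.

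Your second piece, $\eta^2\sum_s\nrm{\sum_i\epsilon_s(i)\bm{x}_s(i)}_{\bm{Q}_s^{-1}}^2$, is a sum of squared noises and causes two further problems. First, it needs its own concentration bound, which adds another $\log(1/\delta)$ term with an $\eta^2\sigma^2$-type coefficient and a second failure event. Second, its conditional mean $\sum_i\dot\mu(\bm{x}_s(i)^\top\bm{\theta}^\ast)\nrm{\bm{x}_s(i)}_{\bm{Q}_s^{-1}}^2$ is weighted at $\bm{\theta}^\ast$, while $\bm{Q}_{s+1}$ is weighted at $\bm{\theta}_{s+1}$. The elliptical-potential step therefore costs a factor $e^{2RD}$ or $L_\mu/\kappa_\mu$. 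Finally, your union bound over $t$ would give $\log(T/\delta)$. None of these factors appear in $\beta_t(\delta)$, so at best your route yields a different, weaker confidence set.

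The missing idea is the mix-loss argument. This is what \citet{zhang2025generalizedlinearbanditsoptimal} actually use; they do not split off the noise. Take $P_s=\mathcal{N}(\bm{\theta}_s,\tfrac{3\eta}{2}\bm{Q}_s^{-1})$ and $m_s(P)=-\log\mathbb{E}_{\bm{\theta}\sim P}[\exp(-\sum_i\ell_{s,i}(\bm{\theta}))]$. Then split $\sum_i(\ell_{s,i}(\bm{\theta}^\ast)-\ell_{s,i}(\bm{\theta}_{s+1}))$ into $\sum_i\ell_{s,i}(\bm{\theta}^\ast)-m_s(P_s)$ and $m_s(P_s)-\sum_i\ell_{s,i}(\bm{\theta}_{s+1})$.
\begin{itemize}
\item The first part, summed over $s$, is the logarithm of a nonnegative martingale: the mixture likelihood over the true likelihood, with conditional independence across users used to factor the joint likelihood. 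Ville's inequality bounds it by $\log(1/\delta)$ simultaneously for all $t$, with no variance term and no union bound (\cref{lem:onepass_high_prob_bound}).
\item The second part is deterministic. Lemma~6 of Zhang et al.\ bounds it by $\frac{1}{3\eta}\sum_s\nrm{\bm{\theta}_{s+1}-\bm{\theta}_s}_{\bm{Q}_s}^2+d(3\eta+\frac12)\log(1+L_\mu Nt/\lambda_{\mathrm{op}})$. This is where $\lambda_{\mathrm{op}}\geq 14d\eta R^2$ is used.
\end{itemize}
Multiplying by $2\eta$ gives exactly $2\eta\log(1/\delta)+d(6\eta^2+\eta)\log(\cdot)$. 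It also leaves a net $-\frac13\sum_s\nrm{\bm{\theta}_{s+1}-\bm{\theta}_s}_{\bm{Q}_s}^2$. That term absorbs the surrogate error $2\eta RDL_\mu N\sum_s\nrm{\bm{\theta}_{s+1}-\bm{\theta}_s}_2^2$ precisely because $\lambda_{\mathrm{op}}\geq 6\eta RDL_\mu N$. No induction is needed.

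You have also assigned the two conditions on $\lambda_{\mathrm{op}}$ the wrong roles. The condition $\lambda_{\mathrm{op}}\geq 6\eta RDL_\mu N$ does not ensure $\bm{Q}_{s+1}\preceq 2\bm{Q}_s$ when $RD$ is small. The condition $\lambda_{\mathrm{op}}\geq 14d\eta R^2$ has no $DL_\mu N$ factor, so it cannot absorb a third-order error of size $RDL_\mu N\nrm{\bm{\theta}_{s+1}-\bm{\theta}_s}_2^2$.
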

\begin{proof}
    By \cref{lem:onepass_linear_sum_bound}, we have
    \begin{align}
        \label{eq:theta_nrm_initial_bound}
        \nrm{\bm{\theta}_{t+1}-\bm{\theta}^\ast}_{\bm{Q}_{t+1}}^2
        &\leq
        2\eta
        \sum_{s=1}^t \sum_{i=1}^{N}
        \prn[\big]{
            \ell_{s,i}(\bm{\theta}^\ast) - \ell_{s,i}(\bm{\theta}_{s+1})
        }
        + 4\lambda_{\mathrm{op}} D^2
        \nonumber\\*
        &\quad
        + 2\eta RDL_\mu N \sum_{s=1}^t \nrm{\bm{\theta}_{s+1}-\bm{\theta}_s}_{2}^2
        - \sum_{s=1}^t \nrm{\bm{\theta}_{s}-\bm{\theta}_{s+1}}_{\bm{Q}_s}^2
        .
    \end{align}
    First, to upper bound the linear term, we decompose as
    \begin{equation*}
        \sum_{s=1}^t \sum_{i=1}^{N}
            \prn{
                \ell_{s,i}(\bm{\theta}^\ast) - \ell_{s,i}(\bm{\theta}_{s+1})
            }
            =\sum_{s=1}^t  \prn*{\sum_{i=1}^{N}
                \ell_{s,i}(\bm{\theta}^\ast) - m_s(P_s)
            }
            +
            \sum_{s=1}^t  \prn*{m_s(P_s) - \sum_{i=1}^{N}\ell_{s,i}(\bm{\theta}_{s+1})}
            ,
    \end{equation*}
    where $P_s=\mathcal{N}\prn{\bm{\theta}_s,\zeta \bm{Q}_s^{-1} }$ with $\zeta=3\eta/2$ is a $d$-dimensional multivariate normal distribution and the function $m_s\colon P\mapsto \mathbb{R}$ is defined as $m_s(P_s) = -\log\prn{\E_{\bm{\theta}\sim P_s}\brk{\exp(-\sum_{i=1}^N\ell_{s,i}(\bm{\theta}))}}$.

    The first term can be bounded by applying \cref{lem:onepass_high_prob_bound} to the aggregated loss $\sum_{i=1}^N \ell_{s,i}(\bm{\theta}^\ast)$.
    Thus, it holds that with probability at least $1-\delta$, 
    \[
        \sum_{s=1}^t  \prn*{
            \sum_{i=1}^{N}
                \ell_{s,i}(\bm{\theta}^\ast) - m_s(P_s)
        }\leq \log \prn*{\frac{1}{\delta}}.
    \]
    In addition, by \citet[Lemma 6]{zhang2025generalizedlinearbanditsoptimal}, we can bound the second term as
    \[
        \sum_{s=1}^t  \prn*{
            m_s(P_s) - \sum_{i=1}^{N}\ell_{s,i}(\bm{\theta}_{s+1})
        }
        \leq
        \frac{1}{3\eta}\sum_{s=1}^t \nrm{\bm{\theta}_{s+1}-\bm{\theta}_s}_{\bm{Q}_s}^2+d\prn*{3\eta+\frac{1}{2}}\log\prn*{1+\frac{L_\mu N t}{\lambda_{\mathrm{op}}}}.
    \]
    Although there is a difference between \citet[Lemma 6]{zhang2025generalizedlinearbanditsoptimal} and our setting in that the latter is combinatorial, we can derive the above upper bound by treating $\sum_{i=1}^N\ell_{s,i}(\bm{\theta}_{s+1})$ as a single unit.
    Substituting these bounds into the \eqref{eq:theta_nrm_initial_bound}, it holds that
    \begin{align*}
        \nrm{\bm{\theta}_{t+1}-\bm{\theta}^\ast}_{\bm{Q}_{t+1}}^2
        &\leq
        4\lambda_{\mathrm{op}} D^2 + 2\eta \log \prn*{\frac{1}{\delta}} + d\prn*{6\eta^2+\eta}\log\prn*{1+\frac{L_\mu N t}{\lambda_{\mathrm{op}}}}\nonumber\\
        &\qquad + 2\eta RDL_\mu N \sum_{s=1}^t \nrm{\bm{\theta}_{s+1}-\bm{\theta}_s}_{2}^2
        - \frac{1}{3\eta}\sum_{s=1}^t \nrm{\bm{\theta}_{s}-\bm{\theta}_{s+1}}_{\bm{Q}_s}^2\nonumber\\
        &\leq
        4\lambda_{\mathrm{op}} D^2 + 2\eta \log \prn*{\frac{1}{\delta}} + d\prn*{6\eta^2+\eta}\log\prn*{1+\frac{L_\mu N t}{\lambda_{\mathrm{op}}}}\nonumber\\
        &\qquad + 2\eta RDL_\mu N \sum_{s=1}^t \nrm{\bm{\theta}_{s+1}-\bm{\theta}_s}_{2}^2
        - \frac{1}{3}\sum_{s=1}^t \nrm{\bm{\theta}_{s}-\bm{\theta}_{s+1}}_{\bm{Q}_s}^2\nonumber\\
        &\leq
        4\lambda_{\mathrm{op}} D^2 + 2\eta \log \prn*{\frac{1}{\delta}} + d\prn*{6\eta^2+\eta}\log\prn*{1+\frac{L_\mu N t}{\lambda_{\mathrm{op}}}}
        ,
    \end{align*}
    where the last inequality follows from $\lambda_{\mathrm{op}}\geq 6\eta RDL_\mu N $.
\end{proof}
% In this subsection, we use the same notation as in the main text.
% In particular, for each round $t$ and user $i\in[N]$, let
% \[
%     \bm{x}_t(i)=\bm{\phi}_t(i,\pi_t(i))
% \]
% be the selected context.
% We write the per-user negative log-likelihood contribution as $\ell_{t,i}(\bm{\theta})$, so that
% \[
%     \nabla \ell_{t,i}(\bm{\theta})
%     = \prn[\big]{\mu\prn*{\bm{x}_t(i)^\top \bm{\theta}}-y_t(i)}\bm{x}_t(i),
%     \qquad
%     \nabla^2 \ell_{t,i}(\bm{\theta})
%     = \dot{\mu}\prn*{\bm{x}_t(i)^\top \bm{\theta}} \bm{x}_t(i)\bm{x}_t(i)^\top .
% \]
% For the one-pass update, define
% \[
%     \tilde{\ell}_t(\bm{\theta})
%     =
%     \sum_{i=1}^N
%     \prn*{
%         \inpr{\nabla \ell_{t,i}(\bm{\theta}_t), \bm{\theta}-\bm{\theta}_t}
%         + \frac{1}{2}\nrm{\bm{\theta}-\bm{\theta}_t}^2_{\nabla^2 \ell_{t,i}(\bm{\theta}_t)}
%     }
% \]
% and
% \[
%     \bm{\theta}_{t+1}
%     =
%     \argmin_{\bm{\theta}\in\Theta}
%     \prn*{
%         \tilde{\ell}_t(\bm{\theta})
%         + \frac{1}{2\eta}\nrm{\bm{\theta}-\bm{\theta}_t}_{\bm{H}_t}^2
%     },
%     \qquad
%     \bm{H}_t
%     =
%     \lambda \bm{I}
%     + \sum_{s=1}^{t-1}\sum_{i=1}^N \nabla^2 \ell_{s,i}(\bm{\theta}_{s+1}).
% \]

The following lemma is regarding the property of the online mirror descent update.
\begin{lemma}[{\citealt[Lemma 1]{zhang2025generalizedlinearbanditsoptimal}}]
    \label{lem:onepass_proximal}
    Let $f\colon \Theta \to \mathbb{R}$ be a convex function, let $\Theta$ be a convex set, and let $\bm{A}\in\mathbb{R}^{d\times d}$ be a positive definite matrix.
    % \[
        $\bm{\theta}_{+}
        =
        \argmin_{\bm{\theta}\in\Theta}
        \prn{
            f(\bm{\theta}) + \frac{1}{2\eta}\nrm{\bm{\theta}-\bm{\theta}_0}_{\bm{A}}^2
        }$
    % \]
    satisfies
    \[
        \nrm{\bm{\theta}_{+}-\bm{u}}_{\bm{A}}^2
        \leq
        2\eta \inpr{\nabla f(\bm{\theta}_{+}), \bm{u}-\bm{\theta}_{+}}
        + \nrm{\bm{\theta}_0-\bm{u}}_{\bm{A}}^2
        - \nrm{\bm{\theta}_0-\bm{\theta}_{+}}_{\bm{A}}^2
    \]
    for all $\bm{u}\in\Theta$.
\end{lemma}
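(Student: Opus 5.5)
The plan is to combine the first-order optimality condition of the proximal step with the standard three-point identity for the $\bm{A}$-weighted squared norm. Set $F(\bm{\theta}) = f(\bm{\theta}) + \frac{1}{2\eta}\nrm{\bm{\theta}-\bm{\theta}_0}_{\bm{A}}^2$. Since $f$ is convex and $\bm{A}$ is positive definite, $F$ is strictly convex on the convex set $\Theta$. I will assume $f$ is differentiable, as the notation $\nabla f$ suggests; otherwise the same argument goes through with a suitable subgradient.

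\textbf{Step 1 (optimality).} Because $\bm{\theta}_{+}$ minimizes $F$ over $\Theta$, the variational inequality $\inpr{\nabla F(\bm{\theta}_{+}), \bm{u}-\bm{\theta}_{+}}\geq 0$ holds for every $\bm{u}\in\Theta$. To justify it, note that $F(\bm{\theta}_{+}+s(\bm{u}-\bm{\theta}_{+}))\geq F(\bm{\theta}_{+})$ for $s\in[0,1]$ by convexity of $\Theta$, and then let $s\downarrow 0$. Substituting $\nabla F(\bm{\theta}_{+})=\nabla f(\bm{\theta}_{+})+\frac{1}{\eta}\bm{A}(\bm{\theta}_{+}-\bm{\theta}_0)$ gives
\[
2\inpr{\bm{A}(\bm{\theta}_{+}-\bm{\theta}_0), \bm{\theta}_{+}-\bm{u}} \leq 2\eta\inpr{\nabla f(\bm{\theta}_{+}), \bm{u}-\bm{\theta}_{+}}.
\]

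\textbf{Step 2 (three-point identity).} Write $\bm{a}=\bm{\theta}_{+}-\bm{\theta}_0$ and $\bm{b}=\bm{\theta}_{+}-\bm{u}$, so that $\bm{\theta}_0-\bm{u}=\bm{b}-\bm{a}$. Expanding $\nrm{\bm{b}-\bm{a}}_{\bm{A}}^2$ and using the symmetry of $\bm{A}$ gives
\[
2\inpr{\bm{A}(\bm{\theta}_{+}-\bm{\theta}_0), \bm{\theta}_{+}-\bm{u}} = \nrm{\bm{\theta}_{+}-\bm{u}}_{\bm{A}}^2+\nrm{\bm{\theta}_0-\bm{\theta}_{+}}_{\bm{A}}^2-\nrm{\bm{\theta}_0-\bm{u}}_{\bm{A}}^2 .
\]

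\textbf{Step 3 (combine).} Plugging the identity of Step 2 into the inequality of Step 1 and rearranging yields exactly the claimed bound.

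There is no real obstacle here. The only point needing care is the constrained optimality condition in Step 1, which must be the variational inequality rather than $\nabla F(\bm{\theta}_{+})=\bm{0}$, since $\bm{\theta}_{+}$ may lie on the boundary of $\Theta$. The one-sided directional-derivative argument above handles this.
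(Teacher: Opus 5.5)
Your proof is correct and follows the standard route for this implicit-OMD inequality. The paper gives no proof of its own and simply cites \citet[Lemma 1]{zhang2025generalizedlinearbanditsoptimal}; the usual argument there is the same one you give, namely the variational-inequality optimality condition for the proximal step (not $\nabla F(\bm{\theta}_{+})=\bm{0}$, since $\bm{\theta}_{+}$ may lie on the boundary of $\Theta$) combined with the three-point identity for $\nrm{\cdot}_{\bm{A}}^2$, and your algebra in Steps 1--3 checks out.
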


This lemma is used to control the distance between the updated parameter and the true parameter.
\begin{lemma}
    \label{lem:onepass_linear_sum_bound}
    Assume that $\nrm{\bm{\theta}^\ast}_2\leq D$, $\Theta\subseteq\set{\bm{\theta}\in\mathbb{R}^d:\nrm{\bm{\theta}}_2\leq D}$, $\nrm{\bm{x}_t(i)}_2\leq 1$ for all $t$ and $i$, $\dot{\mu}(z)\leq L_\mu$ for all $z$, and $\ddot{\mu}(z)\leq R$ for all $z$.
    When we use \cref{alg:onepass_ofu} with $\eta=1+DR$, for any $\lambda_{\mathrm{op}}>0$,
    \begin{align*}
        \nrm{\bm{\theta}_{t+1}-\bm{\theta}^\ast}_{\bm{Q}_{t+1}}^2
        &\leq
        2\eta
        \sum_{s=1}^t \sum_{i=1}^{N}
        \prn[\big]{
            \ell_{s,i}(\bm{\theta}^\ast) - \ell_{s,i}(\bm{\theta}_{s+1})
        }
        + 4\lambda_{\mathrm{op}} D^2
        \\*
        &\quad
        + 2\eta RDL_\mu N \sum_{s=1}^t \nrm{\bm{\theta}_{s+1}-\bm{\theta}_s}_{2}^2
        - \sum_{s=1}^t \nrm{\bm{\theta}_{s}-\bm{\theta}_{s+1}}_{\bm{Q}_s}^2.
    \end{align*}
    % In addition, if $\lambda\geq 6RDL_\mu N(1+DR)$, then
    % \[
    %     \nrm{\bm{\theta}_{t+1}-\bm{\theta}^\ast}_{\bm{H}_{t+1}}^2
    %     \leq
    %     (2+2DR)
    %     \sum_{s=1}^t \sum_{i=1}^{N}
    %     \prn[\big]{
    %         \ell_{s,i}(\bm{\theta}_{s+1}) - \ell_{s,i}(\bm{\theta}^\ast)
    %     }
    %     - \frac{2}{3}\sum_{s=1}^t \nrm{\bm{\theta}_{s+1}-\bm{\theta}_s}_{\bm{H}_s}^2
    %     + 4\lambda D^2 .
    % \]
\end{lemma}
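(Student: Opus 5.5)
We apply \cref{lem:onepass_proximal} to the one-pass update \eqref{eq:re_onepass_theta_update}, with $f=\tilde{\ell}_t$, $\bm{A}=\bm{Q}_t$, $\bm{\theta}_0=\bm{\theta}_t$ and $\bm{u}=\bm{\theta}^\ast\in\Theta$. This yields
\[
\nrm{\bm{\theta}_{t+1}-\bm{\theta}^\ast}_{\bm{Q}_t}^2\leq 2\eta\inpr{\nabla\tilde{\ell}_t(\bm{\theta}_{t+1}),\bm{\theta}^\ast-\bm{\theta}_{t+1}}+\nrm{\bm{\theta}_t-\bm{\theta}^\ast}_{\bm{Q}_t}^2-\nrm{\bm{\theta}_t-\bm{\theta}_{t+1}}_{\bm{Q}_t}^2 .
\]
We then pass from $\bm{Q}_t$ to $\bm{Q}_{t+1}=\bm{Q}_t+\sum_i\nabla^2\ell_{t,i}(\bm{\theta}_{t+1})$ by adding $\nrm{\bm{\theta}_{t+1}-\bm{\theta}^\ast}^2_{\sum_i\nabla^2\ell_{t,i}(\bm{\theta}_{t+1})}$ to both sides. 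Telescoping over $s=1,\dots,t$ leaves the initial term $\nrm{\bm{\theta}_1-\bm{\theta}^\ast}^2_{\lambda_{\mathrm{op}}\bm{I}}\leq 4\lambda_{\mathrm{op}}D^2$, since both points lie in the ball of radius $D$. It also produces the negative sum $-\sum_s\nrm{\bm{\theta}_s-\bm{\theta}_{s+1}}_{\bm{Q}_s}^2$ appearing in the statement.

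The core step is to bound, for each $s$,
\[
2\eta\inpr{\nabla\tilde{\ell}_s(\bm{\theta}_{s+1}),\bm{\theta}^\ast-\bm{\theta}_{s+1}}+\nrm{\bm{\theta}_{s+1}-\bm{\theta}^\ast}^2_{\sum_i\nabla^2\ell_{s,i}(\bm{\theta}_{s+1})}
\]
by $2\eta\sum_i(\ell_{s,i}(\bm{\theta}^\ast)-\ell_{s,i}(\bm{\theta}_{s+1}))$ plus a remainder of order $\eta RDL_\mu N\nrm{\bm{\theta}_{s+1}-\bm{\theta}_s}_2^2$. We proceed user by user. For each $i$, write $z=\bm{x}_s(i)^\top\bm{\theta}$. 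Exact Taylor expansion of $\ell_{s,i}$ around $\bm{\theta}_{s+1}$ gives
\[
\ell_{s,i}(\bm{\theta}^\ast)=\ell_{s,i}(\bm{\theta}_{s+1})+\inpr{\nabla\ell_{s,i}(\bm{\theta}_{s+1}),\bm{\theta}^\ast-\bm{\theta}_{s+1}}+\int_0^1(1-v)\dot\mu(\cdot)\,dv\,(\bm{x}_s(i)^\top(\bm{\theta}^\ast-\bm{\theta}_{s+1}))^2 .
\]
Self-concordance (or, in the form stated in the lemma, $|\ddot\mu|\le R$ together with $|\bm{x}^\top(\bm{\theta}^\ast-\bm{\theta}_{s+1})|\le 2D$) lower-bounds the integral by a multiple $1/(2+2RD)\ge 1/(2\eta)$ of $\dot\mu(\bm{x}_s(i)^\top\bm{\theta}_{s+1})$. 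This is exactly why the choice $\eta=1+RD$ is made: the quadratic term $\nrm{\cdot}^2_{\nabla^2\ell_{s,i}(\bm{\theta}_{s+1})}$ is then absorbed by $2\eta$ times the curvature part of the loss difference.

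Next, we replace $\nabla\tilde{\ell}_s(\bm{\theta}_{s+1})$ by the true gradient $\sum_i\nabla\ell_{s,i}(\bm{\theta}_{s+1})$. The surrogate gradient equals $\sum_i[\nabla\ell_{s,i}(\bm{\theta}_s)+\nabla^2\ell_{s,i}(\bm{\theta}_s)(\bm{\theta}_{s+1}-\bm{\theta}_s)]$, so its mismatch with the true gradient is a second-order Taylor error of $\mu$ along $\bm{\theta}_{s+1}-\bm{\theta}_s$. We bound this error by $\tfrac12|\ddot\mu|\,(\bm{x}^\top(\bm{\theta}_{s+1}-\bm{\theta}_s))^2\bm{x}$, pair it with $\bm{\theta}^\ast-\bm{\theta}_{s+1}$ (norm $\le 2D$), use $|\ddot\mu|\le R\dot\mu\le RL_\mu$, and sum over the $N$ users. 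This produces the term $2\eta RDL_\mu N\nrm{\bm{\theta}_{s+1}-\bm{\theta}_s}_2^2$, up to constant bookkeeping.

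The main obstacle is this constant bookkeeping in the curvature absorption. The integral remainder uses $\dot\mu$ evaluated along the segment, whereas $\bm{Q}_{s+1}$ uses $\dot\mu$ at $\bm{\theta}_{s+1}$. Matching the factor $2\eta$ exactly requires the self-concordance comparison $\dot\mu(z')\ge \dot\mu(z)e^{-R|z'-z|}$ together with the elementary inequality $\int_0^1(1-v)e^{-cv}dv\ge 1/(2+c)$, where $c=2RD$ (or $RD$ if the distance can be taken as $D$). I would first try this route, and fall back on adjusting $\eta$ by a constant factor if the inequality is off by $2$.

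Summing the per-round bounds over $s\le t$ then gives the claimed inequality.
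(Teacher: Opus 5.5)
Your proposal is correct and follows the paper's proof essentially step for step. The shared steps are: the proximal inequality of \cref{lem:onepass_proximal} with $\bm{u}=\bm{\theta}^\ast$; the integral-remainder Taylor expansion of $\ell_{s,i}$ around $\bm{\theta}_{s+1}$, with curvature lower-bounded by $\dot{\mu}(\bm{x}_s(i)^\top\bm{\theta}_{s+1})/(2+2DR)$; the Lagrange-remainder bound $RDL_\mu N\nrm{\bm{\theta}_{s+1}-\bm{\theta}_s}_2^2$ on the surrogate-gradient mismatch; and telescoping from $\nrm{\bm{\theta}_1-\bm{\theta}^\ast}_{\bm{Q}_1}^2\leq 4\lambda_{\mathrm{op}}D^2$. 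For the curvature bound the paper cites \citet[Lemma 8]{zhang2025generalizedlinearbanditsoptimal}, which is exactly your estimate $\int_0^1(1-v)e^{-cv}\,dv\geq 1/(2+c)$ with $c=2RD$. So the constants match $\eta=1+DR$ exactly and no adjustment of $\eta$ is needed.

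One caveat: the curvature step genuinely requires the multiplicative self-concordance $\abs{\ddot{\mu}}\leq R\dot{\mu}$ of \cref{asp:self_concordance}, which you (and the paper) actually use. The additive bound $\abs{\ddot{\mu}}\leq R$ in your parenthetical does not by itself give a lower bound proportional to $\dot{\mu}(\bm{x}_s(i)^\top\bm{\theta}_{s+1})$.
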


\begin{proof}
    Fix $s\in[t]$.
    By the second-order Taylor expansion of $\ell_{s,i}$ around $\bm{\theta}_{s+1}$,
    \[
        \ell_{s,i}(\bm{\theta}^\ast)
        =
        \ell_{s,i}(\bm{\theta}_{s+1})
        + \inpr{\nabla \ell_{s,i}(\bm{\theta}_{s+1}), \bm{\theta}^\ast-\bm{\theta}_{s+1}}
        + \nrm{\bm{\theta}_{s+1}-\bm{\theta}^\ast}_{\widetilde{\bm{H}}_{s,i}}^2,
    \]
    where
    \[
        \widetilde{\bm{H}}_{s,i}
        =
        \int_0^1 (1-v)
        \nabla^2 \ell_{s,i}\prn*{(1-v)\bm{\theta}_{s+1}+v\bm{\theta}^\ast}\, dv.
    \]
    Rearranging gives
    \[
        \ell_{s,i}(\bm{\theta}_{s+1})-\ell_{s,i}(\bm{\theta}^\ast)
        \leq
        \inpr{\nabla \ell_{s,i}(\bm{\theta}_{s+1}), \bm{\theta}_{s+1}-\bm{\theta}^\ast}
        - \nrm{\bm{\theta}_{s+1}-\bm{\theta}^\ast}_{\widetilde{\bm{H}}_{s,i}}^2.
    \]

    Next, let
    % \[
        $
        \widetilde{\bm{H}}_{s,i}
        =
        \widetilde{\alpha}_{s,i}\bm{x}_s(i)\bm{x}_s(i)^\top,
        $
        and
        $
        \widetilde{\alpha}_{s,i}
        =
        \int_0^1 (1-v)
        \dot{\mu}\prn*{\bm{x}_s(i)^\top\prn[\big]{(1-v)\bm{\theta}_{s+1}+v\bm{\theta}^\ast}}
        dv.$
    % \]
    By \citet[Lemma 8]{zhang2025generalizedlinearbanditsoptimal}, we have
    \[
        \widetilde{\alpha}_{s,i}
        \geq
        \frac{\dot{\mu}\prn*{\bm{x}_s(i)^\top\bm{\theta}_{s+1}}}{2+2DR}.
    \]
    Therefore,
    \[
        \tilde{\bm{H}}_{s,i}
        \succeq
        \frac{1}{2+2DR}\nabla^2 \ell_{s,i}(\bm{\theta}_{s+1}).
    \]
    Summing over $i$ yields
    \begin{align}
        \label{eq:onepass_loss_curvature}
        \sum_{i=1}^{N}\prn[\big]{\ell_{s,i}(\bm{\theta}_{s+1})-\ell_{s,i}(\bm{\theta}^\ast)}
        &\leq
        \sum_{i=1}^{N}\inpr{\nabla\ell_{s,i}(\bm{\theta}_{s+1}),\bm{\theta}_{s+1}-\bm{\theta}^\ast}
        \notag\\
        &\quad
        -
        \frac{1}{2+2DR}
        \nrm{\bm{\theta}_{s+1}-\bm{\theta}^\ast}_{\sum_{i=1}^N \nabla^2 \ell_{s,i}(\bm{\theta}_{s+1})}^2.
    \end{align}

    We now control the linear term.
    Applying \cref{lem:onepass_proximal} with $f=\tilde{\ell}_s$, $\bm{A}=\bm{Q}_s$, $\bm{\theta}_0=\bm{\theta}_s$, $\bm{\theta}_+=\bm{\theta}_{s+1}$, and $\bm{u}=\bm{\theta}^\ast$, we obtain
    \begin{equation}
        \label{eq:onepass_prox_step}
        \inpr{\nabla \tilde{\ell}_s(\bm{\theta}_{s+1}),\bm{\theta}_{s+1}-\bm{\theta}^\ast}
        \leq
        \frac{1}{2\eta}
        \prn*{
            \nrm{\bm{\theta}_s-\bm{\theta}^\ast}_{\bm{Q}_s}^2
            - \nrm{\bm{\theta}_{s+1}-\bm{\theta}^\ast}_{\bm{Q}_s}^2
            - \nrm{\bm{\theta}_s-\bm{\theta}_{s+1}}_{\bm{Q}_s}^2
        }.
    \end{equation}

    Let
    % \[
        $\Delta_s=\bm{\theta}_{s+1}-\bm{\theta}_s.$
    % \]    
    Since
    % \[
        $\nabla\tilde{\ell}_s(\bm{\theta}_{s+1})
        =
        \sum_{i=1}^N
        \prn*{
            \nabla \ell_{s,i}(\bm{\theta}_s) + \nabla^2 \ell_{s,i}(\bm{\theta}_s)\Delta_s
        },$
    % \]
    the difference between the true and surrogate gradients is
    \[
        \sum_{i=1}^N \nabla\ell_{s,i}(\bm{\theta}_{s+1})-\nabla\tilde{\ell}_s(\bm{\theta}_{s+1})
        =
        \sum_{i=1}^N
        \prn*{
            \nabla\ell_{s,i}(\bm{\theta}_{s+1})
            - \nabla\ell_{s,i}(\bm{\theta}_s)
            - \nabla^2\ell_{s,i}(\bm{\theta}_s)\Delta_s
        }.
    \]
    By Taylor's theorem, for each $i$ there exists $\xi_{s,i}$ on the line segment between $\bm{x}_s(i)^\top\bm{\theta}_s$ and $\bm{x}_s(i)^\top\bm{\theta}_{s+1}$ such that
    \[
        \nabla\ell_{s,i}(\bm{\theta}_{s+1})
        - \nabla\ell_{s,i}(\bm{\theta}_s)
        - \nabla^2\ell_{s,i}(\bm{\theta}_s)\Delta_s
        =
        \frac{\ddot{\mu}(\xi_{s,i})}{2}\prn[\big]{\bm{x}_s(i)^\top\Delta_s}^2\bm{x}_s(i).
    \]
    Hence,
    \begin{align*}
        &\inpr*{
            \sum_{i=1}^N \nabla\ell_{s,i}(\bm{\theta}_{s+1})-\nabla\tilde{\ell}_s(\bm{\theta}_{s+1}),
            \bm{\theta}_{s+1}-\bm{\theta}^\ast
        }
        \\
        &=
        \sum_{i=1}^N
        \frac{\ddot{\mu}(\xi_{s,i})}{2}
        \prn[\big]{\bm{x}_s(i)^\top\Delta_s}^2
        \bm{x}_s(i)^\top(\bm{\theta}_{s+1}-\bm{\theta}^\ast)
        \\
        &\leq
        \sum_{i=1}^N
        \frac{R\dot{\mu}(\xi_{s,i})}{2}
        \abs{\bm{x}_s(i)^\top(\bm{\theta}_{s+1}-\bm{\theta}^\ast)}
        \prn[\big]{\bm{x}_s(i)^\top\Delta_s}^2
        \\
        &\leq
        \sum_{i=1}^N
        \frac{R}{2}\cdot 2D \cdot L_\mu \nrm{\Delta_s}_2^2
        =
        RDL_\mu N \nrm{\Delta_s}_2^2,
    \end{align*}
    % where $\xi_{s,i}$ lies between $\bm{x}_s(i)^\top\bm{\theta}_s$ and $\bm{x}_s(i)^\top\bm{\theta}_{s+1}$.
    where we used $\nrm{\bm{x}_s(i)}_2\leq 1$, $\nrm{\bm{\theta}_{s+1}}_2\leq D$, $\nrm{\bm{\theta}^\ast}_2\leq D$, and $\dot{\mu}\leq L_\mu$.
    Combining this bound with \eqref{eq:onepass_prox_step}, we obtain
    \begin{align}
        \label{eq:onepass_linear_term}
        \sum_{i=1}^{N}\inpr{\nabla\ell_{s,i}(\bm{\theta}_{s+1}),\bm{\theta}_{s+1}-\bm{\theta}^\ast}
        &\leq
        \frac{1}{2\eta}
        \prn*{
            \nrm{\bm{\theta}_s-\bm{\theta}^\ast}_{\bm{Q}_s}^2
            - \nrm{\bm{\theta}_{s+1}-\bm{\theta}^\ast}_{\bm{Q}_s}^2
            - \nrm{\bm{\theta}_s-\bm{\theta}_{s+1}}_{\bm{Q}_s}^2
        }
        \notag\\
        &\quad
        +
        RDL_\mu N \nrm{\bm{\theta}_{s+1}-\bm{\theta}_s}_2^2.
    \end{align}

    Since $\eta=1+DR$, we have $1/(2\eta)=1/(2+2DR)$.
    Substituting \eqref{eq:onepass_linear_term} into \eqref{eq:onepass_loss_curvature} and using the definition of $\bm{Q}_{s}$,
    % \[
    %     \bm{H}_{s+1}
    %     =
    %     \bm{H}_s + \sum_{i=1}^N \nabla^2\ell_{s,i}(\bm{\theta}_{s+1}),
    % \]
    we obtain
    \begin{align*}
        \sum_{i=1}^{N}\prn[\big]{\ell_{s,i}(\bm{\theta}_{s+1})-\ell_{s,i}(\bm{\theta}^\ast)}
        &\leq
        \frac{1}{2+2DR}
        \prn*{
            \nrm{\bm{\theta}_s-\bm{\theta}^\ast}_{\bm{Q}_s}^2
            - \nrm{\bm{\theta}_{s+1}-\bm{\theta}^\ast}_{\bm{Q}_{s+1}}^2
            - \nrm{\bm{\theta}_s-\bm{\theta}_{s+1}}_{\bm{Q}_s}^2
        }
        \\*
        &\quad
        +
        RDL_\mu N \nrm{\bm{\theta}_{s+1}-\bm{\theta}_s}_2^2.
    \end{align*}
    Summing over $s=1,\dots,t$ gives
    \begin{align*}
        &(2+2DR)\sum_{s=1}^{t}\sum_{i=1}^{N}
        \prn[\big]{\ell_{s,i}(\bm{\theta}_{s+1})-\ell_{s,i}(\bm{\theta}^\ast)}
        \\
        &\leq
        \nrm{\bm{\theta}_1-\bm{\theta}^\ast}_{\bm{Q}_1}^2
        - \nrm{\bm{\theta}_{t+1}-\bm{\theta}^\ast}_{\bm{Q}_{t+1}}^2
        - \sum_{s=1}^{t}\nrm{\bm{\theta}_s-\bm{\theta}_{s+1}}_{\bm{Q}_s}^2
        + (2+2DR)RDL_\mu N \sum_{s=1}^{t}\nrm{\bm{\theta}_{s+1}-\bm{\theta}_s}_2^2.
    \end{align*}
    Since $\bm{Q}_1=\lambda_{\mathrm{op}} \bm{I}$ and $\Theta\subseteq \set{\bm{\theta}:\nrm{\bm{\theta}}_2\leq D}$, we have
    \[
        \nrm{\bm{\theta}_1-\bm{\theta}^\ast}_{\bm{Q}_1}^2
        =
        \lambda_{\mathrm{op}} \nrm{\bm{\theta}_1-\bm{\theta}^\ast}_2^2
        \leq 4\lambda_{\mathrm{op}} D^2.
    \]
    Rearranging proves the first claim.

    For the second claim, note that $\bm{Q}_s\succeq \lambda_{\mathrm{op}} \bm{I}$, and thus
    \[
        \nrm{\bm{\theta}_{s+1}-\bm{\theta}_s}_{\bm{Q}_s}^2
        \geq
        \lambda_{\mathrm{op}} \nrm{\bm{\theta}_{s+1}-\bm{\theta}_s}_2^2.
    \]
    If $\lambda_{\mathrm{op}}\geq 6RDL_\mu N(1+DR)$, then
    \[
        (2+2DR)RDL_\mu N \nrm{\bm{\theta}_{s+1}-\bm{\theta}_s}_2^2
        \leq
        \frac{1}{3}\nrm{\bm{\theta}_{s+1}-\bm{\theta}_s}_{\bm{Q}_s}^2.
    \]
    Substituting this into the first inequality yields
    \[
        \nrm{\bm{\theta}_{t+1}-\bm{\theta}^\ast}_{\bm{Q}_{t+1}}^2
        \leq
        (2+2DR)
        \sum_{s=1}^t \sum_{i=1}^{N}
        \prn[\big]{\ell_{s,i}(\bm{\theta}^\ast)-\ell_{s,i}(\bm{\theta}_{s+1})}
        - \frac{2}{3}\sum_{s=1}^t \nrm{\bm{\theta}_{s+1}-\bm{\theta}_s}_{\bm{Q}_s}^2
        + 4\lambda_{\mathrm{op}} D^2,
    \]
    as claimed.
\end{proof}

The following lemma extends \citet[Lemma 5]{zhang2025generalizedlinearbanditsoptimal} to the CAB setting, where multiple feedback observations can be obtained at the same time.
\begin{lemma}
    \label{lem:onepass_high_prob_bound}
    Let $\mathcal{G}_t$ be the filtration defined by $\mathcal{G}_t = \sigma\prn*{\set{\set{\bm{x}_s(i),y_s(i)}_{i=1}^{N}}_{s=1}^{t-1}}$ and $\set{P_t}_{t=1}^\infty$ be a stochastic sequence of distributions over $\R^d$.
    $P_t$ is  $\mathcal{G}_t$-measurable for each $t\geq1$.
    Define $\ell_{s,i}(\bm{\theta})=- y_s(i)\bm{x}_s(i)^\top \bm{\theta} + m\prn*{\bm{x}_s(i)^\top \bm{\theta}}$, $L_t(\bm{\theta}^\ast)=\sum_{s=1}^t \sum_{i=1}^N \ell_{s,i}(\bm{\theta}^\ast)$, and $F_t=-\sum_{s=1}^t \log\prn*{\E_{\bm{\theta}\sim P_s}\brk{\exp\prn*{-\sum_{i=1}^N\prn*{\ell_{s,i}(\bm{\theta})}}}}$ for any $t\geq 1$.
    
    Then, for any $\delta\in(0,1)$, with probability at least $1-\delta$, it holds that
    \[
        \P\brk*{\forall t\in[T], L_t(\bm{\theta}^\ast)-F_t \leq \log\prn*{\frac{1}{\delta}}}\geq 1-\delta.
    \]
\end{lemma}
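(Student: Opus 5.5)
The plan is to build a nonnegative supermartingale from the per-round mixture likelihood ratios and then apply Ville's maximal inequality. Define, for $t\geq 0$,
\[
    W_t=\exp\prn*{L_t(\bm{\theta}^\ast)-F_t}=\prod_{s=1}^t \frac{\E_{\bm{\theta}\sim P_s}\brk*{\exp\prn*{-\sum_{i=1}^N\ell_{s,i}(\bm{\theta})}}}{\exp\prn*{-\sum_{i=1}^N\ell_{s,i}(\bm{\theta}^\ast)}},
\]
with $W_0=1$. The claim $\P\brk{\forall t\in[T],\ L_t(\bm{\theta}^\ast)-F_t\leq\log(1/\delta)}\geq1-\delta$ is the same as $\P\brk{\exists t,\ W_t\geq 1/\delta}\leq\delta$. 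Ville's inequality gives exactly this once we know $(W_t)$ is a nonnegative supermartingale with respect to $(\mathcal{G}_{t+1})$ and $\E[W_0]=1$.

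The key step is to show $\E\brk{W_t/W_{t-1}\mid\mathcal{G}_t}\leq1$. Since $P_t$ is $\mathcal{G}_t$-measurable, we can use Tonelli to swap the conditional expectation and the expectation over $\bm{\theta}\sim P_t$. This reduces the task to proving, for each fixed $\bm{\theta}$,
\[
    \E\brk*{\exp\prn*{\sum_{i=1}^N\prn{\ell_{t,i}(\bm{\theta}^\ast)-\ell_{t,i}(\bm{\theta})}}\,\middle|\,\mathcal{G}_t,\set{\bm{x}_t(i)}_i}\leq 1.
\]
Here $\exp(\ell_{t,i}(\bm{\theta}^\ast)-\ell_{t,i}(\bm{\theta}))$ is the ratio $p_{\bm{\theta}}(y_t(i)\mid\bm{x}_t(i))/p_{\bm{\theta}^\ast}(y_t(i)\mid\bm{x}_t(i))$ of the exponential-family densities in \eqref{eq:probability_feedback}. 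The base-measure factor $h(y)$ cancels, and the normalizer is $e^{m(\cdot)}$ because $m$ is the log-partition function in the canonical form. Conditional independence of $\set{y_t(i)}_{i\in[N]}$ given the chosen contexts lets the expectation of the product factorize into a product over $i$. Each factor integrates to $\int p_{\bm{\theta}}(y\mid\bm{x}_t(i))\,dy\leq1$, with equality whenever $m(\bm{x}_t(i)^\top\bm{\theta})$ is finite. The contexts $\bm{x}_t(i)$ depend only on $\pi_t$, which is chosen before $y_t$ is revealed. Taking the tower expectation over the contexts then gives the supermartingale property with respect to $\mathcal{G}_t$.

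The main obstacle is measurability bookkeeping rather than any inequality. The filtration $\mathcal{G}_t$ in the statement does not include the current contexts $\bm{x}_t(\cdot)$, and the algorithm's $\pi_t$ may depend on internal randomness. I would handle this by conditioning on the enlarged $\sigma$-algebra $\mathcal{G}_t\vee\sigma(\pi_t)$. Relative to it, $P_t$ is measurable, the feedback $y_t$ still has the GLM law at $\bm{\theta}^\ast$, and the components are conditionally independent, so the per-step bound holds there and then passes down to $\mathcal{G}_t$ by the tower property.

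I would also check that the exponential-family likelihood ratio is well defined; the unit-dispersion canonical form ensures $m$ is the log-partition function. Finally, I would note that aggregating the $N$ per-user losses into a single exponent is exactly where the combinatorial extension of \citet[Lemma 5]{zhang2025generalizedlinearbanditsoptimal} departs from the single-observation case. The aggregation is valid only because of the factorization step above.
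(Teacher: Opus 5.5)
Your proposal is correct and follows essentially the same route as the paper: write $\exp\prn{L_t(\bm{\theta}^\ast)-F_t}$ as a product of mixture-to-true likelihood ratios, use conditional independence and an interchange of integrals to show that each one-step conditional expectation is (at most) one, and conclude with Ville's maximal inequality. The differences are minor. The paper asserts an exact martingale, whereas you prove the supermartingale inequality, which suffices. You also explicitly handle the fact that $\bm{x}_t(\cdot)$ is not contained in $\mathcal{G}_t$ as defined, a point the paper leaves implicit.
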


\begin{proof}
    Since it is necessary to properly handle the joint distribution, we extend the proof of \citet[Lemma 5]{zhang2025generalizedlinearbanditsoptimal} under the assumption of conditional independence.
    First, define $M_t=\exp\prn*{L_t(\bm{\theta}^\ast)-F_t}$.
    For each natural parameter $z$, let $p(y; z)$ denote the normalized density or mass function of the GLM.
    We use only the likelihood-ratio identity
    \[
        \frac{p(y; z)}{p(y; z^\prime)}
        =
        \exp\prn*{y(z-z^\prime)-m(z)+m(z^\prime)}.
    \]
    For notational convenience, write
    $
        p(y(1),\dots,y(N);\set{z_i}_{i=1}^N)
        =
        \prod_{i=1}^N p(y(i);z_i).
    $
    By the definition of $M_t$, we have
    \[
        M_t=\frac{\prod_{s=1}^t\E_{\bm{\theta}\sim P_s}\brk*{\prod_{i=1}^{N}\exp\prn{-\ell_{s,i}(\bm{\theta})}} }{\prod_{s=1}^t\prod_{i=1}^N\exp\prn{-\ell_{s,i}(\bm{\theta}^\ast)}}.
    \]
    Since $\set{y_t(i)}_{i=1}^N$ are conditionally independent given
    $\set{\bm{x}_t(i)^\top\bm{\theta}^\ast}_{i=1}^N$, we have
    \[
        p(y_t(1),\dots,y_t(N); \set{\bm{x}_t(i)^\top\bm{\theta}^\ast}_{i=1}^N)
        = \prod_{i=1}^N p(y_t(i); \bm{x}_t(i)^\top\bm{\theta}^\ast).
    \]
    By the likelihood-ratio identity, for any $\bm{\theta}$,
    \[
        \frac{
            \prod_{i=1}^{N}\exp\prn{-\ell_{t,i}(\bm{\theta})}
        }{
            \prod_{i=1}^{N}\exp\prn{-\ell_{t,i}(\bm{\theta}^\ast)}
        }
        =
        \frac{
            p(y_t(1),\dots,y_t(N);\set{\bm{x}_t(i)^\top\bm{\theta}}_{i=1}^N)
        }{
            p(y_t(1),\dots,y_t(N);\set{\bm{x}_t(i)^\top\bm{\theta}^\ast}_{i=1}^N)
        }.
    \]
    Together with the definition of $M_t$, this implies that
    \begin{equation}
        \label{eq:onepass_martingale_update}
        M_t=M_{t-1}\frac{\E_{\bm{\theta}\sim P_t}\brk*{p(y_t(1),\dots,y_t(N);\set{\bm{x}_t(i)^\top\bm{\theta}}_{i=1}^N)}}{p(y_t(1),\dots,y_t(N); \set{\bm{x}_t(i)^\top\bm{\theta}^\ast}_{i=1}^N)}.
    \end{equation}

    By using the above discussion, we can show that $\set{M_t}_{t=1}^\infty$ is a martingale with respect to $\set{\mathcal{G}_t}_{t=1}^\infty$.
    Here, $\E[\cdot]$ denotes expectation with respect to $y_t(1),\dots,y_t(N)$.
    \begin{align*}
        \E\brk*{M_t|\mathcal{G}_t}
        &=M_{t-1}\E\brk*{\frac{\E_{\bm{\theta}\sim P_t}\brk*{p(y_t(1),\dots,y_t(N);\set{\bm{x}_t(i)^\top\bm{\theta}}_{i=1}^N)}}{{p(y_t(1),\dots,y_t(N); \set{\bm{x}_t(i)^\top\bm{\theta}^\ast}_{i=1}^N)}} \middle| \mathcal{G}_t}\\
        &=M_{t-1}\int
        \frac{
            \E_{\bm{\theta}\sim P_t}\brk*{
                p(y(1),\dots,y(N); \set{\bm{x}_t(i)^\top\bm{\theta}}_{i=1}^N)
            }
        }{
            p(y(1),\dots,y(N); \set{\bm{x}_t(i)^\top\bm{\theta}^\ast}_{i=1}^N)
        }
        \nonumber\\
        &\qquad\cdot p(y(1),\dots,y(N); \set{\bm{x}_t(i)^\top\bm{\theta}^\ast}_{i=1}^N)\,dy(1)\dots dy(N)\\
        &=M_{t-1}\int \E_{\bm{\theta}\sim P_t}\brk*{p(y(1),\dots,y(N); \set{\bm{x}_t(i)^\top\bm{\theta}}_{i=1}^N)} dy(1)\dots dy(N)\\
        &=M_{t-1}\E_{\bm{\theta}\sim P_t}\brk*{\int p(y(1),\dots,y(N); \set{\bm{x}_t(i)^\top\bm{\theta}}_{i=1}^N) dy(1)\dots dy(N)}\\
        &=M_{t-1},
    \end{align*}
    where we used the fact that $M_{t-1}$ is $\mathcal{G}_t$-measurable and \eqref{eq:onepass_martingale_update} in the first equality.
    Therefore, since $\set{M_t}_{t=1}^\infty$ is a martingale, $M_0=1$, and $M_t\geq 0$, we can use \citet[Theorem 3.9]{lattimore2020bandit} to obtain 
    \[
        \P\brk*{\sup_{t}M_t\geq \epsilon}\leq \frac{1}{\epsilon},
    \] 
    for any $\epsilon>0$.
    By setting $\epsilon=1/\delta$, we can prove the argument.
\end{proof}

\section{Details of \cref{sec:experiments}}
\label{app:experiment}
This section describes the detailed experiment settings and reports additional results.

\subsection{Baseline algorithms}
\label{app:baseline_alg_detail}
We first describe detailed algorithms of ``Max match'' and ``FairX''. First, we show the detailed algorithm of Max match in \cref{alg:max_match}, which is a UCB-based algorithm maximizing the sum of matches. We can see the important difference between Max match and CAB-UCB in line 6. Max match chooses arms with the highest sum of expected matches instead of satisfaction. 

\cref{alg:Fairx} shows FairX's algorithm in detail. FairX is a UCB-based fairness algorithm proposed by \cite{pmlr-v139-wang21b}. 
This method ensures that each arm receives a share of exposure that is proportional to its expected match, aiming to mitigate the over-selection of specific arms. 
\cite{pmlr-v139-wang21b} proposes UCB-based and TS-based algorithms in the stochastic linear bandit setting. For fair comparisons, we use the UCB-based algorithm and extend it to CAB. 
To approximate the argmax over $\text{CR}_t$, we draw $50$ candidate parameters from the confidence region as follows: if $\bm{V}_t=\bm{L}_t\bm{L}_t^\top$, we first sample $\bm{x}$ uniformly from the unit Euclidean ball and then set $\bm{\theta}=\bar{\bm{\theta}}_t+\bm{L}_t^{-1}\prn{\sqrt{\gamma}\bm{x}}$, so that $\bm{\theta}\in\text{CR}_t$. 
For each sampled candidate, we evaluate $\sum_{i \in [N]}\sum_{a \in [K]} \frac{\mu({\bm{\phi}}_t(i,a)^\top{\bm{\theta}})}{\sum_{a^\prime \in [K]} \mu({\bm{\phi}}_t(i,a^\prime)^\top{\bm{\theta}})}\cdot \mu({\bm{\phi}}_t(i,a)^\top{\bm{\theta}})$ and keep the best one.

We then describe the implementation of ``CAB-TS (${\epsilon}$)'' in \cref{alg:ts}. 
The optimization problem in \cref{alg:ts} does not satisfy monotonicity since $h_t(\pi; \tilde{{\bm{\epsilon}}}_t)$ might be a negative value. 
For convenience, we set $w_a(S \cup j) - w_a(S)$ to 0 to retain monotonicity if $w_a(S \cup j) - w_a(S)<0$.
This implementation is a practical proxy for the perturbed allocation step and is not an exact implementation of the oracle assumed in the CAB-TS regret analysis.

Finally, we describe the hyperparameters of these algorithms. In the main text, we use $\lambda_0=d$, $c_1 = \sqrt{d}$ for CAB-UCB and Max match, $a = \sqrt{dN}$ for CAB-TS, and $\gamma = 0.1$ for FairX.

\begin{algorithm}[t]
\caption{Max match}
\label{alg:max_match}
\begin{algorithmic}[1]
    \Require {The total rounds $T$, the number of users $N$, and tuning parameter $\lambda_0$ and $\alpha$.}
    % \For {$t=1,\dots,\tau$}
    %     \State {For each $i$, independently assign an arm $a$ chosen uniformly at random from $[K]$.}
    % \EndFor
    \State {$\mathcal{D}_0=\varnothing$.}
    \State {${\bm{V}_1}\gets\lambda_0 {\bm{I}} $.}
    \For {$t=1,\dots,T$} 
        \State {$\bar{{\bm{\theta}}}_t\gets\argmin_{{\bm{\theta}}\in\R^d}\tilde{\mathcal{L}}(\mathcal{D}_t;{\bm{\theta}},\kappa_\mu\lambda_0)$.}
        \State {Choose $\pi_t=\argmax_{\pi}\prn[\bigg]{\sum_{a \in [K]}\sum_{i\in\pi^{-1}(a)}\mu({\bm{\phi}}_t(i,a)^\top{\bar{\bm{\theta}}_t})+g_t(\pi)}$, where $g_t$ is defined in \eqref{eq:definition_g_t}.}
        \State {Observe $y_t(i)$ for any $i\in\brk{N}$.}
        \State {${\bm{x}_t}(i)\gets {\bm{\phi}}_t(i,\pi_t(i))$ for any $i\in[N]$ and $\mathcal{D}_{t+1}\gets \mathcal{D}_{t}\cup\set{\prn{{\bm{x}_t}(i),y_t(i)}}_{i\in[N]}$.}
        \State {$V_{t+1} \gets \lambda_0 {\bm{I}} + \sum_{s =1}^{t}\sum_{i=1}^N {\bm{x}_s} (i){\bm{x}_s} (i)^\top$.}
    \EndFor
\end{algorithmic}
\end{algorithm}

\begin{algorithm}[t]
\caption{FairX}
\label{alg:Fairx}
\begin{algorithmic}[1]
    \Require {The total rounds $T$, the number of users $N$, and tuning parameter $\lambda_0$ and $\gamma$.}
    \State {$\mathcal{D}_0=\varnothing$.}
    \State {$\bm{V}_1\gets\lambda_0 {\bm{I}} $.}
    \For {$t=1,\dots,T$} 
        \State {$\bar{{\bm{\theta}}}_t\gets\argmin_{{\bm{\theta}}\in\R^d}\tilde{\mathcal{L}}(\mathcal{D}_t;{\bm{\theta}},\kappa_\mu\lambda_0)$.}
        \State {Set $\text{CR}_t = ({\bm{\theta}} : \|{\bm{\theta}} - \bar{{\bm{\theta}}}_t\|_{{\bm{V}_t}} \leq \sqrt{\gamma})$.}
        \State {Choose ${\bm{\theta}}_t = \argmax_{{\bm{\theta}} \in \text{CR}_t}\prn[\bigg]{\sum_{i \in [N]}\sum_{a \in [K]} \frac{\mu({\bm{\phi}}_t(i,a)^\top{\bm{\theta}})}{\sum_{a' \in [K]} \mu({\bm{\phi}}_t(i,a')^\top{\bm{\theta}})}\cdot \mu({\bm{\phi}}_t(i,a)^\top{\bm{\theta}})}$.}
        \State {Construct policy $P_t(i,a) = \frac{\mu({\bm{\phi}}_t(i,a)^\top{\bm{\theta}_t})}{\sum_{a \in [K]} \mu({\bm{\phi}}_t(i,a)^\top{\bm{\theta}_t})}$.}
        \State {Sample $\pi_t \sim P_t$.}
        \State {Observe $y_t(i)$ for any $i\in\brk{N}$.}
        \State {${\bm{x}_t}(i)\gets {\bm{\phi}}_t(i,\pi_t(i))$ for any $i\in[N]$ and $\mathcal{D}_{t+1}\gets \mathcal{D}_{t}\cup\set{\prn{{\bm{x}_t}(i),y_t(i)}}_{i\in[N]}$.}
        \State {$V_{t+1} \gets \lambda_0 {\bm{I}} + \sum_{s =1}^{t}\sum_{i=1}^N {\bm{x}_s} (i){\bm{x}_s} (i)^\top$.}
    \EndFor
\end{algorithmic}
\end{algorithm}

\subsection{Details of the setup}
\label{app:details_setup_experiments}
In this subsection, we provide the detailed setup of the synthetic experiments. We first define the 5-dimensional feature vector ${\bm{\phi}}(i,a)$ as follows.
\begin{align}
    {\bm{\phi}}(i,a) = \lambda \cdot {\bm{\phi}}_{pop}(i,a) + (1 - \lambda)\cdot {\bm{\phi}}_{base}(i,a),
\end{align}
where ${\bm{\phi}}_{pop}$ and ${\bm{\phi}}_{base}$ are sampled from the standard normal distribution. 
Moreover, ${\bm{\phi}}_{pop}$ is componentwise strictly increasing with respect to the arm index, meaning that for every user $i$, component $m \in [5]$, and all $a \in [K-1]$, $[{\bm{\phi}}_{pop}(i,a)]_m < [{\bm{\phi}}_{pop}(i,a+1)]_m$. 
The parameter $\lambda$ controls the strength of arm popularity. We use $\mu(x) = 1 / (1 + \exp{(-x)})$ and $r(x) = \min\{x,\beta\}$ as the feedback mean and satisfaction functions, respectively, so matches beyond $\beta$ have no additional effect on satisfaction. 
The unknown true parameter ${\bm{\theta}}^*$ is sampled from the standard uniform distribution.

We compare CAB-UCB, CAB-TS ($\epsilon$), the heuristic variant CAB-TS ($\theta$), and the one-pass OMD against Random, Max match, and FairX. We fix $N=50$ and $K=10$ in all settings. The default comparisons in \cref{fig:satisfaction_timestep,fig:matches_timestep} use $T=10000$, $\lambda=0.5$, and $\beta=5.0$ over $10$ runs. The $\lambda$- and $\beta$-sweeps in \cref{fig:lambda_sweep,fig:beta_sweep} vary the corresponding parameter while fixing the others. The histogram analyses in \cref{fig:selection_hist,fig:expected_matches_hist} use $T=5000$, $\lambda=1.0$, and $\beta=5.0$ with $5$ runs. For the appendix-only results in \cref{fig:n_action,fig:gamma,fig:runtime_onepass}, we use the same data-generation procedure and set $T=5000$, $\lambda=0.5$, and $\beta=5.0$ unless otherwise specified.
Throughout the experiments, the allocation routines are practical proxies for the offline oracle steps used in the theory.
% Accordingly, CAB-TS ($\epsilon$) should not be interpreted as an exact implementation of the oracle assumed in the CAB-TS regret analysis.
Accordingly, CAB-TS ($\theta$) should be interpreted only as an empirical heuristic variant, not as an algorithm covered by the theoretical guarantees, and the one-pass variant should likewise be interpreted as an empirical proxy for the theory-side oracle-based procedure.

\paragraph{Implementation of the one-pass variant}
For the one-pass OMD implementation, we set $\lambda_{\mathrm{op}}=5.0$, $\eta=1.0$, and $\delta=0.05$ in all experiments.
The projection set is $\Theta=\set{\bm{\theta}\in\mathbb{R}^d:\nrm{\bm{\theta}}_2\leq\sqrt{d}}$, which contains the true parameter generated from $[0,1]^d$.
We initialize $\bm{\theta}_1=\bm{0}$ and $\bm{Q}_1=\lambda_{\mathrm{op}}\bm{I}$, and project each iterate onto the parameter set $\Theta$. At round $t$, the implementation computes a confidence radius of the same form as in \cref{app:onepass} using the current number of past observations, with $L_\mu=1/4$ for the logistic link, and then forms the optimistic match estimate
\[
    \tilde{\mu}_t(i,a)
    =
    \mu\prn*{
        \bm{\phi}_t(i,a)^\top \bm{\theta}_t
        +
        \hat{\beta}_t \nrm{\bm{\phi}_t(i,a)}_{\bm{Q}_t^{-1}}
    }.
\]
Instead of solving the nested maximization in \eqref{eq:onepass_ofu_rule} exactly, we use the same sequential randomized welfare routine as in the other experimental methods. This replacement is heuristic and lies outside the scope of the theory in \cref{app:onepass}. If $s_a$ denotes the current accumulated optimistic matches for arm $a$, then arm $a$ receives weight $\prn*{r\prn{s_a+\tilde{\mu}_t(i,a)}-r(s_a)}^{K-1}$; these weights are normalized to probabilities, and one arm is sampled. After observing the Bernoulli matches $y_t(i)$, the update step computes
\[
    \bm{\Delta}_t
    =
    \prn*{
        \sum_{i=1}^N \nabla_{\bm{\theta}}^2\ell_{t,i}(\bm{\theta}_t)
        +
        \bm{Q}_t/\eta
    }^{-1}
    \prn*{
        \sum_{i=1}^N \nabla_{\bm{\theta}}\ell_{t,i}(\bm{\theta}_t)
    },
    \qquad
    \bm{\theta}_{t+1}
    =
    \Pi_{\Theta}\prn*{\bm{\theta}_t-\bm{\Delta}_t},
\]
that is, it first takes the unconstrained minimizer of the quadratic surrogate and then projects it onto $\Theta$. Finally, it updates
\[
    \bm{Q}_{t+1}
    =
    \bm{Q}_t
    +
    \sum_{i=1}^{N}
    \dot{\mu}\prn*{\bm{\phi}_t(i,\pi_t(i))^\top \bm{\theta}_{t+1}}
    \bm{\phi}_t(i,\pi_t(i))\bm{\phi}_t(i,\pi_t(i))^\top.
\]

We evaluate all methods in terms of cumulative arm satisfaction (our objective) and cumulative matches. These metrics are calculated as follows.
\begin{align*}
    &\text{Cumulative arm satisfaction} = \sum_{t=1}^T f_t(\pi_t;{\bm{\theta}}^\ast) \notag \\
    &\text{Cumulative matches} = \sum_{t=1}^T \sum_{a \in [K]}\sum_{i \in \pi_t^{-1}(a)}y_t(i), \notag
\end{align*}
For \cref{fig:lambda_sweep,fig:beta_sweep}, we normalize cumulative satisfaction by the cumulative satisfaction of a reference allocation computed using the true parameter and the same allocation oracle. Solid lines denote means over runs, and shaded regions indicate 95\% confidence intervals where applicable.
All experiments were performed on a MacBook Air (13-inch, Apple M3, 16 GB RAM).

\begin{figure*}[t]
    \captionsetup[subfigure]{skip=1.5pt}
    \captionsetup{skip=2.2pt}
    \centering
    \includegraphics[width=1\linewidth]{arxiv/fig/arxiv/legend.png} \\
    \vspace{-1pt}
    \begin{subfigure}[t]{0.32\linewidth}
        \centering
        \includegraphics[width=\linewidth,trim=0 8pt 0 0,clip]{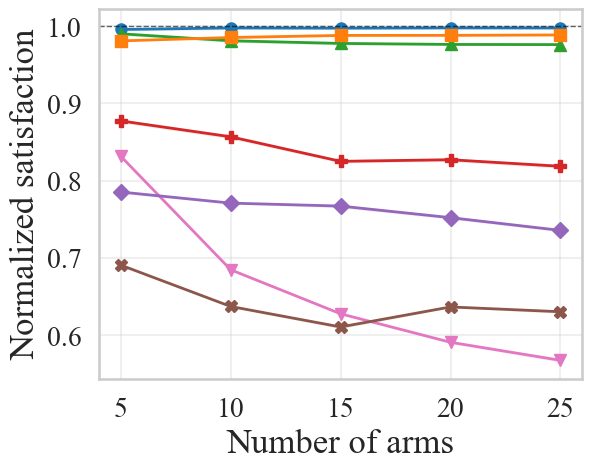}
        \caption{Normalized cumulative satisfaction under varying $K$.}
        \label{fig:n_action}
    \end{subfigure}
    \hfill
    \begin{subfigure}[t]{0.32\linewidth}
        \centering
        \includegraphics[width=\linewidth,trim=0 8pt 0 0,clip]{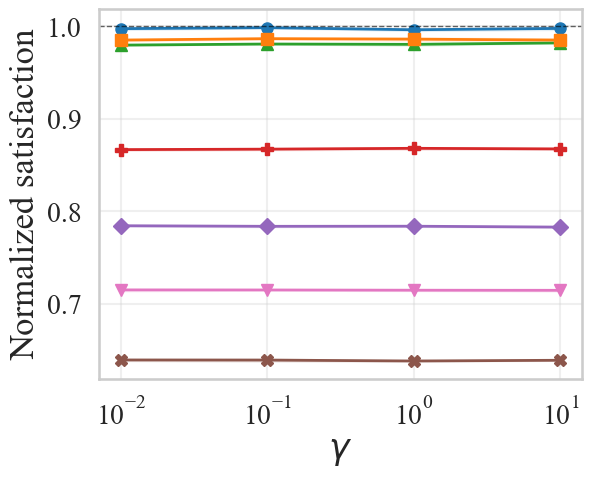}
        \caption{Normalized cumulative satisfaction under varying $\gamma$.}
        \label{fig:gamma}
    \end{subfigure}
    \hfill
    \begin{subfigure}[t]{0.32\linewidth}
        \centering
        \includegraphics[width=\linewidth,trim=0 8pt 0 0,clip]{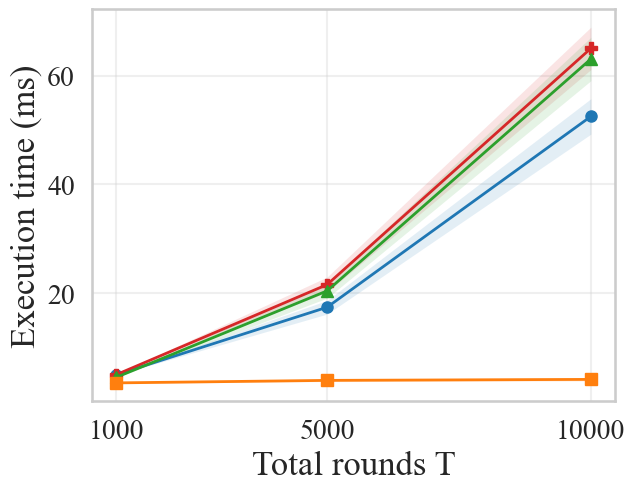}
        \caption{Average execution time per round under varying $T$.}
        \label{fig:runtime_onepass}
    \end{subfigure}
    \caption{Additional synthetic results. Panels (a) and (b) report normalized cumulative satisfaction, where each value is normalized by the cumulative satisfaction of a reference allocation computed with the true parameter and the same allocation oracle. Panel (a) varies the number of arms, panel (b) varies FairX's confidence parameter $\gamma$, and panel (c) compares the average execution time per round as the horizon length changes. Unless otherwise specified, we use $N=50$, $T=5000$, $\lambda=0.5$, and $\beta=5.0$, while fixing the remaining parameters to their default values.}
    \label{fig:appendix_additional_results}
\end{figure*}

\subsection{Additional results}

% In Figure \ref{fig:n_action} to \ref{fig:gamma}, we report additional results on synthetic experiments to demonstrate that CAB-UCB and CAB-TS perform better than the baselines in various situations. 
% We vary the number of arms and $\gamma$ in FairX. We compare CAB-UCB and CAB-TS to the baselines in terms of cumulative arm satisfaction (our objective) and matches (typical objective) normalized by those of the optimal algorithms, which use the true ${\bm{\theta}}$.
% Unless otherwise specified, we set $N=50$, $K=10$, $T=5000$, $\lambda=0.5$, and $\beta=5.0$. 
% % Note that the shaded regions in the plots represent the 95\% confidence intervals.
% \\\\
% \textbf{How does CAB-UCB perform with varying the number of arms?} \quad
% In Figure~\ref{fig:n_action}, we compare CAB-UCB with the baselines by varying the number of arms. The results show that CAB-UCB improves cumulative satisfaction across various numbers of arms. We also observe that CAB-UCB is superior to the other CAB variants, which is consistent with the result of the main text. 
% \\\\
% \textbf{How does CAB-UCB perform with varying $\gamma$ in FairX?} \quad
% Figure~\ref{fig:gamma} compares the performance of the methods with varying $\gamma$ in FairX. Figure~\ref{fig:gamma} shows that CAB-UCB is superior to all baselines with respect to satisfaction. This result suggests that FairX fails to maximize arm satisfaction effectively, regardless of the value of $\gamma$, since FairX does not directly maximize arm satisfaction. 

We report three additional synthetic experiments in \cref{fig:appendix_additional_results}, covering robustness to the number of arms, sensitivity to FairX's confidence parameter $\gamma$, and computational cost as the horizon length increases.
In \cref{fig:n_action,fig:gamma}, each value is the cumulative satisfaction normalized by a reference allocation computed with the true parameter and the same allocation oracle. Unless otherwise specified, we fix $N=50$, $K=10$, $T=5000$, $\lambda=0.5$, and $\beta=5.0$.

\paragraph{Varying the number of arms.}
\Cref{fig:n_action} shows that CAB-UCB remains essentially indistinguishable from the reference value across all tested values of $K$, while CAB-TS ($\epsilon$), the heuristic variant CAB-TS ($\theta$), and the one-pass variant also stay close to the reference value. 
In contrast, Random, Max match, and FairX remain substantially below the CAB methods, with the gap becoming particularly large for Random and Max match as $K$ increases. These results indicate that the proposed CAB methods preserve arm-side satisfaction even when the action space becomes larger.

\paragraph{Varying FairX's parameter $\gamma$.}
\Cref{fig:gamma} shows that changing $\gamma$ over several orders of magnitude has little effect on any method's normalized satisfaction. CAB-UCB consistently attains the highest value, and the other CAB variants remain close behind, whereas FairX stays well below the proposed methods for all tested values of $\gamma$. 
This suggests that FairX\char39s limitation in this problem is not primarily a matter of tuning $\gamma$. 
Rather, it stems from optimizing a fairness criterion based on expected matches instead of the arm-satisfaction objective.

\paragraph{Runtime comparison.}
\Cref{fig:runtime_onepass} compares the average execution time per round while varying the horizon length $T$. The one-pass variant is the fastest method and remains nearly constant as $T$ grows, whereas CAB-UCB, CAB-TS ($\epsilon$), and the heuristic variant CAB-TS ($\theta$) become noticeably slower for larger $T$. This pattern is consistent with the design of the one-pass variant, which avoids solving a regularized MLE from the full history at every round, and shows the expected computational trade-off between accuracy and efficiency.

\end{document}